\documentclass{article}

\usepackage{microtype}
\usepackage{graphicx}
\usepackage{subcaption}
\usepackage{booktabs} % for professional tables
\usepackage{hyperref}

\usepackage{multirow} % For spanning rows
\usepackage{amsmath,amsfonts,bm}

\def\eqref#1{equation~\ref{#1}}
\def\1{\bm{1}}

\def\rmA{{\mathbf{A}}}
\def\rmB{{\mathbf{B}}}
\def\rmC{{\mathbf{C}}}

\def\rmE{{\mathbf{E}}}

\def\rmS{{\mathbf{S}}}

\def\rmU{{\mathbf{U}}}
\def\rmV{{\mathbf{V}}}
\def\rmW{{\mathbf{W}}}
\def\rmX{{\mathbf{X}}}
\def\rmY{{\mathbf{Y}}}
\def\rmZ{{\mathbf{Z}}}

\def\vh{{\bm{h}}}

\def\vx{{\bm{x}}}

\def\vz{{\bm{z}}}

\DeclareMathAlphabet{\mathsfit}{\encodingdefault}{\sfdefault}{m}{sl}
\SetMathAlphabet{\mathsfit}{bold}{\encodingdefault}{\sfdefault}{bx}{n}

\newcommand{\E}{\mathbb{E}}

\newcommand{\R}{\mathbb{R}}

\usepackage[accepted]{icml2026}

\usepackage{amsmath}
\usepackage{amssymb}
\usepackage{mathtools}
\usepackage{amsthm}

\usepackage{stfloats} % enable wide figures at the bottom

\usepackage[capitalize,noabbrev]{cleveref}

\theoremstyle{plain}
\newtheorem{theorem}{Theorem}[subsection]

\theoremstyle{definition}
\newtheorem{assumption}[theorem]{Assumption}
\newtheorem{corollary}[theorem]{Corollary} % Uses the 'theorem' counter
\theoremstyle{remark}

\usepackage[textsize=tiny]{todonotes}

\icmltitlerunning{FiGuRO: Intrinsic Dimension Estimation for Multi-Modal Data}

\begin{document}

\twocolumn[
  \icmltitle{FiGuRO: Intrinsic Dimension Estimation for \mbox{Multi-Modal Data}}

  % It is OKAY to include author information, even for blind submissions: the
  % style file will automatically remove it for you unless you've provided
  % the [accepted] option to the icml2026 package.

  % List of affiliations: The first argument should be a (short) identifier you
  % will use later to specify author affiliations Academic affiliations
  % should list Department, University, City, Region, Country Industry
  % affiliations should list Company, City, Region, Country

  % You can specify symbols, otherwise they are numbered in order. Ideally, you
  % should not use this facility. Affiliations will be numbered in order of
  % appearance and this is the preferred way.
  \icmlsetsymbol{equal}{*}

  \begin{icmlauthorlist}
    \icmlauthor{Viktoria Schuster}{1,2,3}
    \icmlauthor{Sana Tonekaboni}{1,2,4}
    \icmlauthor{Caroline Uhler}{1,2}
  \end{icmlauthorlist}

  \icmlaffiliation{1}{Laboratory for Information and Decision Systems, Massachusetts Institute of Technology, Cambridge, MA, USA}
  \icmlaffiliation{2}{Eric and Wendy Schmidt Center, Broad Institute of MIT and Harvard, Cambridge, MA, USA}
  \icmlaffiliation{3}{Technical University of Denmark, Lyngby, Denmark}
  \icmlaffiliation{4}{Vector Institute, Toronto, Canada}
  
  \icmlcorrespondingauthor{Viktoria Schuster}{vschu211@mit.edu}
  \icmlcorrespondingauthor{Caroline Uhler}{cuhler@mit.edu}

  % You may provide any keywords that you find helpful for describing your
  % paper; these are used to populate the "keywords" metadata in the PDF but
  % will not be shown in the document
  \icmlkeywords{Machine Learning, representation learning, multi-modal learning, interpretability, intrinsic dimension, ICML}

  \vskip 0.3in
]

% this must go after the closing bracket ] following \twocolumn[ ...

% This command actually creates the footnote in the first column listing the
% affiliations and the copyright notice. The command takes one argument, which
% is text to display at the start of the footnote. The \icmlEqualContribution
% command is standard text for equal contribution. Remove it (just {}) if you
% do not need this facility.

% Use ONE of the following lines. DO NOT remove the command.
% If you have no special notice, KEEP empty braces:
\printAffiliationsAndNotice{}  % no special notice (required even if empty)
% Or, if applicable, use the standard equal contribution text:
% \printAffiliationsAndNotice{\icmlEqualContribution}

\begin{abstract}
  Determining the complexity, or \emph{Intrinsic Dimension (ID)}, of data is fundamental to efficient and interpretable representation learning. This is particularly challenging in multi-modal settings when trying to learn disentangled representations for shared and private information. Existing techniques leave a critical gap: they are often static, uni-modal, or in the case of contrastive methods, adapt only to the shared ID implicitly. We introduce Fidelity-Guided Rank Optimization (FiGuRO), a framework for approximating the ID of uni- and multi-modal data under constraints of model capacity and hyperparameters. FiGuRO learns the dimensions of low-rank projections using truncated singular value decomposition and an algorithm that determines \emph{when} to reduce or increase dimension and in \emph{which} latent space. Disentanglement of shared and private information arises as an emergent property of this optimization, eliminating the need for complex auxiliary loss functions. We demonstrate that FiGuRO outperforms existing ID estimation techniques and is more robust to hyperparameter changes. Across simulations and real-world data, FiGuRO captures distinct ID scales and varying subspace ratios, and decomposes shared and private information successfully. Furthermore, we show that FiGuRO can be applied to modern uni-modal pretrained models, enabling efficient, post-hoc disentanglement of multi-modal representations.
\end{abstract}

\section{Introduction}

Representation learning aims to find low-dimensional representations able to describe complex, high-dimensional data with minimal information loss \citep{vincentExtractingComposingRobust2008}. It is deeply rooted in the Manifold Hypothesis, which posits that real-world data, despite being embedded in a high-dimensional ambient space, is concentrated near a low-dimensional manifold \citep{feffermanTestingManifoldHypothesis2016}. Approximating this manifold and thus optimal compression requires good estimates of the data complexity, which is a fundamental challenge in representation learning. A critical component of data complexity is its \emph{Intrinsic Dimension (ID)}, defined as the minimum number of variables needed to describe the data without significant information loss. The ID quantifies the true degrees of freedom and complexity of the underlying manifold, and the performance of deep neural networks has been shown to depend on the intrinsic rather than ambient dimension \citep{nakadaAdaptiveApproximationGeneralization2020}. %. In this work, we define ID as the minimum number of variables required to parametrize the data manifold. 
The challenge of ID estimation is amplified in the context of multi-modal data. In this setting, the problem transforms from estimating a single ID to a more complex one of disentangling the latent space into shared and modality-specific or ``private'' subspaces, each with their own unknown ID. Quantifying these distinct IDs is especially important in fields like biology and medicine, where we i) need interpretable models and ii) want to know whether expensive or difficult-to-obtain modalities are relevant \citep{Zhang2026,IEEE2022,tonekaboni2025representation,gliozzoIntrinsicdimensionAnalysisGuiding2025}.

Despite its importance, determining the ID of multi-modal data and its subspaces has remained a fundamental challenge. Traditional ID estimation methods struggle with the curse of dimensionality, often underestimating the true ID and failing to scale to complex, high-dimensional data \citep{campadelliIntrinsicDimensionEstimation2015a,binnieSurveyDimensionEstimation2025}. While more advanced neural network approaches exist, they are often static or designed only for uni-modal data \citep{bahadurDimensionEstimationUsing2020,bonhemeFONDUEAlgorithmFind2022,sahaARDVAEStatisticalFormulation2025,potapovNeuralNetworksEstimating2002}. In multi-modal learning, most methods either treat latent dimensions as fixed hyperparameters requiring extensive tuning \citep{bousmalisDomainSeparationNetworks2016,gonzalez-garciaImagetoimageTranslationCrossdomain2018,leePrivateSharedDisentangledMultimodal2021} or, in the case of state-of-the-art contrastive models, adapt only to a shared ID implicitly as an emergent property \citep{guiMultimodalContrastiveLearning2025}. To the best of our knowledge, while initial theoretical identifiability results have recently been obtained~\cite{Sturma2023}, no existing neural network-based approach provides explicit estimates of the distinct IDs of both shared and private subspaces in multi-modal data. 

In this work, we introduce a technique for uni- and multi-modal ID estimation called \textbf{Fidelity-Guided Rank Optimization (FiGuRO)}. Our approach estimates the effective IDs for each subspace under a number of constraints. We combined two powerful principles to create a method that approximates intrinsic dimensions of decoupled subspaces: Firstly, latent spaces are learned via low-rank decomposable layers using truncated Singular Value Decomposition %latent rank optimization 
inspired by adaptive rank reduction \citep{mounayerRankReductionAutoencoders2025} and LoRA \citep{huLoRALowRankAdaptation2021}. Secondly, our algorithm to govern the rank optimization is based on Rate-Distortion Theory, using relative reconstruction fidelities of all modalities to decide when to increase or decrease subspace ranks. FiGuRO is the first method to achieve multi-modal ID estimation and information decoupling %with a measure of the decrease in reconstruction fidelity (distortion). FiGuRO estimates the ID and disentangles shared and private information given a controlled level of distortion 
in a single training pass. We demonstrate FiGuRO's superiority over both uni- and multi-modal methods in ID estimation on various simulated datasets. On multi-modal simulations, we further show that our method picks up on differences in scale and shared-to-private ratios of ground truth IDs. Lastly, we apply FiGuRO to real-world multi-modal datasets and pretrained models, demonstrating the utility of learned latent representations and its efficiency for interpretable multi-modal representation learning. FiGuRO is available on GitHub: \href{https://github.com/viktoriaschuster/FiGuRO}{https://github.com/viktoriaschuster/FiGuRO}.
%and validate it through known relationships between the modalities. Our work significantly advances the field of (multi-modal) representation learning by providing both an estimation approach to the content of shared and private information as well as a training paradigm for more interpretable models.

\section{Related Work}

A common line of inquiry of neural network-based ID estimation has focused on making autoencoders aware of the data's geometric structure. Early approaches were often computationally expensive, such as training multiple models with different bottleneck sizes to identify a sharp increase in reconstruction loss, or ``loss cliff'' \citep{bahadurDimensionEstimationUsing2020}. More adaptive solutions have been proposed within the VAE \citep{kingmaAutoEncodingVariationalBayes2022} framework. These include one-time estimation algorithms like FONDUE, which identifies the optimal fixed dimensionality by detecting the collapse of posteriors for irrelevant latent dimensions \citep{bonhemeFONDUEAlgorithmFind2022}, and continuous Bayesian methods like ARD-VAE, which adds a hierarchical prior to automatically prune dimensions during training \citep{sahaARDVAEStatisticalFormulation2025}. However, these approaches inherit the known optimization challenges of the ELBO \citep{alemiFixingBrokenELBO2018}. Rank reduction autoencoders (RRA) present a deterministic counterpart and learn latent representations as matrix decompositions, allowing for dynamic pruning via truncated singular value decomposition \citep{mounayerRankReductionAutoencoders2025}. Matrix decomposition has also successfully been used for parameter-efficient fine-tuning frameworks like LoRA \citep{huLoRALowRankAdaptation2021} and LoReFT \citep{wuReFTRepresentationFinetuning2024}, with a recent extension of distributing a fixed rank budget based on explained variance of activations \citep{paischerParameterEfficientFinetuning2025}. Another recent direction in neural ID estimation connects manifold dimension and score-based generative models. \citet{kamkariGeometricViewData2024} introduced FLIPD for efficient local ID estimation. \citet{yeatsConnectionScoreMatching2025} took this further and established a formal theoretical link, showing that the denoising score matching loss provides a lower bound for a manifold's ID.

In the multi-modal setting, methods from classical multi-view data decomposition such as JIVE \citep{lockJointIndividualVariation2013a}, AJIVE \citep{fengAnglebasedJointIndividual2018}, SLIDE \citep{gaynanovaStructuralLearningIntegrative2017}, ShIndICA \citep{pandevaMultiViewIndependentComponent2023}, DIVAS \citep{protheroDataIntegrationAnalysis2024}, and PPD \citep{sergazinovSpectralMethodMultiview2025} provide estimates for both joint and individual subspaces, but they inherently rely on the assumption of linear mixing of the underlying variables and lack scalability. To the best of our knowledge, a generalizable neural network-based technique enabling disentanglement \emph{and} explicitly estimating the complete ID structure is still lacking. Several lines of work address the related challenge of disentangling multi-modal information. In causal representation learning, \citet{Sturma2023} identify shared causal variables from unpaired data, but under restrictive assumptions of linearity and non-Gaussianity. Many multi-modal representation learning approaches propose to fuse modalities by learning joint or aligned representations \citep{wangDeepMultiViewRepresentation2015,wuMultimodalGenerativeModels2018,shiVariationalMixtureofExpertsAutoencoders2019,sutterUnityDiversityImproved2024a}. Others have addressed the problem of ``modality laziness'', which describes a model's tendency to solve a task by relying on the most dominant modality while neglecting the rest \citep{huangWhatMakesMultiModal2021}. Proposed solutions include uni-modal pretraining \citep{ismailImprovingMultimodalAccuracy2020}, gradient balancing \citep{pengBalancedMultimodalLearning2022}, and alternate modality training \citep{zhangMultimodalRepresentationLearning2024}. Some advanced methods learn separate representations for shared and modality-specific information, but treat the dimensions as hyperparameters \citep{bousmalisDomainSeparationNetworks2016,gonzalez-garciaImagetoimageTranslationCrossdomain2018,leePrivateSharedDisentangledMultimodal2021}. Recently, it has been proposed that multi-modal contrastive methods like CLIP \citep{radfordLearningTransferableVisual2021a} implicitly adapt to the shared intrinsic dimension of the data as an emergent property of optimizing the contrastive loss \citep{guiMultimodalContrastiveLearning2025,Wang2025}. To summarize, related work either focuses on alignment, treats dimensions as fixed hyperparameters, or yields an implicit, uninterpretable estimate of only the shared ID. In contrast, our work provides an explicit approach to estimate the IDs of both shared and modality-specific subspaces, as well as a general training framework for multi-modal models to improve disentanglement and thus interpretability.

\section{Methods}
\label{method}

Our method, Fidelity-Guided Rank Optimization (FiGuRO), dynamically estimates the IDs of uni- and multi-modal data. FiGuRO learns low-rank projections for each latent subspace inspired by adaptive rank reduction (ARR) \citep{mounayerRankReductionAutoencoders2025}, allowing dimensions to be both reduced and increased in order to converge to the ID. Our proposed algorithm decides whether to decrease or increase ranks guided by principles from Rate-Distortion Theory, using simple reconstruction fidelity metrics.

\subsection{Preliminaries}

\textbf{Rate-Distortion Theory and Autoencoders.} Rate-Distortion Theory can be used to describe the trade-off between the complexity of a representation (rate) and its fidelity (distortion). The rate-distortion function, $R(D)$, defines the minimum rate R required to transmit data such that it can be reconstructed with an expected distortion less than or equal to $D$ \citep{bergerRateDistortionTheory1975}:
\begin{equation}\label{eq:1}
R(D) = \min_{p(\hat{\vx}|\vx) \text{ s.t. } \mathbb{E} [ d(\vx,\hat{\vx})] \leq D} I(\vx;\hat{\vx})
\end{equation}
with $p(\hat{\vx}|\vx)$ as the conditional probability of the reconstruction, $\mathbb{E}[d(\vx,\hat{\vx})]$ as the expected distortion under a chosen metric $d$, and $I(\vx;\hat{\vx})$ as the mutual information. An autoencoder learns a representation $\vz$ of a data sample $\vx$ by training an encoder $f(\vx;\phi)$ with parameters $\phi$ and a decoder $g(\vz;\theta)$ with parameters $\theta$ to reconstruct the input. The reconstruction loss can be seen as a direct measure of distortion. The rate is implicitly controlled by the dimensionality of the bottleneck with dimension $k$. The Minimum Description Length principle formalizes this by showing that the description length $L$ of a latent code \mbox{$\vz \in \mathbb{R}^k$} is linearly proportional to its dimension \citep{grunwaldMinimumDescriptionLength2007}. The bottleneck dimension $k$ is thus a direct, architectural proxy for the rate.

\textbf{Multi-modal autoencoders.} A common approach to improve interpretability in multi-modal representation learning is to split the latent representation into distinct subspaces learning shared and modality-specific information \citep{tsaiLearningFactorizedMultimodal2018,leePrivateSharedDisentangledMultimodal2021}. For two modalities $\rmX_1$ and $\rmX_2$, latent subspaces are generated either directly or, in deep models, from intermediate representations. We denote these latent subspaces as the shared latent space $\vh_s$ and private latent spaces $\vh_1$ and $\vh_2$. The decoders reconstruct each modality from a combination of its private space and the shared space, where $\oplus$ denotes concatenation.
\begin{equation}\label{eq:2}
\begin{split}
    & \hat{\vx}_1 = \, g_1(\vh_s \oplus \vh_1;\theta_1) \\
    & \text{\;\; with \;} \vh_1 = f_{1}(\vx_1;\phi_{1}) \text{, \;} \vh_s = f_s(\vx_1, \vx_2; \phi_s)
\end{split}
\end{equation}
The formulation for $\hat{\vx}_2$ is analogous. This architecture encourages $\vh_s$ to capture modality-invariant information, while $\vh_1$ and $\vh_2$ capture modality-specific details.

\textbf{Adaptive rank reduction.} ARR is a technique for dynamically reducing the dimensionality of the latent space in a neural network \citep{mounayerRankReductionAutoencoders2025}. Instead of learning a full latent matrix \mbox{$\rmZ \in \R^{N \times k}$}, one can learn its low-rank decomposition. Building on truncated Singular Value Decomposition (SVD), the unique optimal rank-$k^*$ approximation of $\rmZ$ is given by %$\rmZ$ can be approximated by its $k^*$-rank version 
\mbox{$\rmZ \approx \rmZ^{(k^*)} = \rmU^{(k^*)}\rmS^{(k^*)}\rmV^{T(k^*)}$}%where $\rmS^{(k^*)}$ contains the top $k^* < k$ singular values. 
, as established by the Eckart-Young-Mirsky Theorem \citep{eckartApproximationOneMatrix1936,mirskySYMMETRICGAUGEFUNCTIONS1960}. 
While \citet{mounayerRankReductionAutoencoders2025} used a unit step of $1$ to reduce ranks, a cumulative energy threshold $\gamma$ is a more flexible and scalable choice \citep{zhangActiveLearningOptimal2023}: Dimensions above index \mbox{$\min (k: \sum_j^k \mathbf{E}_j \geq 1 - \gamma)$} with energy \mbox{$\rmE_j = \rmS^2_j/ \sum_j^k \rmS^2$} of singular values $S$ are discarded all at once.

\subsection{Fidelity-Guided Rank Optimization (FiGuRO)}

\begin{figure}[t]
    \centering
    \includegraphics[width=0.95\linewidth]{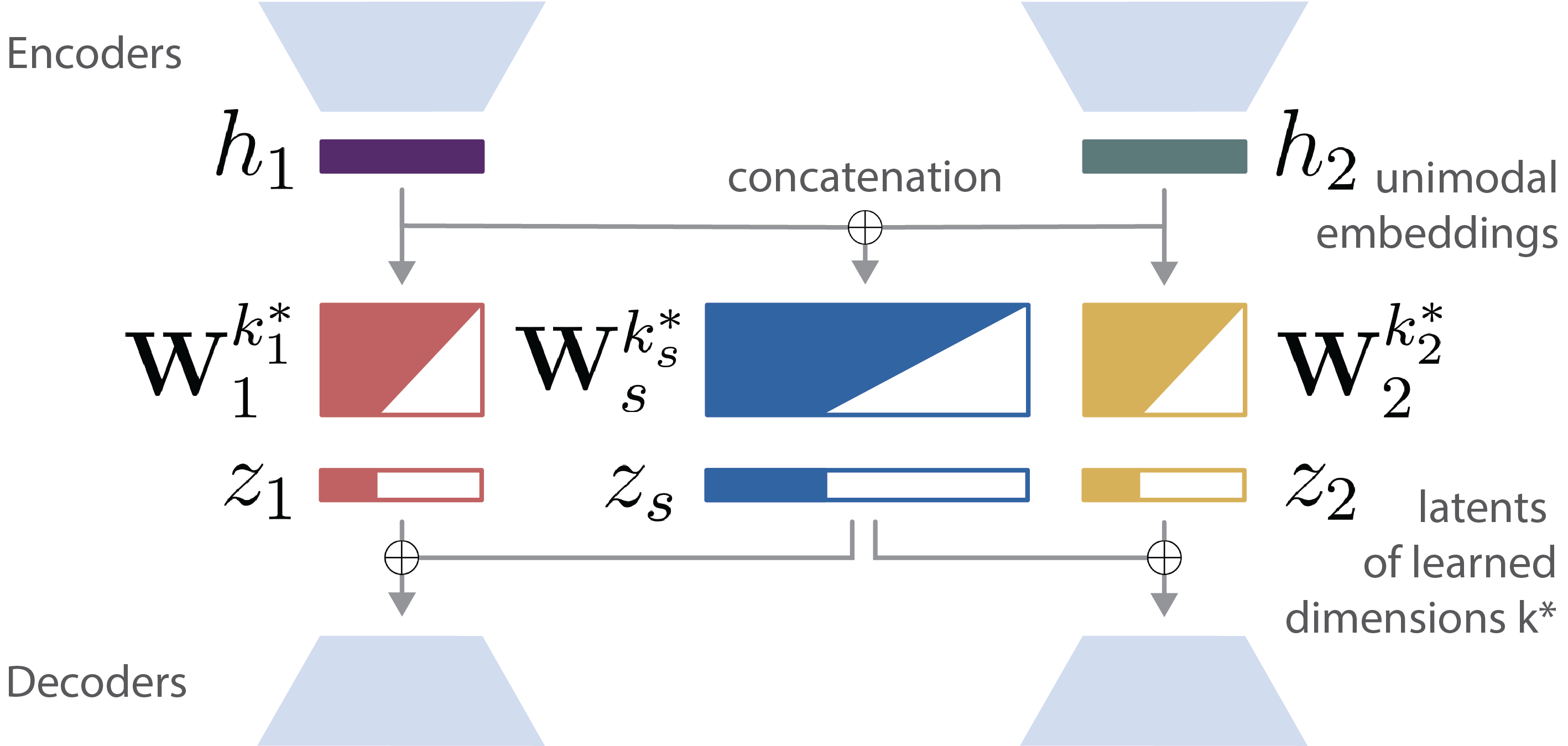}
    \caption{\textbf{FiGuRO's multi-modal adaptive fusion architecture.} Uni-modal embeddings $h_m$ are decomposed into shared ($z_s$) and private ($z_m$) representations via pruned weight matrices $\rmW^{(k^*_s)}_{s}$ and $\rmW^{(k^*_m)}_{m}$. When using frozen pretrained models, concatenated representations $(z_s, z_m)$ are passed through a fusion layer to reconstruct $h_m$.}
    \label{fig:figuro}
\end{figure}

The core of our contribution is an algorithm that optimizes the dimension of latent subspaces (converges toward the ID) under a user-defined ``acceptable'' level of lossy compression inspired by ARR and Rate-Distortion Theory. Unlike standard ARR, which applies SVD on latent batches, we decompose the weight matrices similar to LoRA \citep{huLoRALowRankAdaptation2021} to learn a global low-rank structure independent of batch size. Furthermore, we explicitly prune the irrelevant dimensions from the projection, giving low-dimensional representations. We also enable the increase of ranks if compression has become too lossy. We introduce a low-rank decomposable layer with maximum rank $k_{max}$ into the bottleneck of an arbitrary, sufficiently capable autoencoder (see theoretical guarantees in \ref{assum:capacity}) with weight matrix $\rmW$ as follows:
\begin{equation}\label{eq:3}
    \rmW \approx \rmW^{(k^*)} = \rmU^{(k^*)}\rmS^{(k^*)}\rmV^{T(k^*)}
\end{equation}
with \mbox{$\rmU^{(k^*)} \in \R^{l \times k^*}$ and $\rmV^{T(k^*)}  \in \R^{k^* \times l}$} as trainable parameters. $l$ denotes the upstream hidden dimension in the autoencoder. Rank reduction to $k^*$ is governed by the energy threshold $\gamma$ as in ARR. Following Equations \ref{eq:2} and \ref{eq:3}, for a given modality $\vx_m$, its reconstruction is generated as
\begin{equation}\label{eq:4}
\begin{split}
    \hat{\vx}_m = \, & g_m(\vz_s \oplus \vz_m;\theta_m)\\
    \text{with\;} & \vz_m = \rmW^{(k^*_m)}_{m} \vh_m \text{, \;} \vh_m = f(\vx_m; \phi_m) \\
    & \vz_s = \rmW^{(k^*_s)}_{s}(\vz_{m=1} \oplus \vz_{m=2})
\end{split}
\end{equation}
where \mbox{$\rmW^{(k^*_1)}_{1}$ and $\rmW^{(k^*_s)}_{s}$} are the dynamically adjusted low-rank weight matrices. Ranks $k_s$ and $k_m$ are determined via Algorithm \ref{alg:figuro}. After an initial training phase, we compute the baseline fidelity/distortion metric $D_0$. The distortion budget $\lambda$ defines how ``lossy'' we allow the compression to become. We mainly use the coefficient of determination (\mbox{$R^2 \in [0,1]$}, see Appendix \ref{sec:metrics}) as our fidelity metric. $R^2$ is scale-invariant, which allows us to define the minimum acceptable fidelity as \mbox{$D_0 - \lambda$} determining when to stop rank reduction. This makes the use of FiGuRO with different data and loss functions more intuitive.

Every $\tau$ epochs, we compute $D$ and evaluate which ranks should be reduced or increased based on the distortion budget $\lambda$. Reduction is performed via cumulative energy reduction (using energy threshold $\gamma$, See Section 3.1) and dimensions are additionally masked out from the weight matrix to produce low-dimensional representations. Increasing dimensions is achieved by unmasking. 
An additional hyperparameter of FiGuRO is patience $\pi$. $\pi$ determines after how many steps $\tau$ without rank changes the algorithm stops. FiGuRO also only increases ranks if $D < D_0 - \lambda$ was true for at least $\pi/2$ steps. 
The full procedure is detailed in \mbox{Algorithm \ref{alg:figuro}}. 

Our approach can be framed as a greedy algorithm for finding an efficient operating point on the rate-distortion curve. It works by minimizing the reconstruction loss given model parameters $\phi$, $\theta$, and rank $k^*$ under the condition that the change in distortion \mbox{$\Delta\E[d(\rmX,\hat{\rmX})]$} is below or equal to the distortion budget $\lambda$.
\begin{equation}
\label{eq:condition}
    R(D) = \min_{\phi,\theta,k^* \text{ s.t. } \Delta\E [ d(\rmX,\hat{\rmX})] \leq \lambda} \mathcal{L}_{\text{recon}}
\end{equation}
The essential components of FiGuRO are the explicit shared/subspace separation (in the multi-modal case), low-rank decomposable layers, and the bi-directional rank optimization logic described in \mbox{Algorithm \ref{alg:figuro}}, including the use of a distortion metric $D$ and threshold $\lambda$. Using an interval with step size $\tau$ and patience $\pi$, the choice of distortion metric, and updating ranks based on cumulative energy threshold $\gamma$ are implementation choices.

FiGuRO promotes convergence toward the ID in finite time (\ref{thm:convergence}, \ref{thm:correctness}) under a number of assumptions. As suggested in Equation \ref{eq:condition}, our estimation is dependent on models $f_{\phi}$ and $g_{\theta}$, and relies on the assumption that they act as sufficient function approximators to model the data manifold (\ref{assum:capacity}). We further require the chosen distortion metric to be a suitable and stable proxy for reconstruction fidelity (\ref{assum:fidelity}). Our theoretical guarantees are elaborated in Appendix \ref{sec:guarantees}. 
Appendix \ref{sec:guarantees_dis} discusses how information disentanglement emerges as the most rank-efficient solution in multi-modal settings. %Encoding shared information redundantly in multiple private subspaces inflates the total rank. Conversely, effectively isolating private information within the shared subspace would require the decoder of the other modality to learn near-zero weights to suppress that signal. Since maintaining such a state is unstable compared to using the dedicated private spaces, the model naturally gravitates toward the disentangled solution.

\begin{algorithm}[t]
\caption{Fidelity-Guided Rank Optimization}
\label{alg:figuro}
\begin{algorithmic}[1]
\STATE \textbf{Input:} Multi-modal data $\{\rmX_1, \dots, \rmX_M\}$, distortion budget $\lambda$, reduction frequency $\tau$, \mbox{patience $\pi$}, energy threshold $\gamma = 0.01$.
\STATE \textbf{Initialize:} Multi-modal autoencoder with full-rank adaptable layers: \mbox{$k_{\min} \leftarrow 1$}, \mbox{$k_{\max} \leftarrow l$}, \mbox{$k_0 = k_{\max}$}
\STATE \textbf{Phase 1: Pre-training}
\STATE Train the full-rank model to loss convergence
\STATE Compute initial fidelity: \mbox{$D_{0,m} \leftarrow D(\rmX_m, \hat{\rmX}_m) \; \forall \; m$}

\STATE \textbf{Phase 2: Rank Optimization}
\FOR{each epoch $t$}
    \STATE Train model on reconstruction loss for one epoch.
    \IF{$t \bmod \tau = 0$}
        \STATE Identify ranks for adjustment
        \STATE $K_{decrease} \leftarrow \{k_{t,m} \mid D_{t,m} > (D_{0,m} - \lambda)\}$
        \STATE $K_{increase} \leftarrow \{k_{t,m} \mid D_{t,m} \leq (D_{0,m} - \lambda)\}$
        \STATE Assign shared rank $k_s$ if all modalities agree
        \IF{$K_{increase} = \emptyset$}
            \STATE $K_{decrease} \leftarrow K_{decrease} \cup \{k_s\}$
        \ELSIF{$K_{decrease} = \emptyset$}
            \STATE $K_{increase} \leftarrow K_{increase} \cup \{k_s\}$
        \ENDIF
        \STATE Update ranks
        \FOR{$k_{t,i}$ in $K_{decrease}$}
            \STATE $k_{t+1,i} \leftarrow \min (k: \sum_j^k \mathbf{E}_j \geq 1 - \gamma)$
            %\STATE If $k_{t+1,i} < k_{t,i}$, \mbox{\texttt{rank\_changed} $\leftarrow$ \texttt{True}}
        \ENDFOR
        \FOR{$k_{t}$ in $K_{increase}$}
            \STATE $k_{t+1,i} \leftarrow \max (k_t + 1, int(1.1 k_t))$
            %\STATE If $k_{t+1,i} > k_{t,i}$, \mbox{\texttt{rank\_changed} $\leftarrow$ \texttt{True}}
        \ENDFOR
        \IF{ranks have not changed for $\tau \times \pi$ epochs}
            \STATE \textbf{break}
        \ENDIF
    \ENDIF
\ENDFOR
\STATE \textbf{Return:} Final ranks $\{k^*_s, k^*_1, \dots, k^*_M\}$ and trained model.
\end{algorithmic}
\end{algorithm}

\section{Experimental Setup}

\subsection{Datasets}

\textbf{Simulated Data.} Evaluation of ID estimation techniques requires data with known ground truth dimensions. For this purpose, we generated simulated datasets of varying complexity in terms of generative process, IDs, and ambient dimensionality. A summary is given in \mbox{Table \ref{tab:sim_data}}. Datasets $\rmA$ and $\rmB$ present simple simulations which we refer to as ``parametric'' as they can be generated for a variety of data characteristics. Uni-modal dataset $\rmA$ was designed to evaluate hyperparameter sensitivity and robustness to data characteristics. %(see robustness analyses in \mbox{Appendix \ref{sec:robustness_results})}. 
Multi-modal simulation $\rmB$ was created with four variations that differ in the ratios and scales of the shared and private hidden dimensions in order to test whether our method can pick up on different subspace IDs. Dataset $\rmC$ is a more complex and higher-dimensional simulated dataset inspired by biological measurements of gene expression and protein abundance. It allows us to validate our method in a setting closer to real data. Details on the simulation processes are provided in \mbox{Appendix \ref{sec:simulation}}. Since our method and some baselines require sufficiently capable autoencoders, we performed an architecture and hyperparameter search for the most complex dataset, $\rmC$. Full details on the objectives, search space, training, and final selected parameters are provided in \mbox{Appendix \ref{sec:optuna}}.

\begin{table}
\begin{center}
\caption{\textbf{Simulated datasets.} A summary of the datasets we simulated, including the ambient data dimensions, ground truth hidden dimensions for shared and private subspaces (IDs), and the number of samples.}\label{tab:sim_data}
\begin{small}
\begin{tabular}{llll}
\toprule
\textbf{Dataset} & \textbf{Dimension} & \textbf{IDs} & \textbf{Samples}\\
 & & $\vz_s, \vz_1, \vz_2$ & \\
\midrule
$\rmA$ & 50 & 5 & 10K\\
$\rmB_{s}$ (``small'') & 200, 200 & 2, 3, 5 & 10K\\
$\rmB_{i1}$ (``imbalanced 1'') & 200, 200 & 20, 2, 2 & 10K\\
$\rmB_{i2}$ (``imbalanced 2'') & 200, 200 & 2, 2, 20 & 10K\\
$\rmB_{l}$ (``large'') & 200, 200 & 20, 20, 20 & 10K\\
$\rmC$ & 20K, 20K & 6, 5, 6 & 30K\\
\bottomrule
\end{tabular}
\end{small}
\end{center}
\end{table}

\textbf{Real-world Data.} To demonstrate applicability to real-world data and performance on downstream tasks, we used three multi-modal datasets: Audio MNIST \citep{audiomnist2023}, So2Sat \citep{zhuSo2SatLCZ42Benchmark2020}, and NYU Depth V2 \citep{Silberman:ECCV12}. Audio MNIST comprises images of handwritten digits and audio recordings of these digits. So2Sat contains synthetic aperture radar (SAR) and optical image data used to classify climate zones. NYU Depth V2 consists of paired RGB and depth images captured from indoor scenes. %For NYU Depth V2, we use the pretrained latent diffusion \citep{rombach2021highresolution} autoencoder from \href{https://huggingface.co/stabilityai/sd-vae-ft-mse} as the backbone uni-modal autoencoders. 
See the Appendix for details on data preprocessing (\ref{sec:realdata}).

\subsection{Robustness}

\textbf{Hyperparameters and Data Characteristics.} To evaluate the stability of our estimator, we conducted a comprehensive grid search on dataset $\rmA$ ($N=1080$ runs) varying the distortion budget $\lambda$, rank reduction frequency $\tau$, energy threshold $\gamma$, and patience $\pi$ (Appendix \ref{sec:hyperparameters}). We further tested robustness to data characteristics by systematically varying sample sizes, latent distribution types, nonlinearity functions and depths, the sparsity of the transformation matrix (connectivity), sparsity, and noise (Appendix \ref{sec:robustness_data}). Finally, we compared different distortion metrics ($R^2$, Explained Variance Score, MSE, and RMSE) under varying levels of nonlinearity and sparsity (Appendix \ref{sec:robustness_distortion}).

\textbf{Ablation and Additive Studies.} To verify the necessity of our fidelity-guided optimization, we compared FiGuRO against a naive "rank-reduction-only" baseline. This variant performs singular value pruning based on the energy threshold $\gamma$ but lacks our bidirectional rank adjustment and distortion constraints. Furthermore, we tested whether auxiliary loss terms (Frobenius norm, L1, L2) are needed for disentanglement. All experiments were performed on the subsets of $\rmB$.

\subsection{Benchmarking}

\textbf{Uni-modal Estimation.} We benchmarked ID estimation against a wide range of traditional and neural network-based estimators using both modalities of $\rmC$, which have the same high ambient dimension and similar ground truth IDs. However, $\rmC_{\mathbf{1}}$ is corrupted by high noise, while $\rmC_{\mathbf{2}}$ features high nonlinearity, representing common failure modes for classic estimators. For estimators with tunable parameters, we report the full range of results to demonstrate sensitivity. Baseline methods and their implementations and parameterization are detailed in \mbox{Appendix \ref{sec:baselines}}.

\textbf{Multi-modal Estimation and Decomposition.} To evaluate subspace ID estimation and disentanglement, we compared FiGuRO against multi-view decomposition methods JIVE \citep{lockJointIndividualVariation2013a}, AJIVE \citep{fengAnglebasedJointIndividual2018}, SLIDE \citep{gaynanovaStructuralLearningIntegrative2017}, and ShIndICA \citep{pandevaMultiViewIndependentComponent2023} using all four subsets of $\rmB$, which have variable ID scales and ratios for the different subspaces. We implemented these methods with default settings except for sparsity penalty in SLIDE and the rank grid in ShIndICA (tuning required for convergence). Details can be found in Appendix \ref{sec:baselines}. 

\subsection{Application to Real Data}

%For real-world utility, we apply FiGuRO to three multi-modal datasets. On the NYU Depth V2 dataset, we demonstrate that it can be used with pretrained foundation models. We used latent diffusion \citep{rombach2021highresolution} and a depth estimator \citep{yangDepthAnythingV22024} as the backbone uni-modal autoencoders and trained only transformation and FiGuRO's adaptive layers. On the two other datasets, Audio MNIST and So2Sat, we evaluated the utility of the learned decomposed subspaces for downstream tasks. We computed classification accuracies of provided labels expected to be shared among the modalities. Architecture and training details are provided in Appendices \ref{sec:architectures} and \ref{sec:training}.

We applied FiGuRO to three multi-modal datasets described above. For Audio MNIST and So2Sat, we pre-trained custom uni-modal autoencoders to establish baseline representations. For the high-resolution NYU Depth V2 dataset, we evaluated FiGuRO’s ability to probe the latent spaces of modern foundation models. We used the frozen variational autoencoder from Stable Diffusion \citep{rombach2021highresolution} and the Depth Anything V2 estimator \citep{yangDepthAnythingV22024} as feature extractor (plus a custom decoder). %By operating on these fixed embeddings, FiGuRO acts as a computationally efficient probe to interpret learned structures without end-to-end retraining. 
We report on the domain-specific meaningfulness of the learned subspaces as well as the effect of dimensionality reduction on reconstruction performance. 
As an example of downstream utility of the learned subspaces, we evaluated classification performance on Audio MNIST (digit recognition) and So2Sat (climate zone classification). Architecture and training details are provided in Appendices \ref{sec:architectures} and \ref{sec:training}.

% moved this to methods to get it at the top of the page
%\begin{figure*}[h]
%    \centering
%    \includegraphics[width=0.9\linewidth]{figures/swiss_roll_analysis_seed19.png}
%    \caption{\textbf{FiGuRO's ID estimation over time on the 3D Swiss Roll Manifold.} From left to right: 3D data space, learned latent representation with 2 dimensions (seed 19), rank over epochs as the ID estimate (with a dashed line on true ID 2), training mean squared error (MSE) loss over epochs, and distortion metric ($R^2$) over epochs with the dashed line depicting the minimum acceptable $R^2 = R_0^2 - \lambda$. The orange vertical lines show the end of the warmup period after which rank optimization started.}
%    \label{fig:3d-data}
%\end{figure*}

\begin{figure*}[t!]
    \centering
    \includegraphics[width=0.9\linewidth]{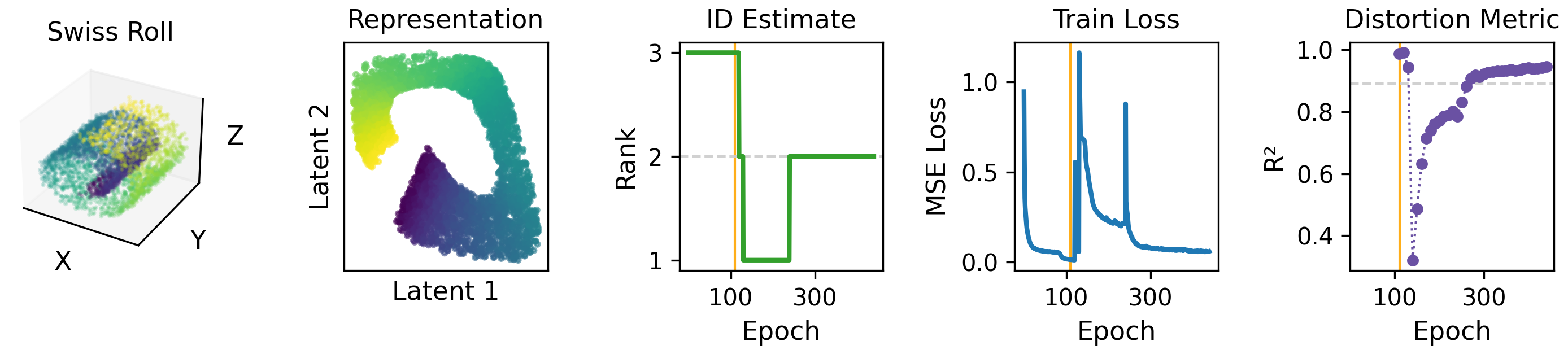}
    \caption{\textbf{FiGuRO's ID estimation over time on the 3D Swiss Roll Manifold.} From left to right: 3D data space, learned latent representation with 2 dimensions (seed 19), rank over epochs as the ID estimate (with a dashed line on true ID 2), training mean squared error (MSE) loss over epochs, and distortion metric ($R^2$) over epochs with the dashed line depicting the minimum acceptable $R^2 = R_0^2 - \lambda$. The orange vertical lines show the end of the warmup period after which rank optimization started.}
    \label{fig:3d-data}
\end{figure*}

\begin{table*}[b!]
    \caption{\textbf{Comparison of estimated ranks for joint and individual components across multi-modal decomposition methods for datasets of $\mathbf{B}$.} Columns 4-7 present baseline methods from multi-view data decomposition. Best estimates (closest to ground truth) are highlighted in bold, second best underlined. We report average deviation from ground truth at the end including the standard error of the mean SEM. For our method, we used a distortion budget of $\lambda = 0.05$.}
    \centering
    \begin{small}
    \begin{tabular}{p{1.0cm}p{1.9cm}p{1.5cm}p{1cm}p{1cm}p{1cm}p{1.5cm}p{2.0cm}}
    \toprule
    \textbf{Dataset} & \textbf{Ground Truth} & \textbf{Subspace} & JIVE & AJIVE & SLIDE & ShIndICA & \textbf{FiGuRO (ours)} \\
    \midrule
    \multirow[t]{3}{*}{$\rmB_{s}$} & 2 & Shared & \textbf{1} & \textbf{3} & \textbf{1} & \textbf{1} & \underline{3.6 $\pm$ 0.6}\\
     & 3 & Modality 1 & \underline{11} & 197 & 10 & 10 & \textbf{1.0 $\pm$ 0.0}\\
     & 5 & Modality 2 & 9 & \textbf{6} & \underline{8} & \underline{8} & \textbf{6.2 $\pm$ 1.8}\\
    \multirow[t]{3}{*}{$\rmB_{i1}$} & 20 & Shared & 1 & \textbf{21} & 1 & 1 & \underline{13.4 $\pm$ 0.4}\\
     & 2 & Modality 1 & 23 & \textbf{2} & 22 & 22 & \underline{4.4 $\pm$ 0.2}\\
     & 2 & Modality 2 & 23 & \textbf{3} & 22 & 22 & \underline{4.0 $\pm$ 0.5}\\
    \multirow[t]{3}{*}{$\rmB_{i2}$} & 2 & Shared & \textbf{1} & \textbf{3} & \textbf{1} & \textbf{1} & \underline{7.0 $\pm$ 0.0}\\
     & 2 & Modality 1 & 12 & 196 & \underline{11} & \underline{11} & \textbf{1.0 $\pm$ 0.0}\\
     & 20 & Modality 2 & 23 & \textbf{20} & 22 & 22 & \underline{19.4 $\pm$ 1.5}\\
    \multirow[t]{3}{*}{$\rmB_{l}$} & 20 & Shared & 1 & \underline{21} & 1 & 10 & \textbf{19.2 $\pm$ 0.7}\\
     & 20 & Modality 1 & 41 & 40 & 40 & \underline{31} & \textbf{12.8 $\pm$ 1.0}\\
     & 20 & Modality 2 & 41 & 42 & 40 & \underline{31} & \textbf{15.2 $\pm$ 1.3}\\
     %\multicolumn{3}{l}{Avg deviation from GT} & 2,3,1,3 & 17,5,31,41 & 1,8,4,19,21,21,1,10,3,19,21,21 & 1,194,1,1,0,1,1,194,0,1,20,22 & 2,7,1,1,0,1,1,8,0,1,0,0 & 1,7,3,19,20,20,1,9,2,19,20,20 & 1,7,3,19,20,20,1,9,2,10,11,11 & 1.6,2,1.2,6.6,2.4,2,5,1,0.6,0.8,7.2,4.8 \\
     %\multicolumn{3}{l}{Avg deviation from GT} & 2.3 $\pm$ 0.5 & 23.5 $\pm$ 7.9 & 12.4 $\pm$ 2.5 & 36.3 $\pm$ 21.7 & 1.8 $\pm$ 0.8 & 11.8 $\pm$2.4 & 9.5 $\pm$ 2.1 & 2.9 $\pm$ 0.7 \\
     \multicolumn{3}{l}{Avg deviation from ground truth} & 12.4 & 36.3 & 11.8 & \underline{9.5} & \textbf{2.9} \\
     \multicolumn{3}{l}{SEM} & 2.5 & 21.7 & 2.4 & \underline{2.1} & \textbf{0.7} \\
     \bottomrule
    \end{tabular}
    \end{small}
    \label{tab:mm_baseline_ranks}
\end{table*}

\section{Results}

We validate FiGuRO's ability to estimate IDs and disentangle subspaces, progressing from controlled simulations to real-world applications with large pretrained models. \mbox{Figure \ref{fig:3d-data}} illustrates an example of the training dynamics of FiGuRO on the Swiss Roll dataset, tracking the convergence of the rank estimate to the true ID alongside the reconstruction loss, distortion metric, and the resulting low-dimensional latent topology.

\subsection{Robustness to Hyperparameters and Data Characteristics}

Across our comprehensive hyperparameter sweep, FiGuRO arrived at a robust average estimate of $4.89 \pm 0.01$ (SEM) for a true ID of 5. Results in Supplementary \mbox{Table \ref{tab:hyperparameter-sweep}} indicate a mild dependence on the distortion budget $\lambda$ and negligible sensitivity to $\gamma$, validating our fidelity-guided stopping criterion. While the method proved robust to diverse data characteristics, we observed that high levels of dropout or noise (signal-to-noise ratio SNR $< 7$) could lead to overestimation (Supplementary \mbox{Figure \ref{fig:robustness}}). Experiments on 3D manifolds (Supplementary \mbox{Figure \ref{fig:3d-data-supp}}) further confirmed the necessity of our bidirectional update mechanism, as ranks were frequently observed dropping too low before being corrected upwards to the true ID. Our hyperparameter sweep allowed us to select a single configuration balancing performance and speed $\{\lambda=0.05, \tau=10, \gamma=0.01, \pi=10\}$ for subsequent experiments unless stated otherwise. 
Investigating the robustness of different distortion metrics to nonlinearities and sparsity in the data revealed that $R^2$ was more robust to sparsity than MSE and RMSE, and more robust to nonlinearities than the Explained Variance Score.

\begin{figure*}[t!]
    \centering
    \includegraphics[width=1.0\linewidth]{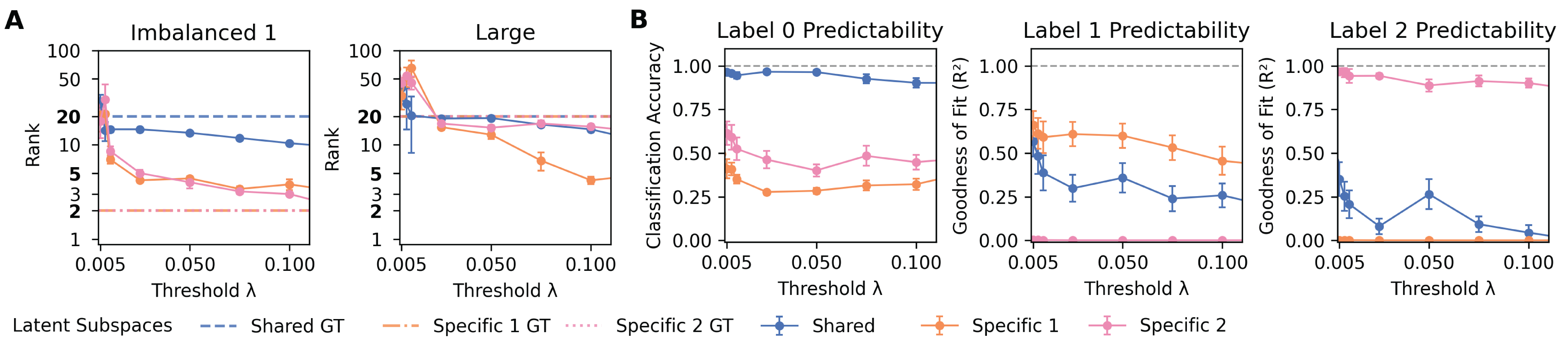}
    \caption{\textbf{Multi-modal ID estimation and disentanglement of $\rmB$ with varying true IDs.} The x axis presents the distortion threshold $\lambda$. \textbf{(A)} Log-scale estimated ranks (mean $\pm SEM$, $N=5$ seeds) for shared (blue) and modality-specific (orange, pink) subspaces of $\rmB_{i1}$ and $\rmB_{l}$. Ground truth (GT) IDs are depicted as dashed lines. Values closer to the GT lines of the same color indicate better performance. Results for the remaining subsets of $\rmB$ are shown in Supplementary Figure \ref{fig:mm_paramsim_supp_full}. \textbf{(B)} Average predictability of shared and private information over all random seeds and datasets ($N=20$). The class of the shared space (label 0) is evaluated as classification accuracy. Labels 1 and 2 represent the mean value of modality-specific generative hidden vectors. Their predictability is evaluated with $R^2$. Metrics are defined in \ref{sec:metrics}.}
    \label{fig:mm_paramsim}
\end{figure*}

\begin{table*}[b!]
\begin{small}
\begin{center}
\caption{\textbf{Ablation study.} This table compares the average estimated ranks ($\mathbf{k_s}, \mathbf{k_1}, \mathbf{k_2}$) of a naive SVD-only rank reduction model against the full FiGuRO implementation (SVD + R(D) algorithm) and ground truth (GT) across datasets $\rmB$. We report mean $\pm$ SEM on three random seeds.} \label{tab:mm_ablation}
\begin{tabular}{p{1.5cm}p{1.5cm}p{2cm}p{2cm}p{2cm}p{2cm}}
\toprule
Method & Subspace & $\mathbf{B_s}$ & $\mathbf{B_{i1}}$ & $\mathbf{B_{i2}}$ & $\mathbf{B_l}$\\
\midrule
GT & $k_s$ & 2 & 20 & 2 & 20 \\
& $k_1$ & 3 & 2 & 2 & 20 \\
& $k_2$ & 5 & 2 & 20 & 20 \\
\multirow[t]{3}{1.5cm}{FiGuRO (SVD)} & $k_s$ & 1.0 $\pm$ 0.0 & 1.0 $\pm$ 0.0 & 1.0 $\pm$ 0.0 & 1.0 $\pm$ 0.0 \\
& $k_1$ & 1.0 $\pm$ 0.0 & 1.0 $\pm$ 0.0 & 1.0 $\pm$ 0.0 & 1.0 $\pm$ 0.0 \\
& $k_2$ & 1.0 $\pm$ 0.0 & 1.0 $\pm$ 0.0 & 1.0 $\pm$ 0.0 & 1.0 $\pm$ 0.0 \\
\multirow[t]{3}{1.5cm}{FiGuRO \mbox{(SVD +} R(D))} & $k_s$ & 3.6 $\pm$ 0.06 & 13.4 $\pm$ 0.4 & 7.0 $\pm$ 0.0 & 19.2 $\pm$ 0.7 \\
& $k_1$ & 1.0 $\pm$ 0.0 & 4.4 $\pm$ 0.2 & 1.0 $\pm$ 0.0 & 12.8 $\pm$ 1.0 \\
& $k_2$ & 6.2 $\pm$ 1.8 & 4.0 $\pm$ 0.5 & 19.4 $\pm$ 1.5 & 15.2 $\pm$ 1.3 \\
\bottomrule
\end{tabular}
\end{center}
\end{small}
\end{table*}

\subsection{ID Estimation Benchmarks}

Existing uni-modal ID estimators displayed distinct failure modes for different data characteristics (Supplementary Table \ref{tab:sim_method_comp}). Global linear methods like PCA severely overestimated the dimension of the noisy data $\rmC_{\mathbf{1}}$, likely because they measure the linear embedding dimension rather than the manifold dimension. Most local estimators (e.g. MLE \citep{levinaMaximumLikelihoodEstimation2004a}, TwoNN \citep{faccoEstimatingIntrinsicDimension2017a}) systematically underestimated the ID of the non-linear $\rmC_{\mathbf{2}}$. Other neural network approaches, such as Rank Reduction Autoencoders \citep{mounayerRankReductionAutoencoders2025}, proved unstable, with estimates varying strongly depending on the choice of the rank reduction threshold $\gamma$. Bayesian alternatives like ARD-VAE \citep{sahaARDVAEStatisticalFormulation2025} offered a better estimation range than RRA but still performed significantly worse than our method, frequently suffering from posterior collapse. In contrast, FiGuRO was the only framework to provide consistent, close-to-accurate estimates across both regimes.

In the multi-modal case, most baselines showed a high average deviation from the ground truth (GT) over the four subsets of $\rmB$ (Table \ref{tab:mm_baseline_ranks}). FiGuRO, on the other hand, demonstrated a strong ability to recover the underlying dimensional structure of multi-modal data. Table \ref{tab:mm_baseline_ranks} shows that the estimated ranks for the shared and specific subspaces largely preserve the scales and relationships between the ground truth dimensions, albeit sometimes underestimating the true ID ($\rmB_{i1}$ shared and $\rmB_{l}$ private subspace 1). The $\rmB_{i1}$ shared generative matrix exhibited an effective rank of 6.61 in contrast to its 20 variables (Supplementary Table \ref{tab:mm_effectiveranks}), and we also see some information leakage into the private subspaces for this case (Supplementary Figure \ref{fig:mm_baselines_2}). In Figure \ref{fig:mm_paramsim}A, we see that these estimates are robust over a large range of the main hyperparameter $\lambda$. Our ablation study (\mbox{Table \ref{tab:mm_ablation}}) showed that a naive \emph{rank-reduction-only} approach universally collapsed to the minimum ranks on all multi-modal datasets $\rmB$, demonstrating the relevance of our proposed algorithm.

\begin{figure*}[t!]
    \centering
    \includegraphics[width=.95\linewidth]{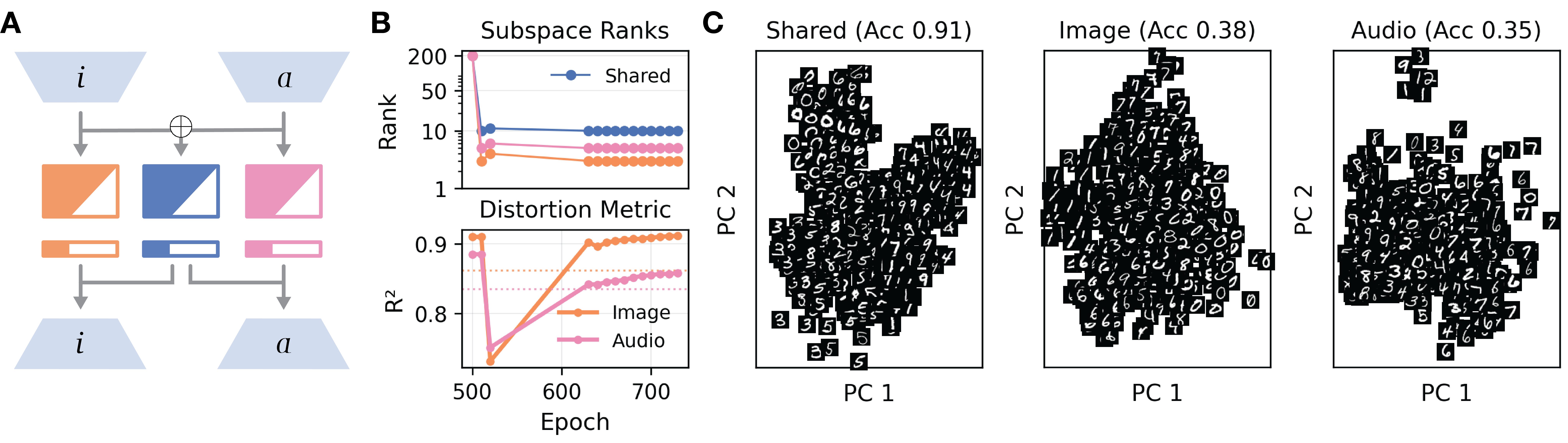}
    \caption{\textbf{Disentanglement of semantic content from modality-specific style on Audio MNIST. (A)} FiGuRO's multi-modal adaptive fusion architecture. Uni-modal embeddings are decomposed into shared and private representations of learned dimensionality. Colors of matrices and decomposed representations $\vz$ match the legend in B.
    \textbf{(B)} Rank estimation and distortion metric ($R^2$) over epochs. Colors indicate the subspaces in the rank plot and modalities in the distortion plot. Dashed lines indicate the distortion thresholds per modality. \textbf{(C)} Latent representations per subspace. We show the first two components of PCAs, with MNIST images instead of dots. Titles indicate the subspace and show classification accuracy of digits for seed 0.}
    \label{fig:audiomnist}
\end{figure*}

\subsection{Disentanglement}

In many multi-modal settings, we want to separate shared and modality-specific information for interpretability. FiGuRO is designed to learn these decomposed subspaces along with their dimensionalities. We evaluate this emergent disentanglement through measuring the predictability of labels derived from the generative variables of datasets $\rmB$. Figure \ref{fig:mm_paramsim}B shows that predictability per label was highest in its corresponding subspace, demonstrating that disentanglement is an emergent property of FiGuRO. 
Adding explicit regularizers, such as Frobenius norm for orthogonality or L1 penalties, provided no improvement in disentanglement and often degraded rank estimation accuracy (Supplementary \mbox{Tables \ref{tab:mm_l1_rank}-\ref{tab:mm_ortho_pred})}. While L2 regularization assisted in the edge case of fully independent modalities (Supplementary \mbox{Table \ref{tab:mm_noshared})}, the standard FiGuRO objective sufficed for disentanglement in general multi-modal settings. 
Crucially, baseline decomposition methods frequently failed to correctly assign information, often `swapping' shared and private signals (see Supplementary Figures \ref{fig:mm_baselines_1}-\ref{fig:mm_baselines_4}).

\subsection{Application to Real-World Data}

We validated FiGuRO’s capacity to handle complex real-world data by analyzing its compression efficiency, ID estimation, and downstream utility. The datasets we used were Audio MNIST, So2Sat, and NYU Depth V2.

\textbf{Scalability and Compression.} FiGuRO reduced the total combined ranks by factors of 5-36, retaining high performance with test losses increasing only marginally (Supplementary Table \ref{tab:mm_real_recon}). While FiGuRO is more computationally expensive than multi-modal baselines on the small simulations $\rmB$, Supplementary Table \ref{tab:mm_times} demonstrates how the relationship starts to shift it on real-world, larger data. Unlike the baselines that scale quadratically with $\max(N,d)$ ($N$ being the sample size and $d$ the ambient dimension), FiGuRO scales linearly with the sample size and dimension (in the case of an MLP as the base model). By operating on the fixed embeddings of the NYU Depth V2 dataset extracted from Latent Diffusion \citep{rombach2021highresolution} and Depth Anything V2 \citep{yangDepthAnythingV22024}, rather than training end-to-end, we demonstrate FiGuRO's potential as a scalable latent probe and interpretable fusion method for large pretrained, uni-modal models.

\textbf{ID Estimation.} Differences in modality-specific IDs aligned with domain expectations, for example allocating more dimensions to image modalities than radar or depth (Supplementary Table \ref{tab:mm_real_benchmark}). For MNIST in particular, we have existing estimates from traditional uni-modal methods, assuming the ID to be around 7-25 \citep{popeIntrinsicDimensionImages2020}. Supplementary Table \ref{tab:mm_real_benchmark} shows that multi-modal baselines were far off from this expectation using Audio-MNIST. FiGuRO, however, converged to an average rank ($k_s + k_{image}$) of $11.4 \pm 0.6$, fitting right into the previously reported range. A small sensitivity analysis to hyperparameters (Supplementary Table \ref{tab:audiomnist_sweep}) revealed that FiGuRO was most sensitive to $\lambda$, the choice of distortion metric, and $\gamma$ (in this order). The sensitivity to $\gamma$ stemmed from an overestimation of ranks due to small $\gamma$ values not allowing for enough rank reduction given $\lambda$. As a result, we believe using $\gamma \geq 0.01$ is most robust. Average total ID estimates were lowest for Explained Variance Score and highest for MSE. The latter matches our previous robustness results on simulations.

\textbf{Downstream Utility and Disentanglement.} Beyond structural analysis, the decomposed subspaces proved highly effective for downstream tasks. We evaluated classification accuracy for labels expected to be shared across modalities (Audio MNIST: digit identity, So2Sat: climate zone). 
FiGuRO successfully isolated these signals in the shared subspace, even achieving higher accuracies than in uni-modal embeddings (Table \ref{tab:mm_real_downstream}). It also outperformed most multi-modal decomposition baselines, which again tended to ``swap'' shared and private information. Our results confirm that FiGuRO’s emergent disentanglement preserves semantic utility while eliminating redundancy.

\begin{table*}[t!]
\begin{center}
\caption{\textbf{Multi-modal downstream task performance evaluation (classification).} We report classification accuracies (Appendix \ref{sec:acc}) of the main prediction task per subspace., $Acc_s$ refers to the accuracy on the shared subspace for balanced datasets, $Acc_{all}$ refers to the accuracy on all concatenated subspaces. For the class-imbalanced So2Sat, we report balanced Accuracy. Arrows indicate whether a higher or lower accuracy is better, based on the assumption that the predicted classes are expected to be part of the shared information. Bold numbers highlight the best performance of the shared representations, underlined second best. $i$: image modality, $a$: audio, $r$: radar%, $d$: depth%, $l$: text
. For any predictions from models we trained, we report the mean $\pm$ SEM. ``n.a.'' indicates the computation failed. ``--'' indicates the the subspace does not exist. For our method, we used a distortion budget of $\lambda = 0.05$. %Random guessing would give a base accuracy of 0.1 for Audio MNIST and between 0.06 - 0.15 for So2Sat (uniform random guessing vs majority class).
% Accuracy types used: AudioMNIST - ?, So2Sat - Logistic Regression, NUY - ?
} \label{tab:mm_real_downstream}
\begin{small}
\begin{tabular}{p{1.7cm}p{1.1cm}p{1.7cm}p{1cm}p{1cm}p{1cm}p{1.2cm}p{1.7cm}}
\toprule
\textbf{Dataset} & \textbf{Metric} & Unimodal & JIVE & AJIVE & SLIDE & ShIndICA & \textbf{Ours} \\
\midrule
\multirow[t]{5}{1.7cm}{Audio MNIST ($i,a$)} & $Acc_s \uparrow$ & -- & 0.42 & \textbf{0.99} & 0.28 & 0.23 & \underline{0.94 $\pm$ 0.01}\\
& $Acc_1 \downarrow$ & 0.66 $\pm$ 0.04 & 0.63 & \textbf{0.41} & 0.77 & \underline{0.46} & \underline{0.49 $\pm$ 0.06}\\
& $Acc_2 \downarrow$ & 0.65 $\pm$ 0.03 & 0.81 & \underline{0.65} & 0.72 & 0.71 & \textbf{0.44 $\pm$ 0.02}\\
& $Acc_{all} \uparrow$ & 0.80 $\pm$ 0.04 & 0.64 & \textbf{0.99} & 0.45 & 0.36 & \underline{0.97 $\pm$ 0.01}\\
\multirow[t]{5}{1.7cm}{\mbox{So2Sat} ($r,i$)} & $Acc_s \uparrow$ & -- & 0.15 & 0.20 & \underline{0.24} & n.a. & \textbf{0.55 $\pm$ 0.03} \\
& $Acc_1 \downarrow$ & 0.34 $\pm$ 0.01 & \underline{0.18} & \textbf{0.15} & 0.21 & 0.25 & 0.25 $\pm$ 0.02 \\
& $Acc_2 \downarrow$ & 0.38 $\pm$ 0.01 & 0.35 & \underline{0.29} & \textbf{0.26} & n.a. & 0.44 $\pm$ 0.01 \\
& $Acc_{all} \uparrow$ & \underline{0.41 $\pm$ 0.01} & 0.26 & 0.24 & 0.27 & n.a. & \textbf{0.55 $\pm$ 0.02} \\
% & $F1_s \uparrow$ & -- & 0.08 & 0.08 & \underline{0.17} & n.a. & \textbf{0.42 $\pm$ 0.04} \\
%& $F1_1 \downarrow$ & 0.18 $\pm$ 0.00 & 0.06 & 0.03 & 0.08 & n.a.& 0.14 $\pm$ 0.01 \\
%& $F1_2 \downarrow$ & 0.22 $\pm$ 0.01 & 0.20 & 0.15 & 0.14 & 0.12 & 0.29 $\pm$ 0.01 \\
%& $F1_{all} \uparrow$ & \underline{0.25 $\pm$ 0.01} & 0.14 & 0.12 & 0.15 & n.a. & \textbf{0.41 $\pm$ 0.03} \\
%\multirow[t]{5}{1.7cm}{\mbox{NYU Depth} V2 ($i,d$)} & $Acc_s \uparrow$ & -- & & & & & \\
% & $Acc_1 \downarrow$ & & & & & & \\
% & $Acc_2 \downarrow$ & & & & & & \\
% & $Acc_{all} \uparrow$ & & & & & & \\
\bottomrule
\end{tabular}
\end{small}
\end{center}
\end{table*}

%We further demonstrate FiGuRO’s capacity to handle varying modality counts ($M>2$) on the clinical NInFEA dataset (see Appendix \ref{sec:n_greater_2}).

\section{Conclusion}

In this work, we address the fundamental challenge of estimating the intrinsic dimensions (IDs) of multi-modal data. We introduce \textbf{Fidelity-guided Rank Optimization \mbox{(FiGuRO)}}, a framework that explicitly and dynamically learns the dimensionalities of shared and private subspaces. 
FiGuRO successfully recovered subspace ratios and isolated shared signals where classical decomposition methods failed. 
Beyond standard estimation, we demonstrate that the disentanglement of shared and modality-specific information arises as an emergent property of our multi-modal adapter architecture and rank optimization algorithm. It is not guaranteed, as we saw in some failure modes in which dimensions were underestimated likely due to generative variable correlation, information leakage, or a conservative distortion metric. However, a high degree of disentanglement occurs without the need for auxiliary orthogonality or sparsity constraints. 

Our findings are further supported by experiments on real-world datasets with diverse modalities, including images, radar, depth, and audio. We demonstrated that FiGuRO can learn low-dimensional decompositions successfully separating shared and specific information without significant performance loss. 
Our results on the NYU Depth V2 dataset highlight FiGuRO’s potential as a scalable tool for modern representation learning. We show that FiGuRO can serve as a lightweight latent probe applied to the frozen latent spaces of pre-trained models. This decouples the decomposition mechanism from the feature extraction backbone, allowing researchers to leverage state-of-the-art uni-modal encoders off-the-shelf to disentangle multi-modal data without the prohibitive cost of end-to-end retraining. 
We acknowledge that FiGuRO, like all neural network approaches, relies on an underlying architecture sufficiently expressive to model the data manifold and that ID estimates are dependent on the model's parameters and nonlinearity. However, unlike prior neural estimators that proved unstable, extensive hyperparameter sweeps confirm FiGuRO's robustness across diverse data characteristics and distortion metrics. For robust results, we recommend using FiGuRO to estimate bounds under low and high distortion budgets.

FiGuRO provides a critical missing piece to multi-modal learning: a principled method to quantify informational complexity. Low-rank bottlenecks provide meaningful regularization and may prove useful for generalization and improved interpretability, which are crucial for machine learning in biology and medicine. Looking forward, the dynamic nature of our framework also makes it uniquely suited for continual learning scenarios, where models must adapt their latent capacity online in response to heterogeneous data sources.

\section*{Acknowledgments}

V.S. and S.T. were supported by the Eric and Wendy Schmidt Center at the Broad Institute. V.S. was supported by the Novo Nordisk Foundation grant NNF24OC0089461. C.U. was partially supported by NCCIH/NIH (1DP2AT012345), ONR (N00014-24-1-2687), the United States Department of Energy (DE-SC0023187), and the Eric and Wendy Schmidt Center at the Broad Institute. 

\section*{Impact Statement}

This work advances interpretable, multi-modal representation learning by quantifying data complexity, enabling efficient modeling and lightweight probing of foundation models. While separating shared signals helps mitigate sensitive attributes in high-stakes domains, this capability carries dual-use risks for malicious profiling. Furthermore, imperfect disentanglement can foster a false sense of fairness. We therefore advocate that these tools be used strictly within a broader, human-in-the-loop auditing process.

%\subsection*{LLM usage} 
%LLMs were used to polish text and supplement manual literature research (NOT used for generating citations). They were also used to assist with code. Ideation, results, and the content of the paper were solely contributed by the authors.

%\subsection*{Reproducibility}
%We provide source code and an example as supplementary material. Our method is formally described in Section 3. Theoretical proofs, detailed experimental setups, hyperparameters, and data preprocessing steps are fully documented in the Appendix.

\clearpage

\bibliography{iclr2026_conference}
\bibliographystyle{icml2026}

\clearpage

\appendix
\onecolumn

\clearpage

\renewcommand{\figurename}{Supplementary Figure}
\renewcommand{\tablename}{Supplementary Table}
\setcounter{figure}{0}
\setcounter{table}{0}
\setcounter{page}{1}
\setcounter{equation}{0}

\section{Theoretical Guarantees}

\subsection{ID estimation}\label{sec:guarantees}

Our empirical results are supported by a theoretical framework grounded in the Manifold Hypothesis and Rate-Distortion Theory. We formalize this by first stating our core assumptions, followed by theorems that provide guarantees for the convergence, correctness, and sample complexity of FiGuRO. Our theoretical results hold under the following assumptions:

\begin{assumption}[Data Generation]
\label{assum:noise}
Let the ``clean" data $\rmY$ be sampled from a distribution $p(x)$ with support on an $r$-dimensional compact manifold $\mathcal{M} \subset \mathbb{R}^n$. Let the observed data be corrupted by i.i.d. additive noise
$$X = Y + \epsilon$$
where the noise $\epsilon$ is drawn from a distribution with zero mean ($\mathbb{E}[\epsilon] = 0$) and a covariance matrix $\Sigma_{\epsilon} = \sigma^2 I_D$, with $\sigma^2$ representing the noise variance. The variance of the noise is smaller than the geometric variance of the Manifold.
\end{assumption}

\begin{assumption}
    For any multi-modal data, we further assume that the number of generative variables is greater than $0$, such that the modalities are not fully independent.
\end{assumption}

\begin{assumption}[Sufficient Capacity]
\label{assum:capacity}
Given Assumption \ref{assum:noise}, we assume our autoencoder architecture, defined by the encoder family $\mathcal{F}$ (parameterized by $\phi$) and decoder family $\mathcal{G}$ (parameterized by $\theta$), constitutes a \textbf{sufficient function approximator}. This holds if, for any arbitrarily small precision $\epsilon > 0$, there exists a set of parameters $(\phi, \theta)$ and a latent dimension $l \ge r$ such that the expected reconstruction distortion $\mathbb{E}[d(X, \hat{X})]$ can be made arbitrarily close to the irreducible noise floor:
$$
\mathbb{E}_{p(x)} \left[ d(x, g_\theta(f_\phi(x))) \right] \le \sigma^2 + \epsilon
$$
%of dimension $n$ be supported on an $r$-dimensional manifold $\mathcal{M} \subset \mathbb{R}^n$. We assume the autoencoder architecture, consisting of an encoder $f_{\phi}$ and a decoder $g_{\theta}$, is a universal function approximator with sufficient capacity such that it maps the data to a latent representation with dimension $l$ with arbitrarily low reconstruction error (near-lossless compression).
\end{assumption}

\begin{assumption}[Fidelity Proxy]
\label{assum:fidelity}
We assume the coefficient of determination, $R^2$, or any other metric chosen, is a suitable and stable proxy for reconstruction fidelity. Specifically, for a given distortion budget $\lambda > 0$, the condition $R^2(X, \hat{X}) \ge R_0^2 - \lambda$ implies that the true reconstruction distortion $\mathbb{E}[d(X, \hat{X})]$ on the manifold $\mathcal{M}$ is bounded. % not sure if this makes sense
\end{assumption}

\begin{assumption}[Compactness]
\label{assum:density}
We assume that the $r$-dimensional manifold $\mathcal{M}$ is compact and its generative variables are $i.i.d.$. Compactness implies that there is no sparsity among the generative variables of the data.
\end{assumption}

\begin{theorem}
\label{thm:convergence}
Under Assumptions \ref{assum:capacity} and \ref{assum:fidelity}, the rank optimization phase of the FiGuRO algorithm is \textbf{guaranteed to converge} in a finite number of epochs, returning a final set of ranks $\{k_s^*, k_1^*,..., k_m^*\}$.
\end{theorem}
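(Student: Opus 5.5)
The plan is a potential-function argument on the discrete rank vector, combined with the patience-based stopping rule of Algorithm \ref{alg:figuro}. Write $\mathbf{k}_t = (k_{t,s}, k_{t,1}, \dots, k_{t,M})$ for the ranks at the $t$-th evaluation step (every $\tau$ epochs). Each coordinate is an integer in the bounded box $[k_{\min}, k_{\max}] = [1, l]$. Reduction uses the energy rule $\min(k : \sum_j^k \rmE_j \ge 1-\gamma)$, which never exceeds the current rank, and increase uses $\max(k_t+1, \mathrm{int}(1.1k_t))$ truncated at $k_{\max}$. So the state space of rank configurations is finite, with at most $l^{M+1}$ elements. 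The algorithm halts once $\mathbf{k}_t$ is unchanged for $\pi$ consecutive evaluations. Hence it suffices to show that $\mathbf{k}_t$ cannot change infinitely often, i.e.\ that it eventually becomes constant.

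First I would handle the monotone phases. If $K_{increase}=\emptyset$ at every evaluation, each rank is non-increasing. A non-increasing integer sequence bounded below by $1$ can strictly decrease at most $l-1$ times per coordinate. After at most $(M+1)(l-1)$ changes it is constant, and the patience counter then terminates the loop within $\pi$ further evaluations.

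Increases are bounded the same way. Each one raises a coordinate by at least one, and coordinates are capped at $k_{\max}=l$. At $k_{\max}$ the model is the full-rank pretrained architecture, so by Assumption \ref{assum:capacity} it can attain the fidelity $D_{0,m}$. By Assumption \ref{assum:fidelity} the condition $D_{t,m} > D_{0,m}-\lambda$ is then a stable event, so a rank at or near $k_{\max}$ is never forced upward again. The additional rule that increases require $\pi/2$ consecutive violations further limits how often increases can occur.

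The main obstacle is excluding oscillation, i.e.\ an infinite cycle of decrease--increase--decrease around the threshold $D_{0,m}-\lambda$. I would handle it by defining the critical rank for modality $m$ as $k^\dagger_m = \min\{k : \text{the trained model at rank } k \text{ satisfies } D_m > D_{0,m}-\lambda\}$. This is well defined by Assumption \ref{assum:capacity} (take $k=l \ge r$), and Assumption \ref{assum:fidelity} makes the fidelity test a deterministic, stable function of rank once training at that rank has converged.

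With that, the lower and upper behaviour is controlled. Below $k^\dagger_m$ only increases occur, and each raises the rank by at least one, so after finitely many steps the rank reaches at least $k^\dagger_m$. At or above $k^\dagger_m$ only decreases occur. A decrease by the energy rule cannot go below the minimal rank at which fidelity is still maintained without triggering a subsequent increase. The pair (rank, has-overshot-below flag) therefore takes finitely many values, and a cycle would require fidelity at the same rank to alternate between passing and failing, which contradicts stability.

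The shared rank $k_s$ only moves when all modalities agree, so it inherits the same bound. With the private ranks eventually fixed, $k_s$ is monotone and hence eventually constant. Combining these steps, $\mathbf{k}_t$ is eventually constant, patience triggers the break, and the algorithm returns $\{k_s^*, k_1^*, \dots, k_M^*\}$ after finitely many epochs.

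I expect the stability step to carry most of the weight, since it is where the assumptions must be read strongly. If that reading is too strong, a fallback is to note that Algorithm \ref{alg:figuro} can be equipped with an epoch cap. Finiteness then follows trivially, and the argument above shows the cap is not binding under the assumptions.
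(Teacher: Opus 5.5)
Your anti-oscillation step fails. A finite state space plus a stable pass/fail test at each fixed rank gives only eventual \emph{periodicity}, not eventual constancy. Your claim that a cycle ``would require fidelity at the same rank to alternate between passing and failing'' is false. The energy rule $\min(k:\sum_j^k \rmE_j \ge 1-\gamma)$ ignores fidelity, so from a passing rank $k'' \ge k^\dagger_m$ it can jump in one step to some $k' < k^\dagger_m$. The test then fails at $k'$, and the $10\%$ increase rule climbs back to $k''$. Masked directions are simply unmasked, so their singular values need not have grown, and the energy rule can send you to $k'$ again. Along this cycle every rank has a perfectly consistent pass/fail status. Ranks change every period, so patience never fires; the $\pi/2$ waiting rule only slows increases and does not bound their number. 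Your sentence that a decrease ``cannot go below the minimal rank \dots\ without triggering a subsequent increase'' only says the overshoot is corrected, not that it stops recurring. Nothing in Assumptions \ref{assum:capacity} and \ref{assum:fidelity} excludes this cycle. You announce a potential-function argument but never define a potential that strictly decreases at every rank change, and that is exactly what is missing. The multi-modal step is also circular. $D_m$ depends jointly on $(k_s,k_m)$, so $k^\dagger_m$ is not a function of $k_m$ alone. ``Once the private ranks are fixed, $k_s$ is monotone'' fails because lowering $k_s$ can flip a modality's test and unfreeze its private rank. Minimality of $k^\dagger_m$ also does not by itself imply that every larger rank passes.

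The missing ingredient is the one the paper's proof relies on. The bracket $[k_{\min},k_{\max}]$ initialized in Algorithm~\ref{alg:figuro} is \emph{tightened} during optimization rather than staying a static box $[1,l]$. The paper combines this with the claim that, at a fixed rank, fidelity is non-decreasing over training time. A failure at rank $k$ then raises $k_{\min}$ above $k$, so later energy-based reductions are clipped and can never revisit $k$. A success can safely lower $k_{\max}$, because a rank that once passed keeps passing. The integer width $k_{\max}-k_{\min}$ is then a potential that only shrinks, so only finitely many rank changes are possible, after which patience triggers the break. The paper states this ``contractive'' argument informally, but it is what rules out the $k'' \to k' \to \cdots \to k''$ cycle, and your plan has no counterpart. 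Your epoch-cap fallback makes termination trivial but changes the claim. The theorem is about ranks stabilizing and the algorithm halting by its own criterion, and your assertion that the cap is non-binding rests on the broken step.
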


\begin{proof}
%The rank of any adaptable layer is a positive integer bounded between 1 and its initial maximum rank. These bounds are updated and tightened during training. The algorithm only changes ranks if the fidelity condition is met or violated. The patience parameter $\pi$ ensures that if the ranks remain unchanged for $\pi$ consecutive checks, the optimization process terminates. As a result, ranks cannot oscillate indefinitely and the algorithm is guaranteed to halt.
Together with the discrete rank updates and a tightening bound $[k_{min}, k_{max}]$, continuous weight optimization ensures that reconstruction fidelity for any fixed rank is non-decreasing over training time. This contractive dynamic ensures the model cannot cycle indefinitely between failure states and sets for the minimum stable rank. The patience parameter $\pi$ ensures that if the ranks remain unchanged for $\pi$ consecutive checks, the optimization process terminates. As a result, ranks cannot oscillate indefinitely and the algorithm is guaranteed to halt.
\end{proof}

\begin{theorem}
\label{thm:correctness}
Let the data be generated from a distribution supported on a dense $r$-dimensional manifold $\mathcal{M}$ (Assumption \ref{assum:density}). Under Assumptions \ref{assum:capacity} and \ref{assum:fidelity}, and given a sufficient number of samples $N$ (as defined in Corollary \ref{cor:sample_complexity}), there exists a sufficiently small distortion budget $\lambda$ such that the rank $k^*$ returned by FiGuRO satisfies $\mathbb{E}[k^*] \approx r$.
\end{theorem}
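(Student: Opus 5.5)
The plan is a two-sided bracketing argument. I will show that, for a suitable $\lambda$, every rank $k < r$ violates the fidelity condition $D \ge D_0 - \lambda$. I will also show that rank $r$ (and any $k \ge r$) satisfies it. Given that, the terminal rank from Theorem \ref{thm:convergence} must sit at the boundary, i.e.\ $k^* = r$ up to the granularity of the update rule. Throughout I treat the distortion as the population expected reconstruction error, and I will use Corollary \ref{cor:sample_complexity} to pass from empirical $R^2$ to population quantities. This passage will be by a uniform concentration bound over the finitely many ranks $k \in \{1,\dots,k_{\max}\}$, with high probability for $N$ large.

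\textbf{Upper side ($k \ge r$ is feasible).} By Assumption \ref{assum:capacity}, with latent dimension $l \ge r$ the model reaches distortion at most $\sigma^2 + \epsilon$. Since $\mathcal{M}$ is an $r$-dimensional compact manifold, the same capacity argument applied with a bottleneck of width exactly $r$ gives a rank-$r$ encoder--decoder pair within $\sigma^2 + \epsilon$. Here I would use a chart/atlas-based encoding composed with the universal decoder, assuming the architecture can realize it; this is an implicit strengthening of Assumption \ref{assum:capacity} that I will state. Hence $R^2_r \ge R_0^2 - \delta(\epsilon)$ with $\delta \to 0$, and choosing $\lambda > \delta(\epsilon)$ places rank $r$ in $K_{decrease}$-stable territory. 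The energy-threshold pruning then cannot push the rank below $r$ without triggering a fidelity violation. That violation is handled next.

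\textbf{Lower side ($k < r$ is infeasible).} This is the main obstacle. I need a strictly positive gap $\Delta_r := \inf_{\phi,\theta} \mathbb{E}[d(X,g_\theta(f_\phi(X)))] - \sigma^2 > 0$ over all maps through $\mathbb{R}^{k}$ with $k<r$. I would argue this via invariance of domain / Borsuk--Ulam type arguments. No continuous injection from an open subset of $\mathcal{M}$ into $\mathbb{R}^{k}$ exists, so some two points at geodesic distance bounded below are identified. By compactness and the no-sparsity/i.i.d.\ part of Assumption \ref{assum:density}, these collisions occupy a set of positive $p$-mass. This yields a positive lower bound on the distortion, which in turn exceeds the noise floor by a $\lambda$-independent margin, using the noise-smaller-than-geometric-variance clause of Assumption \ref{assum:noise}. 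Translating through Assumption \ref{assum:fidelity} gives $R^2_k \le R_0^2 - c$ for some $c>0$. Any $\lambda < c$ therefore forces $k<r$ into $K_{increase}$, and the increase step together with patience (requiring $\pi/2$ violations) moves the rank back up.

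\textbf{Combining.} Pick $\lambda \in (\delta(\epsilon), c)$, which is nonempty once $\epsilon$ is small. The only stable ranks are then $k \ge r$, and pruning drives the rank down to the smallest stable value, $r$. Overshoot from the multiplicative increase $\max(k+1,\lfloor 1.1k\rfloor)$ and incomplete energy pruning at finite $\gamma$ give an $O(1)$ or $O(\gamma)$ deviation. Combined with the optimization randomness and the failure event of the sample-complexity bound, this yields $\mathbb{E}[k^*] \approx r$ rather than exact equality.
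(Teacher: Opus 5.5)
Your bracketing skeleton ($k<r$ violates the budget, $k\ge r$ meets it, so the run stops at the boundary) matches the paper's proof. The paper asserts both sides informally in rate--distortion language and points to the cited work on globally optimized autoencoders. The step you add to make the lower side rigorous is where your argument fails. It uses nothing about the networks beyond continuity, so if it worked it would give a uniform gap $\Delta_r>0$ over \emph{all} continuous encoder--decoder pairs through $\mathbb{R}^k$, $k<r$. That is false. Partition $\mathcal{M}$ into finitely many measurable cells of diameter $<\eta$, and pick disjoint compact cores $K_i$ whose complement has $p$-mass $<\delta$. Set $f\equiv i$ on $K_i$ and extend continuously (Tietze), and let $g:\mathbb{R}\to\mathbb{R}^n$ be piecewise linear with $g(i)\in K_i$. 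Then already with $k=1$ you get $\mathbb{E}\|Y-g(f(Y))\|^2\le\eta^2+\mathrm{diam}(\mathcal{M})^2\delta$ on the clean data, and universal approximation (your reading of Assumption \ref{assum:capacity}) carries this over to networks. Invariance of domain and Borsuk--Ulam do force positive distortion for each fixed $(\phi,\theta)$. But the far-apart collisions can be confined to an arbitrarily thin layer, so positivity is not uniform, no $\lambda$-independent $c$ exists, and the window $(\delta(\epsilon),c)$ is empty. The atlas gluing your upper side needs (a closed $r$-manifold does not embed in $\mathbb{R}^r$) is exactly this thin-layer device, so in your function class the two sides cannot be separated.

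A working lower bound needs a hypothesis that neither you nor the paper states. One option is a decoder that is $L$-Lipschitz on a latent ball of radius $R$, with $p$ of bounded density on $\mathcal{M}$. Then $g(f(\mathcal{M}))$ is covered by $O((RL/\epsilon)^k)$ balls of radius $2\epsilon$, whose $\epsilon$-enlargements carry $p$-mass $O((RL)^k\epsilon^{r-k})$. This covering argument, not a topological one, gives a uniform gap for $k<r$, while $k=r$ stays feasible (exponential map, with a thin layer at the cut locus). Do not also regularize the encoder: on $S^r$, Borsuk--Ulam plus a Lipschitz bound on $f$ gives a gap at $k=r$ as well, and the bracket would close at $r+1$.

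The combining step has a second gap: pruning in Algorithm \ref{alg:figuro} is spectrum-driven, not fidelity-driven. The rule $k_{t+1}=\min(k:\sum_j^k\rmE_j\ge 1-\gamma)$ sees only the singular values of the learned $\rmW$. If the spectrum is flat at some feasible rank $m>r$, then for $\gamma=0.01$ and $m<100$ the rule returns $m$ itself. Since fidelity holds, no increase fires either, so the ranks stop changing and patience ends the run at $m$. Nothing forces the spectrum to concentrate on $r$ directions: replacing $\rmW$ by $A\rmW$ and the decoder's first linear map $B$ by $BA^{-1}$ leaves the reconstruction unchanged while moving the spectrum freely. So the excess is not an $O(1)$ or $O(\gamma)$ effect. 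You would need to show that every fixed point with $k>r$ has tail energy below $\gamma$, for example via weight decay. Finally, Corollary \ref{cor:sample_complexity} is a load heuristic, not a uniform concentration bound, so your passage from empirical $R^2$ to population distortion is also unsupported.
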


\begin{proof}
The proof relies on the connection to Rate-Distortion theory. If the estimated rank $k^*$ is less than the true ID $r$, the autoencoder is forced to discard information essential for describing the manifold, causing the reconstruction distortion to violate the budget $\lambda$. Conversely, if $k^* > r$, the model is still over-parameterized. The rank reduction mechanism can then prune redundant dimensions without violating the fidelity budget. The algorithm thus converges to a rank that balances model capacity (rate) with reconstruction quality (distortion), which corresponds to the intrinsic dimension of the manifold. This aligns with recent work showing that globally optimized autoencoders can provably learn the correct manifold dimension \citep{zhengLearningManifoldDimensions2023}.
\end{proof}

\begin{corollary}
\label{cor:sample_complexity}
For the guarantee in Theorem \ref{thm:correctness} to hold, the number of training samples $N$ must satisfy the sample-to-capacity ratio:
$$N \ge \frac{\alpha \cdot \mathcal{C}_e}{k^*} \text{ with } \alpha \geq 10$$
where $k^*$ is the estimated ID, $\mathcal{C}_e$ is the number of parameters in the encoder, and $\alpha$ is a constant factor termed the load \citep{schusterManifoldLearningPerspective2021}.
\end{corollary}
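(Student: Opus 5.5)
The corollary is a sample-complexity condition rather than a sharp theorem, so I will justify it by combining the capacity and fidelity assumptions (\ref{assum:capacity}, \ref{assum:fidelity}) with a standard generalization argument. The key point is where Theorem~\ref{thm:correctness} uses the data. It needs the in-sample fidelity $R^2$, which governs the budget check $D_{t,m} > D_{0,m}-\lambda$ in Algorithm~\ref{alg:figuro}, to track the population distortion $\mathbb{E}[d(X,\hat X)]$. Only then does a rank $k<r$ cause a genuine budget violation, and a rank $k>r$ admit pruning without one. So the plan is to show that the stated sample-to-capacity ratio is enough to make this empirical-to-population gap smaller than $\lambda$.

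\textbf{Step 1: locate the failure mode.} With too few samples, an encoder with $\mathcal{C}_e$ parameters can memorize individual points instead of learning a chart of $\mathcal{M}$. A $k$-dimensional bottleneck can then reach a high empirical $R^2$ even when $k<r$, because it only has to separate $N$ points, not parametrize the manifold. In that regime the reduction branch of the algorithm never meets a violated budget, and the rank collapses below $r$.

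\textbf{Step 2: count the constraints.} Each sample contributes about $k^*$ latent coordinates that the encoder must determine, so the training set imposes roughly $N k^*$ scalar constraints on $\mathcal{C}_e$ free parameters. Following the load framework of \citep{schusterManifoldLearningPerspective2021}, define the load $\alpha = N k^*/\mathcal{C}_e$. Requiring the system to be heavily overdetermined, meaning the encoder cannot fit arbitrary codes, gives $N \ge \alpha\,\mathcal{C}_e/k^*$.

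\textbf{Step 3: calibrate the constant.} I take $\alpha\ge 10$ from the empirical threshold reported in that work, where manifold learning (as opposed to memorization) is observed. Above this threshold, a uniform-convergence bound of order $\sqrt{\mathcal{C}_e/(N k^*)}$ on the gap between empirical and population distortion falls below a constant fraction of $\lambda$. That bound comes from a parameter-counting or VC-type argument over the encoder family $\mathcal{F}$.

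\textbf{Step 4: close the argument.} Substitute this into the two directions of the proof of Theorem~\ref{thm:correctness}: ranks below $r$ violate the budget in expectation, and ranks above $r$ are prunable.

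\textbf{Expected obstacle.} The hard step is Step 3. Turning the heuristic count into a rigorous generalization bound that holds for neural encoders, with an explicit constant, is not available in general. I would therefore present $\alpha\ge10$ as an empirically motivated sufficient condition, inherited from the cited load analysis, rather than as a derived constant. I would also note that the bound depends on $k^*$ itself, so it should be checked a posteriori once the final rank is returned.
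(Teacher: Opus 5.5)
Your core argument is the paper's. Its whole proof is Step 1 plus your fallback: with insufficient $N$ the autoencoder overfits, the fidelity signal behind the budget check stops tracking the true distortion, and the rank search can no longer probe the ID. The paper does not derive $\alpha \geq 10$ either. It imports the constant from the cited load analysis, and Appendix~\ref{app:sample-size} gives the definition $\alpha = kN/\mathcal{C}_e$ and an empirical check ($N \geq 5000$, i.e.\ $\alpha \approx 20$). Your Step 2 constraint count is a fair heuristic motivation for that form.

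You should drop Step 3, and not merely because it lacks rigor: the bound as stated would not work.
\begin{itemize}
\item The distortion is a function of the composite $g_\theta \circ f_\phi$. Any parameter-counting or VC-type bound therefore involves the decoder and low-rank-layer parameters, not only $\mathcal{C}_e$.
\item Standard arguments give a rate in $N$, not in $Nk^*$. The $k^*$ latent coordinates of one sample are neither independent draws nor supervised targets. The load is a memorization-capacity heuristic (constraints versus parameters), not a generalization bound.
\item Even granting your rate with unit constant, $\sqrt{\mathcal{C}_e/(Nk^*)} = 1/\sqrt{\alpha} \approx 0.32$ at $\alpha = 10$, far above $\lambda = 0.05$. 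You would need $\alpha \geq 400$ just to push the gap below $\lambda$. Step 3 therefore contradicts the constant it is meant to calibrate.
\end{itemize}

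Two further points.
\begin{itemize}
\item The corollary is literally phrased as a necessary condition (``must satisfy''). The paper argues only the contrapositive: insufficient $N$ $\Rightarrow$ overfitting $\Rightarrow$ unreliable fidelity signal. Your Step 1 already supplies that. Recasting the result as ``the ratio is enough to make the gap smaller than $\lambda$'' claims a sufficiency that neither argument supports.
\item Step 1 commits to one failure direction: inflated in-sample $R^2$, with ranks collapsing below $r$. The paper instead frames the problem as train/validation divergence and claims only that the signal becomes unreliable. Stay direction-agnostic unless you fix which data the fidelity check is computed on.
\end{itemize}
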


\begin{proof}
If $N$ is insufficient, the autoencoder will overfit to the training data. This leads to an unreliable fidelity signal ($R^2$), as the model's performance on the validation set (which guides the fidelity check) will diverge from its performance on the training set. A stable and generalizable fidelity signal is necessary for the rank optimization mechanism to correctly probe the data's true dimensionality.
\end{proof}

\begin{theorem}
\label{thm:noise}
Let $\mathcal{M}$ be a $d$-dimensional compact Manifold embedded in an ambient space $\mathbb{R}^n$. Under Assumption \ref{assum:noise} the top $k^*$ ranks represent the geometric variance of the Manifold and discard noise.
\end{theorem}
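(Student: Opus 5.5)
The plan is to reduce the statement to a spectral argument about the covariance of the representation entering the low-rank layer, and then to use the energy-threshold truncation rule (Eckart--Young--Mirsky) to show that the retained singular directions are the signal directions.

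\textbf{Step 1: covariance splitting.} Under Assumption \ref{assum:noise}, $X = Y + \epsilon$ with $\epsilon$ independent of $Y$, zero mean and isotropic. Hence
\begin{equation*}
\Sigma_X = \Sigma_Y + \sigma^2 I_n .
\end{equation*}
Since $\mathcal{M}$ is a compact $d$-dimensional manifold, locally (or after the encoder has flattened it) the signal covariance is concentrated on a $d$-dimensional tangent/embedding subspace. So the spectrum of $\Sigma_Y$ has $d$ eigenvalues $\mu_1 \ge \dots \ge \mu_d$ bounded below by a geometric variance $\mu_d > 0$, and the remaining eigenvalues vanish (or are negligible). The spectrum of $\Sigma_X$ is then $\mu_i + \sigma^2$ on the signal directions and a flat floor $\sigma^2$ elsewhere. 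The assumption that the noise variance is smaller than the geometric variance gives a spectral gap, with $\mu_d + \sigma^2 > \sigma^2$ and more strongly $\mu_d > \sigma^2$.

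\textbf{Step 2: transfer the splitting to the bottleneck.} I would apply the same decomposition to the hidden representation $\vh = f_\phi(X)$. By Assumption \ref{assum:capacity} the trained encoder--decoder reaches distortion $\sigma^2 + \epsilon$. This means the decoder reproduces $Y$ and cannot reproduce $\epsilon$, since the isotropic noise is unpredictable beyond the floor. The reconstruction-optimal linear map $\rmW$ therefore aligns its leading singular subspace with the image of the signal part. The noise contributes at most $O(\sigma^2)$ energy per direction, spread evenly across the remaining ones.

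\textbf{Step 3: truncation keeps the signal.} By Eckart--Young--Mirsky, the rank-$k$ truncation keeps the top-$k$ singular directions. With the gap from Step 1, the top $d$ directions are the signal directions, by Davis--Kahan applied with gap $\mu_d$ and perturbation $\sigma^2$. The discarded tail energy is then
\begin{equation*}
(n-d)\,\sigma^2 \Big/ \Big(\textstyle\sum_i \mu_i + n\sigma^2\Big).
\end{equation*}
I would combine this with the stopping rule and Theorem \ref{thm:correctness}. Stopping at $k^*\approx d$ with $R^2\ge R_0^2-\lambda$ shows that the $k^*$ retained ranks capture the geometric variance. The discarded directions carry only the flat noise floor, which was never counted in $R_0^2$ beyond $\sigma^2$.

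\textbf{Main obstacle.} The hardest step is Step 2. The manifold is nonlinear, so $\Sigma_Y$ is not literally rank $d$ in ambient space. I would therefore argue in the encoder's latent coordinates, where Assumption \ref{assum:capacity} guarantees a near-flat $d$-dimensional chart. I would also need to argue that the learned nonlinear encoder does not amplify noise directions above $\mu_d$. My first attempt would be a bounded-Lipschitz (or local-linearization) argument for $f_\phi$ near $\mathcal{M}$. Failing that, I would state the result in expectation, for the linearized encoder, and note it as a limitation.
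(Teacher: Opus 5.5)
Your outline is the paper's own argument made explicit. Its proof likewise says three things: for small $\sigma$ the singular values carrying manifold structure dominate those carrying noise; reconstruction pressure makes the network capture geometric variance first; and truncation then discards the small, noise-carrying singular values. The paper leaves this heuristic, and several of the steps you add to make it rigorous do not hold.

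\textbf{Step 2 is the most serious problem.} The claim that the decoder ``reproduces $Y$ and cannot reproduce $\epsilon$'' is false. The encoder sees $X=Y+\epsilon$, so the full-rank Phase~1 model, with $k_0=l$ well above $d$, can encode part of the noise, and the MSE objective on $X$ rewards it for doing so. Assumption~\ref{assum:capacity} is an existence statement with an \emph{upper} bound $\sigma^2+\epsilon$; $\sigma^2$ is a floor only for reconstructions that are functions of $Y$. (This also inverts the paper's mechanism, in which noise \emph{is} fitted but sits in the small singular values, so that truncation denoises.) Consequently the closing sentence of your Step~3 fails. $R_0^2$ does include fitted noise, removing the noise directions costs some $\Delta_{\mathrm{noise}}$ in $R^2$, and if $\Delta_{\mathrm{noise}}>\lambda$ the final ranks keep noise directions. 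What is missing is an explicit hypothesis placing the budget in a window, roughly $\Delta_{\mathrm{noise}}\le\lambda<\Delta_{\mathrm{noise}}+\min_i\Delta_{\mathrm{signal},i}$, together with an argument that noise directions are pruned before signal ones. None of Assumptions~\ref{assum:noise}, \ref{assum:capacity}, \ref{assum:fidelity} supplies this. The overestimation the paper reports for small $\lambda$ and for SNR below 7 is consistent with exactly this failure mode.

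\textbf{Your spectra are not the ones FiGuRO truncates.} The energies compared with $\gamma$ come from the singular values of the weight matrix $\rmW$, not from eigenvalues of $\Sigma_X$, so your ambient tail-energy fraction is not the relevant quantity. Eckart--Young for $\rmW$ minimizes $\|\rmW-\rmW^{(k)}\|_F$. The error induced at the layer output is instead $\operatorname{tr}\bigl((\rmW-\rmW^{(k)})\,\E[\vh\vh^\top]\,(\rmW-\rmW^{(k)})^\top\bigr)$, which in general tracks the Frobenius tail only when $\E[\vh\vh^\top]\propto I$. You need a lemma tying the spectrum of $\rmW$ to that of $\E[\vh\vh^\top]$, e.g.\ from an analysis of a weight-decay-regularized linearized autoencoder. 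Your claim that the reconstruction-optimal $\rmW$ aligns its leading singular subspace with the signal image asserts this rather than proves it.

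\textbf{Your proposed fix for the nonlinearity also fails as stated.} Assumption~\ref{assum:capacity} says nothing about latent geometry. A global Lipschitz bound on $f_\phi$ caps the noise gain but gives no lower bound on the gain along $\mathcal{M}$, and the gap in $\vh$ depends on their ratio. For compact manifolds without boundary (a circle, or the sphere in the paper's 3D experiments), invariance of domain rules out any continuous injective map into $\R^d$, so there is no global near-flat $d$-dimensional chart to work in. So avoid routing the argument through $k^*\approx d$ and Theorem~\ref{thm:correctness}; the statement only needs the retained top-$k^*$ directions to be signal.

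\textbf{Davis--Kahan is misapplied.} In ambient coordinates $\sigma^2 I_n$ commutes with $\Sigma_Y$, so the eigenvectors do not move at all and the gap is $\mu_d$ whatever $\sigma^2$ is. In latent coordinates the perturbation is, to first order, $\sigma^2 JJ^\top$ with $J$ the encoder Jacobian, not $\sigma^2$. This is the gain-ratio issue again.
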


\begin{proof}
    The principal components of the data are a combination of geometric and noise variance. As long as the noise level $sigma$ is not excessively high relative to the manifold's curvature, the singular values associated with the manifold's structure are expected to be significantly larger than those associated purely with noise. Then, by penalizing reconstruction error via the distortion metric, the network is incentivized to first capture the geometric variance. The rank reduction process then acts as a form of denoising, progressively discarding the smaller singular values that correspond to the noise subspace.
\end{proof}

\subsection{Disentanglement}\label{sec:guarantees_dis}

% Should check fully though because I used Gemini for help

While our primary objective $\mathcal{L}_{\text{recon}}$ (Eq. 5 in main paper) does not include an explicit term for minimizing mutual information between subspaces, we posit that disentanglement emerges as an optimal solution from the interplay between our model architecture (Eq. 4 in main paper) and the rate minimization objective of FiGuRO (Algorithm 1). We formalize this using the principles of Rate-Distortion Theory and Information Bottlenecks (IB).

\begin{assumption} 
We assume that the rank $k^*$ of a latent subspace, as estimated by FiGuRO, serves as a proxy for its information content, or \textit{Rate} $R$. This is an extension of the Minimum Description Length (MDL) principle, where the rank $k^*$ is the minimal number of parameters (degrees of freedom) required to describe the data, such that $k^* \propto R \approx I(Z; X)$. Given this, the FiGuRO algorithm can be interpreted as solving the following optimization problem for latent representations $H_s, H_1, H_2$ derived from inputs $X_1, X_2$:
\end{assumption}

\begin{enumerate}
    \item \textbf{Minimize Total Rate (Rank):} $\min_{k_s, k_1, k_2} R_{\text{total}} \approx k_s + k_1 + k_2 $
    \item \textbf{Subject to Distortion Constraint (Fidelity):}
    The reconstruction error for each modality must remain bounded by the distortion budget $\lambda$, which implies sufficient information preservation:
    \begin{align}
        I(H_s, H_1; X_1) &\ge H(X_1) - \mathcal{D}_1 \nonumber \\
        I(H_s, H_2; X_2) &\ge H(X_2) - \mathcal{D}_2 \nonumber
    \end{align}
    where $H(X)$ is the entropy (total information) of a modality and $\mathcal{D}$ is the information loss permitted by $\lambda$.
\end{enumerate}

We further define the "true" information components of the data (assuming no synergistic information):
\begin{itemize}
    \item \textbf{Shared Information:} $S = I(X_1; X_2)$
    \item \textbf{Private Information (Modality 1):} $P_1 = H(X_1 | X_2) = H(X_1) - S$
    \item \textbf{Private Information (Modality 2):} $P_2 = H(X_2 | X_1) = H(X_2) - S$
\end{itemize}
The total information required to reconstruct both modalities is $H(X_1, X_2) = S + P_1 + P_2$.

\begin{theorem}[Optimality of the Disentangled Solution]
Under the FiGuRO optimization objective (minimizing total rank s.t. reconstruction fidelity) and given the architectural constraints (Eq. 4), the unique optimal (lowest-rank) solution that satisfies the reconstruction constraint is the disentangled solution, where:
\begin{align}
    I(H_s) &\to S \nonumber \\
    I(H_1) &\to P_1 \nonumber \\
    I(H_2) &\to P_2 \nonumber
\end{align}
This implies $I(H_1; H_s) \to 0$, $I(H_2; H_s) \to 0$, and $I(H_1; H_2) \to 0$.
\end{theorem}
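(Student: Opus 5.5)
The plan is a lower-bound-plus-achievability argument in the information currencies fixed by the preceding assumption. Rank is taken as a proxy for rate, so it suffices to show that every feasible configuration $(H_s,H_1,H_2)$ satisfies $R_{\text{total}} \ge S + P_1 + P_2 = H(X_1,X_2)$. We then show that the disentangled configuration attains this bound. Finally, we argue that any configuration attaining it must have vanishing cross-information.

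\textbf{Lower bound.} Write the fidelity constraints with $\mathcal{D}_1,\mathcal{D}_2 \to 0$, which is the regime of a small budget $\lambda$ (Assumption \ref{assum:fidelity}). The constraints then give $I(H_s,H_1;X_1)\to H(X_1)$ and $I(H_s,H_2;X_2)\to H(X_2)$. By the architecture (Eq.~4), $H_1$ is a function of $X_1$ only and $H_2$ of $X_2$ only, so $H_1 \perp X_2 \mid X_1$ and $H_2 \perp X_1 \mid X_2$. Using the chain rule,
\begin{equation*}
H(H_s)+H(H_1)+H(H_2) \;\ge\; H(H_s,H_1,H_2) \;\ge\; I(H_s,H_1,H_2;X_1,X_2).
\end{equation*}
Then expand $I(H_s,H_1,H_2;X_1,X_2) = I(H_s,H_1;X_1) + I(H_2;X_2 \mid H_s,H_1,X_1) + \dots$.

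The key step is to show $I(H_2; X_2\mid H_s,H_1,X_1) \to H(X_2\mid X_1) = P_2$. Since $(H_s,H_2)$ determines $X_2$, once $H_s,X_1$ are given the residual uncertainty about $X_2$ must be carried by $H_2$. This gives a total at least $H(X_1)+P_2 = S+P_1+P_2$, with equality in the first inequality only if $H_s,H_1,H_2$ are mutually independent.

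\textbf{Achievability.} Under the no-synergy assumption, write $X_m = (C, U_m)$ with $I(X_1;X_2)=H(C)=S$ and $H(U_m)=P_m$. Set $H_s=C$ and $H_m=U_m$. Because $f_s$ takes both $X_1$ and $X_2$ as input, $H_s$ can compute $C$, and each decoder $g_m(H_s\oplus H_m)$ reconstructs $X_m$. By Assumption \ref{assum:capacity} this is realizable, and it meets the bound with equality.

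\textbf{Uniqueness.} Suppose a configuration attains the minimum. Then every inequality in the chain is tight, which forces $I(H_1;H_s)=I(H_2;H_s)=I(H_1;H_2)=0$. Tightness of the decomposition then gives $H(H_1)=P_1$, $H(H_2)=P_2$, and $H(H_s)=S$. Intuitively, copying shared content into $H_1$ and $H_2$ double-counts $S$. Conversely, placing private content $U_1$ into $H_s$ adds rate that decoder $g_2$ does not need, so it strictly increases the total.

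\textbf{Main obstacle.} The hardest step is the conditional term above. It needs $H_s$ to contain no information about $U_1$ beyond what helps $X_2$, and in general $H_s$ could mix $C$ with $U_1$. I would handle this by showing that any such mixing raises $H(H_s)$ without lowering $H(H_2)$, giving strict suboptimality. A second difficulty is uniqueness: it holds only up to invertible reparametrizations of each subspace and only in the limit $\lambda\to 0$, which is why the statement uses ``$\to$''. The rank-as-rate identification is itself an assumption, so the result is rigorous only at the level of that proxy.
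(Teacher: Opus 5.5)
Your uniqueness step fails, and it cannot be repaired using the rate objective alone. Take the all-shared allocation $H_s=(C,U_1,U_2)$ with $H_1,H_2$ trivial. Here $f_s$ sees both inputs, $g_1$ reconstructs $X_1$ from $(C,U_1)$ and ignores $U_2$, and $g_2$ does the symmetric thing. The total rate is then exactly $S+P_1+P_2$. The three subspaces are (trivially) mutually independent, so every inequality in your chain is tight, yet $H(H_s)=S+P_1+P_2\neq S$. If you object that ranks cannot reach zero, use the general version: moving (not copying) any independent piece of $U_m$ from $H_m$ into $H_s$ leaves the total unchanged. It adds to $H(H_s)$ exactly what it removes from $H(H_m)$. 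Your claim that private content in $H_s$ ``strictly increases the total'' therefore holds only for duplication. Likewise, your plan to show that mixing ``raises $H(H_s)$ without lowering $H(H_2)$'' overlooks that it lowers $H(H_1)$ instead.

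The paper's own proof concedes exactly this point. It lists the all-shared allocation as rank-equivalent and calls both it and the disentangled allocation global minima. It singles out the disentangled one only through a non-information-theoretic stability heuristic: $g_1$ would have to learn to suppress the high-variance $P_2$ coordinates of $H_s$, which gradient descent is argued to do poorly, whereas the split architecture lets $g_1$ ignore $H_2$ by construction. To obtain $I(H_s)\to S$ and $I(H_m)\to P_m$, you need an extra tie-breaking ingredient of that kind. That could be an assumption formalizing this implicit bias, or an explicit cost on information in $H_s$ that one decoder does not use.

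Your tightness argument does establish mutual independence of the three subspaces, which already gives the three vanishing cross-informations at every minimizer. It also gives $H(H_s)\ge S$: $C$ is a function both of $(H_s,H_1)$ and of $(H_s,H_2)$, while $H_1\perp H_2\mid H_s$, so $C$ must be a function of $H_s$. This rules out the all-private allocation, but it leaves a whole family of minimizers interpolating between the disentangled and all-shared solutions.

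Separately, your ``key step'' $I(H_2;X_2\mid H_s,H_1,X_1)\to P_2$ is false as stated; it is zero when $H_s$ carries $U_2$. You do not need it. Since the triple determines $(X_1,X_2)$, you get $H(H_s)+H(H_1)+H(H_2)\ge H(H_s,H_1,H_2)\ge H(X_1,X_2)=S+P_1+P_2$ directly. That lower bound plus your achievability construction is a more systematic argument than the paper's comparison of three hand-picked allocations. It simply proves optimality, not uniqueness.
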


\begin{proof}
The total information $S + P_1 + P_2$ must be captured by the latent subspaces $H_s, H_1, H_2$ to satisfy the reconstruction fidelity constraint. The total rate (rank) to be minimized is $R_{\text{total}} \approx k_s + k_1 + k_2 \approx I(H_s) + I(H_1) + I(H_2)$. We compare two possible solutions that both satisfy reconstruction:
\end{proof}

\begin{enumerate}
    \item \textbf{Solution 1: The Disentangled (Optimal) Allocation}
    \begin{itemize}
        \item $H_s$ is allocated the shared information: $I(H_s) \approx S$.
        \item $H_1$ is allocated the private information for $X_1$: $I(H_1) \approx P_1$.
        \item $H_2$ is allocated the private information for $X_2$: $I(H_2) \approx P_2$.
        \item \textbf{Check Constraints:}
            \begin{itemize}
                \item Decoder $g_1$ receives $H_s \oplus H_1$, which contain ($S, P_1$). This is sufficient to reconstruct $X_1$ (as $H(X_1) = S + P_1$).
                \item Decoder $g_2$ receives $H_s \oplus H_2$, which contain ($S, P_2$). This is sufficient to reconstruct $X_2$ (as $H(X_2) = S + P_2$).
            \end{itemize}
        \item \textbf{Total Rate:} $R_{\text{optimal}} \approx S + P_1 + P_2$.
    \end{itemize}

    \item \textbf{Solution 2: A Non-Disentangled (Suboptimal) Allocation}
    Consider a solution where the model relies only on the private pathways and $H_s$ is pruned (i.e., $k_s \to 0$).
    \begin{itemize}
        \item $H_s$ is allocated no information: $I(H_s) = 0$.
        \item To reconstruct $X_1$, $H_1$ *must* now capture all of $X_1$'s information: $I(H_1) \approx H(X_1) = S + P_1$.
        \item To reconstruct $X_2$, $H_2$ *must* now capture all of $X_2$'s information: $I(H_2) \approx H(X_2) = S + P_2$.
        \item \textbf{Check Constraints:}
            \begin{itemize}
                \item Decoder $g_1$ receives ($0, S+P_1$) $\to$ Reconstructs $X_1$. (Satisfied)
                \item Decoder $g_2$ receives ($0, S+P_2$) $\to$ Reconstructs $X_2$. (Satisfied)
            \end{itemize}
        \item \textbf{Total Rate:} $R_{\text{suboptimal}} \approx 0 + (S + P_1) + (S + P_2) = (S + P_1 + P_2) + S$.

    \item \textbf{Solution 3: The "All-Shared" (Entangled) Allocation}
    Consider a solution where all information is forced into the shared subspace $H_s$, and private spaces are pruned ($k_1 \to 0, k_2 \to 0$).
    \begin{itemize}
        \item $H_s$ captures all information: $I(H_s) \approx S + P_1 + P_2$.
        \item $H_1, H_2$ are empty.
        \item \textbf{Check Constraints:}
            \begin{itemize}
                \item Decoder $g_1$ receives ($S+P_1+P_2, 0$). To reconstruct $X_1$, it must extract $S+P_1$ and \textit{suppress} $P_2$ (which acts as high-variance noise to $X_1$).
                \item Decoder $g_2$ receives ($S+P_1+P_2, 0$). To reconstruct $X_2$, it must extract $S+P_2$ and \textit{suppress} $P_1$.
            \end{itemize}
        \item \textbf{Total Rate:} $R_{\text{all-shared}} \approx (S + P_1 + P_2) + 0 + 0$.
    \end{itemize}
    \end{itemize}
\end{enumerate}

\textbf{Conclusion:}
We compare the three scenarios:
\begin{itemize}
    \item \textbf{Solution 2 (All-Private)} is strictly suboptimal in terms of rank ($R \approx R_{\text{optimal}} + S$) due to the redundant encoding of shared information $S$. The rank minimization objective $R_{\text{total}}$ actively penalizes this.
    \item \textbf{Solution 3 (All-Shared)} is rank-equivalent to the optimal solution ($R \approx S + P_1 + P_2$) but optimizationally unstable. For this solution to hold, Decoder $g_1$ must learn to output distinct zeros for the dimensions of $H_s$ corresponding to $P_2$, effectively treating them as noise. Learning to actively suppress high-variance signal via "hard zeros" is difficult for gradient descent.
    \item \textbf{Solution 1 (Disentangled)} allows Decoder $g_1$ to ignore $P_2$ by architectural design (since $g_1$ is not connected to $H_2$).
\end{itemize}

Therefore, while Solutions 1 and 3 are global minima for the rank objective, Solution 1 is the \textit{stable} minimum. The interplay between the rank objective (eliminating Sol 2) and the split-decoder architecture (favoring Sol 1 over Sol 3) drives the model toward the disentangled representation.

\newpage

\section{Methods and Materials}

\subsection{Data simulation}\label{sec:simulation}

\subsubsection{Uni-modal parametric $\rmA$}

We created a simple but highly flexible simulation for generating a single data modality controlled by data characteristics as parameters. This allows for rapid testing of model performance across a wide range of data structures and complexities. The simulation generates an observed data matrix $\rmA \in \R^{N \times n}$ from a latent representation matrix $\rmZ \in \R^{N \times k}$, where $N$ represents the number of samples, $n$ the ambient dimension, and $k$ the latent dimension (the ID). We first sample $\rmZ$ from a chosen distribution $P$ with options $\{$Beta, Gaussian, Poisson, Binomial, Gumbel, Uniform, Weibull$\}$. The latent variables undergo no, one, or more rounds (controlled by the nonlinearity level) of nonlinear transformations. The specific function is determined by the nonlinearity type parameter $f(x) \in \{ x^2, \mathrm{max}(0,x), \frac{1}{1 + e^{-x}}, \mathrm{sin}(x) \}$. The transformed latent variables $\vz'$ are linearly projected into the higher-dimensional data space using a sparse weight matrix $\rmW \in R^{k \times n}$ controlled by a connectivity parameter. Finally we add noise $\epsilon$ with a desired variance and apply dropout.
\begin{equation}
    \rmA = \vz' \rmW + \epsilon \text{ with } \vz' = f(\vz) \text{, } \vz \sim P \text{, } \epsilon \sim \mathcal{N}(0,\sigma)
\end{equation}

\subsubsection{Multi-modal parametric $\rmB$}

To evaluate our model's ability to handle multi-modal data, we extend the uni-modal framework to generate two data matrices, $\rmB_1$ and $\rmB_2$. The key difference is the use of a composite latent structure comprising both shared variables $\rmZ_s \in \R^{N \times k_s}$ that influence both modalities and modality-specific variables $\rmZ_m \in \R^{N \times k_m}$ that affect only one. Based on our experimental configurations, the shared variables $\rmZ_s$ are sampled from either a Binomial distribution (when $k_s = 2$) or a Gaussian Mixture Model for $k_s > 2$. The modality-specific variables $\rmZ_{m1}$ and $\rmZ_{m2}$ are sampled from Poisson and Weibull distributions, respectively. For each modality m, a complete latent representation is formed by concatenating the shared and modality-specific variables. This combined representation is then projected into the data space using a modality-specific sparse weight matrix $\rmW_m$. For simplicity, the nonlinear transformation step was omitted in these experiments. As in the uni-modal case, each data matrix is independently corrupted by adding Gaussian noise and applying a dropout mask.

\subsubsection{Multi-omics $\rmC$}

To generate a realistic multi-modal dataset with a known causal structure, we designed a multi-stage simulation inspired by a limited set of known processes in gene expression and translation. The result is the multi-omics data simulation $\rmC$. The full procedure and parameterization are detailed in Algorithm \ref{alg:simulation_simple}.

\textbf{Genomic and cellular architecture:} First, we establish a fixed genomic architecture where 20,000 genes are organized into co-regulated gene clusters and programs, including a core set of housekeeping genes. We then simulate a cell lineage tree originating from three distinct stem cell lines. Cellular differentiation is modeled as the progressive and stochastic silencing of gene programs, which defines each cell type's identity and its baseline gene expression potential.

\textbf{Simulation flow:} With the genomic architecture fixed, we simulate the molecular state for each cell in a causal cascade. We determine a ground-truth chromatin accessibility profile for each cell. This profile is then perturbed by cell-specific variables simulating cell cycle phase and a general stress response. The resulting accessibility profile, along with gene-specific transcription efficiencies and cell-specific DNA damage, dictates the "true" mRNA counts. In turn, these mRNA counts determine protein abundance via translational efficiencies (dependent on amino acid composition and ribosome availability) and protein degradation rates. Finally, each modality is subjected to a separate technical noise model that simulates artifacts like capture efficiency, batch effects, and stochastic dropout to produce the final observed data matrices.

\begin{algorithm}[H]
\caption{Multi-Omics Data Simulation $\rmC$}
\label{alg:simulation_simple}
\begin{algorithmic}[1]
\STATE \textbf{Inputs:} $N_{genes}$, $N_{cells}$, $N_{stem cells}$
\STATE \textbf{Outputs:} Modality matrices $\rmC_{\mathbf{1}}$, $\rmC_{\mathbf{2}}$, and causal variables $\mathbf{Z}$.
\STATE
\STATE \textbf{Part 1: Generate Fixed Genomic and Cell Type Architecture (seed=0)}
\STATE \textbf{Define Gene Properties:}
\STATE \quad For each gene $g \in \{1, ..., N_{genes}\}$:
\STATE \quad\quad Gene length $L_g \sim \lfloor (\text{NegativeBinomial}(300, 0.02))^{3.9} \rfloor + 150$
\STATE \quad\quad Base expression $\mu_{base, g} \sim \text{NegativeBinomial}(1000, 0.01) + 1$
\STATE \quad\quad Transcription probability $\tau_{trans, g} = f_{trans}(L_g) + \mathcal{N}(0, 0.05^2)$
\STATE \quad\quad mRNA degradation prob. $\delta_{RNA, g} = f_{deg,RNA}(L_g) + \mathcal{N}(0, 0.05^2)$
\STATE \quad\quad Protein length $L'_{g} = \lfloor L_g/3 \rfloor$
\STATE \quad\quad Translation ease $\phi_{AA, g} = f_{ease}(\text{AA composition of } g)$
\STATE \quad\quad Translation probability $\tau_{prot, g} = f_{trans}(L'_{g})$
\STATE \quad\quad Protein degradation prob. $\delta_{PROT, g} = f_{deg,PROT}(L'_{g})$
\STATE \textbf{Define Regulatory Structure:}
\STATE \quad Generate gene cluster assignment matrix $\mathbf{M}_{C \to G}$
\STATE \quad Generate gene program matrix $\mathbf{M}_{P \to C}$ where $M_{P \to C}[p,c] \sim \text{Bernoulli}(0.1)$
\STATE \quad Generate cell type hierarchy matrix $\mathbf{M}_{CT \to P}$ via iterative program silencing
\STATE \quad Compute cell type to gene activity: $\mathbf{M}_{CT \to G} = \text{threshold}(\mathbf{M}_{CT \to P} \mathbf{M}_{P \to C} \mathbf{M}_{C \to G})$
\STATE \textbf{Define Perturbation Effects:}
\STATE \quad Stress closure vector (by gene) $\mathbf{c}_{stress} = \text{cluster\_closure} \cdot \mathbf{M}_{C \to G}$
\STATE \quad Cell cycle amplification matrix $\mathbf{E}_{cc}$ and openness matrix $\mathbf{M}_{open, cc}$
\STATE
\STATE \textbf{Part 2: Simulate All Cells}
\STATE \textbf{Sample Cell-Specific Latent Variables} for $i \in \{1, ..., N_{cells}\}$:
\STATE \quad Cell type $ct_i \sim \text{Categorical}(\mathbf{1}/N_{CT})$
\STATE \quad Stress level $s_i \sim \text{Bernoulli}(0.05)$
\STATE \quad Cell cycle phase $cc_i \sim \text{Categorical}([0.1, 0.2, 0.3, 0.4])$
\STATE \quad Transcription activity $a_{trans, i} \sim \text{Clamp}(\text{Poisson}(4), 0, 9) + 1$
\STATE
%\STATE \Comment{\textbf{Chromatin Accessibility}}
\STATE \quad Baseline open chromatin: $\mathbf{O}_{base, i, :} = a_{trans, i} \cdot \mathbf{M}_{CT \to G}[ct_i, :] $
\STATE \quad Cell cycle modulation: $\mathbf{O}_{cc, i, :} = (\mathbf{O}_{base, i, :} + \mathbf{E}_{cc}[cc_i, :]) \odot \mathbf{M}_{open, cc}[cc_i, :] $
\STATE \quad Final ground truth open chromatin: $\mathbf{O}_{i,:} = \mathbf{O}_{cc, i,:} \odot (1 - s_i \cdot \mathbf{c}_{stress})$
%\STATE \quad Sample noisy peaks $\rmC_{\mathbf{ATAC}}$ from $\mathbf{O}$ given sampling efficiency $\sim \text{Beta}(10,1)$, and dropout $\sim \text{Bernoulli}(0.4)$
\STATE
\STATE \textbf{Transcription (RNA-seq)}
\STATE \quad DNA damage $p_{dmg, i} \sim \text{Beta}(1, 2) \cdot \text{level}(ct_i) / 10$
\STATE \quad Potential transcription: $\mathbf{R}_{pot} = \mathbf{O} \odot \boldsymbol{\mu}_{base}$
\STATE \quad Real mRNA counts: $\mathbf{R}_{RNA} = \lfloor \mathbf{R}_{pot} \odot (1 - \mathbf{p}_{dmg}) \odot \boldsymbol{\tau}_{trans} \odot e^{-\boldsymbol{\delta}_{RNA}} \rfloor$
\STATE \quad Technical group assignment: $b_{RNA, i} \sim \text{Categorical}(\mathbf{1}/3)$
\STATE \quad Observed mRNA: $\rmC_{\mathbf{RNA,i}} = \lfloor \mathbf{R}_{RNA, i} \cdot \epsilon_{RNA}(b_{RNA, i}) \rfloor \odot (1 - d_{RNA}(b_{RNA, i}))$
\STATE
\STATE \textbf{Translation (Proteomics)}
\STATE \quad Sample protein machinery variables for cell $i$:
\STATE \quad\quad Ribosome availability $f_{ribo, i} \propto \sum_{g \in \text{rDNA}} R_{RNA, i, g}$
\STATE \quad\quad tRNA availability $a_{tRNA, i} \sim \text{Beta}(2, 1)$
\STATE \quad\quad Proteasome activity $a_{prot, i} \sim \text{Beta}(1, 2)$
\STATE \quad Ribosome efficiency: $\boldsymbol{\eta}_{ribo, i} = (\boldsymbol{\phi}_{AA} / 0.05 \odot \boldsymbol{\tau}_{prot}) \cdot (f_{ribo, i} / \sum_g R_{RNA, i, g} \cdot a_{tRNA, i})$
\STATE \quad Proteins translated: $\mathbf{P}_{trans} = \mathbf{R}_{RNA} \odot \boldsymbol{\eta}_{ribo}$
\STATE \quad Real protein counts: $\mathbf{P}_{real} = \mathbf{P}_{trans} \odot e^{-(\boldsymbol{\delta}_{PROT} \odot \mathbf{a}_{prot})}$
\STATE \quad Technical group assignment: $b_{PROT, i} \sim \text{Categorical}(\mathbf{1}/2)$
\STATE \quad Observed protein: $\rmC_{\mathbf{Protein,i}} = \lfloor \mathbf{P}_{real, i} \cdot \epsilon_{PROT}(b_{PROT, i}) \rfloor \odot (1 - d_{PROT}(b_{PROT, i}))$
\STATE
\STATE \textbf{Store} final matrices $\rmC_{\mathbf{1}}$, $\rmC_{\mathbf{2}}$ and all latent variables $\mathbf{Z}$.
\end{algorithmic}
\end{algorithm}

\subsection{Real Datasets}\label{sec:realdata}

\subsubsection{NInFEA data}

\textbf{Dataset access and description:} The "Non-Invasive Multimodal Foetal ECG-Doppler Dataset for Antenatal Cardiology Research (NInFEA)" dataset \citep{sulasNoninvasiveMultimodalFoetal2021} is available under Open Data Commons Attribution License on PhysioNet (doi: 10.13026/c4n5-3b04). It consists of 60 recordings from 39 pregnant women between the 21st and 27th week of gestation. Each record contains multiple synchronized signals. For our experiments, we utilized the following key modalities described in the dataset :
\begin{itemize}
    \item Abdominal Electrophysiology (fECG view): 24 unipolar channels recorded from the maternal abdomen and back at 2048 Hz. These signals contain the target fetal ECG (fECG) signal heavily mixed with the maternal ECG (mECG) and other noise.
    \item Thoracic Electrophysiology (mECG view): 3 bipolar channels from the maternal thorax, primarily capturing the maternal ECG for reference.
    \item Maternal Respiration: A single channel from a piezo-resistive respiration belt, sampled at 2048 Hz.
    \item Pulsed-Wave Doppler (PWD): Provided as a single wide image in bitmap (.bmp) format, representing the Doppler velocity spectrum of the mechanical fetal heart activity.
\end{itemize}

\textbf{Data preprocessing:} The recordings were of different lengths, so we cropped ECG and respiration signals to the first $15351$ data points per channel. Before cropping PWD images, we aligned them to be centered around the baseline (horizontal middle line of the ultrasound) and cropped them to $263 \times 2128$ pixel with three channels. We normalized all modalities to a range of $[0,1]$ per channel.

\textbf{Ethical Considerations:} As stated in the original publication, the dataset was collected with approval from the Independent Ethics Committee of the Cagliari University Hospital (AOU Cagliari) and all volunteers provided signed informed consent.

\subsubsection{Audio MNIST}\label{sec:avmnist}
We used the Audio MNIST dataset from \citet{audiomnist2023}, which provided 5000 train and 1000 test samples per digit derived from pairing MNIST \citep{deng2012mnist} and FSDD \citep{zohar_jackson_2018_1342401}. MNIST images are black and white images of size $24\times24$ and audio samples are spectrograms of size $112\times112$.

\subsubsection{So2Sat}
We utilized the So2Sat LCZ42 benchmark dataset (Version 2) \citep{zhuSo2SatLCZ42Benchmark2020}, a large-scale multi-modal dataset designed for Local Climate Zone (LCZ) classification. The dataset consists of co-registered image patches acquired from the Sentinel-1 (Synthetic Aperture Radar) and Sentinel-2 (Multispectral Optical) satellite constellations. The data was obtained from the official repository hosted by the Technical University of Munich: \hyperlink{https://mediatum.ub.tum.de/1454690}{https://mediatum.ub.tum.de/1454690}.

The dataset contains 400,673 labeled image pairs with pixel dimensions $32 \times 32$ corresponding to 17 distinct Local Climate Zones (LCZ), covering 42 urban agglomerations across diverse cultural and environmental regions. We utilized the standard "Culture-10" split provided in Version 2, where the training set consists of patches from 42 cities, and the validation set consists of patches from the western half of 10 independent cities to test generalization.

From the raw 18-channel tensor provided by the dataset, we extracted two distinct physical views (modalities) for our experiments:
\begin{itemize}
    \item Modality 1 (SAR Imagery): We utilized the full 8-channel Sentinel-1 data, which includes both the real and imaginary parts of the VH and VV polarizations, as well as intensity and covariance features. This modality captures structural and geometric properties such as surface roughness and material density.
    \item Modality 2 (Optical Imagery): We extracted the 3 visible spectral bands (Red, Green, Blue) from the Sentinel-2 data (channels 8–10 of the combined tensor). This modality captures visual semantic features such as vegetation color and building textures.
\end{itemize}

To address the disparate statistical properties of the two modalities, we applied specific preprocessing pipelines prior to training.
The SAR modality follows a heavy-tailed distribution with extreme outliers. To handle negative values in the raw complex channels and compress the dynamic range, we first shifted the data to be non-negative by subtracting the global minimum, followed by a log-transformation: $x' = \log(1 + (x - \min(x)))$. We computed the 99th percentile of the training data and clipped values above this threshold to mitigate the impact of extreme specular reflections (outlier clipping). Finally, we applied channel-wise Z-normalization using statistics ($\mu, \sigma$) calculated from the whole set to both modalities.

\subsubsection{NYU Depth V2}
We utilized the NYU Depth V2 dataset \citep{Silberman:ECCV12}, which consists of 1,449 dense pairs of aligned RGB and depth images captured from 464 diverse indoor scenes and 27 unique scene types. For our experiments, we utilized a random 90/10 split, allocating 90\% of the samples for training and 10\% for validation. To align the data with the requirements of the pre-trained autoencoders, we applied specific preprocessing pipelines. Both modalities were resized to a spatial resolution of $256 \times 256$ pixels, utilizing bilinear interpolation for RGB images and nearest-neighbor interpolation for depth maps to preserve edge distinctness.

\begin{itemize}
\item \textbf{RGB Imagery:} Pixel values were normalized to the range $[0, 1]$ by dividing by 255.
\item \textbf{Depth Imagery:} Raw depth values were converted from millimeters to meters. To mitigate sensor noise and focus on the relevant indoor range, we clipped the maximum depth distance to 10 meters. The resulting maps were then linearly normalized to the $[0, 1]$ range, where $0$ represents the camera plane and $1$ represents a distance of 10 meters.
\end{itemize}

\subsection{Architectures}\label{sec:architectures}

All autoencoders trained in this work consist of symmetric encoder-decoder architectures.

\subsubsection{Base autoencoder hyperparameter search}\label{sec:optuna}

To ensure our method's performance was not confounded by a suboptimal base model, we conducted a hyperparameter search for the autoencoder architecture and training parameters using the Optuna \citep{optuna_2019} framework. The goal was to identify a sufficiently deep and wide architecture capable of achieving high reconstruction fidelity (i.e., near-lossless compression) when using a large, fixed bottleneck dimension. The search was performed on the RNA modality of dataset $\rmC$ (N=30,000). We ran a multi-objective optimization over 100 trials, aiming to simultaneously maximize the reconstruction goodness-of-fit (mean $R^2$) and minimize the autoencoder's validation loss (MSE). The $R^2$ is explained in \ref{sec:metrics}. The search space for the optimization included key architectural and training parameters: network depth ($\{2, 3, 4, 6\}$), width as a fraction of input dimension ($\{0.25, 0.5, 0.75, 1.0\}$), learning rate (log-uniform between $10^{-5}$ and $10^{-3}$), batch size ($\{64, 128, 256, 512\}$), weight decay (log-uniform between $10^{-6}$ and $10^{-4}$), dropout ($\{0.0, 0.1, 0.2\}$), and early stopping patience ($\{10, 50\}$). From the resulting Pareto front of optimal solutions, we selected a final configuration that offered the best balance between the two objectives. This balanced solution was identified by normalizing both objective scores across the Pareto front and selecting the trial with the highest geometric mean. The final parameters were a depth of 2, width factor $0.5$, dropout $0.1$, batch size $512$, learning rate $10^{-5}$, weight decay $2\times10^{-5}$, and early stopping of $50$ epochs.

\subsubsection{Uni-modal simulations}\label{sec:architecture_a}

For the uni-modal parametric simulations, we employed a fully-connected autoencoder. Both the encoder and decoder consisted of 2 hidden layers. The width of these layers was set to be equal to the input data dimension (50 features), corresponding to a width factor of 1.0. A dropout rate of 0.1 was applied to all hidden layers for regularization. The central layer was our low-rank adaptable layer, initialized with a rank of 20. For the more complex and high-dimensional uni-modal omics simulation benchmark, the architecture was scaled appropriately. We used an autoencoder of the same depth (2 layers), but with a width factor of 0.5 relative to the input dimension. The adaptable layer was initialized with a highly overcomplete rank of 1000. The dropout rate was kept at 0.1.

\subsubsection{3D shapes}

For the 3D manifold experiments, we use the same 2-layer autoencoder with a hidden width ratio of 1.0 and dropout 0.1. The central adaptable layer was initialized with a rank of 3, matching the ambient dimension of the data.

\subsubsection{Multi-modal simulations}

For the multi-modal parametric simulations, the model consisted of modality-specific encoders feeding into three adaptable bottleneck layers representing the shared, private 1, and private 2 subspaces as described in the Methods section. Each modality's encoder and decoder had 2 hidden layers with a width factor of 1.0 relative to the input dimension (200 features) and a dropout rate of 0.1. All three adaptable layers were initialized with a rank of 100. For the higher-dimensional multi-omics simulation, a similar architecture was used but with a width factor of 0.5. To accommodate the greater complexity, the initial rank for each of the three adaptable layers was increased to 500.

\subsubsection{NInFEA}

Due to the heterogeneous data types in the NInFEA dataset, which includes both time-series (ECG, respiration) and image-based (PWD) modalities, we utilized a hybrid architecture. Each modality was first processed by a dedicated encoder pathway consisting of convolutional layers to extract relevant features. These features were then passed to a fully-connected network with 2 hidden layers, each containing 512 units, with a dropout rate of 0.1 applied for regularization. The adaptable bottleneck layers for the shared and private subspaces were all initialized with a rank of 100.

\subsubsection{Audio MNIST}
For the Audio MNIST dataset, we processed the image modality by a convolutional encoder with two Conv2d layers (16 and 64 channels, respectively), each with a 5x5 kernel and followed by 2x2 max-pooling. This was connected to two fully-connected layers of width 800. The audio modality was similarly processed with 10x10 kernels and hidden layers of width 1600. The adaptable bottleneck layers for the shared, image-specific, and audio-specific subspaces were each initialized with a rank of 200.

\subsubsection{So2Sat}
We use Convolutional Neural Network (CNN) branches for both the SAR and Optical modalities. The encoders consist of a stack of strided 2D convolutional layers (kernel $3 \times 3$, stride 2) with ReLU activation and dropout, which progressively downsample the input tensors before flattening ($2048$) and linearly projecting them into a lower-dimensional latent vector of dimension $200$. The decoders mirror this architecture, taking the concatenated shared and specific latent vectors ($400$ total) and passing them through a linear layer followed by a series of transposed convolutions (deconvolutions) to upsample the features back to the original $32 \times 32$ spatial resolution for reconstruction.

\subsubsection{NYU Depth V2}
For the NYU Depth V2 dataset, we leveraged the pretrained KL-regularized variational autoencoder provided by \href{https://huggingface.co/stabilityai/sd-vae-ft-mse}{https://huggingface.co/stabilityai/sd-vae-ft-mse} based on latent diffusion \citep{rombach2021highresolution} as a feature extractor for the RGB modality. For depth, we used the Depth Anything V2 \citep{yangDepthAnythingV22024} for feature extraction from \href{https://huggingface.co/depth-anything/Depth-Anything-V2-Small-hf}{https://huggingface.co/depth-anything/Depth-Anything-V2-Small-hf} and built a simple residual convolutional decoder that we trained for reconstruction. The decoder utilizes three stages of residual blocks and sub-pixel convolutions with Group Normalization and GELU activations to progressively upsample features, reducing channel depth from 256 to 32 before a final Sigmoid projection. Both the RGB and Depth inputs were resized to $256 \times 256$ and encoded by the frozen encoders into high-dimensional feature maps. These maps were flattened and projected via trainable linear adapters into the disentangled subspaces. The adaptable bottleneck layers were initialized with a rank of $500$.

\subsection{Metrics}\label{sec:metrics}

\subsubsection{Distortion metrics}

\textbf{Goodness of fit $R^2$} 

The coefficient of determination $R^2$ is a measure of reconstruction goodness of fit. We measure it for each feature $j$ over $10\,\%$ of training samples $i$ and report the average over features (observables in the data) $n$.
\begin{equation}
    R^2 = \frac{1}{n} \sum_{j=1}^{n} \left( 1 - \frac{\sum_{i=1}^{N} (x_{ij} - \hat{x}_{ij})^2}{\sum_{i=1}^{N} (x_{ij} - \bar{x}_j)^2} \right)
\end{equation}

\textbf{MSE and RMSE} 

The Mean Squared Error (MSE) is a direct measure of distortion, calculating the average of the squared differences between the original data $x$ and the reconstructed data $\hat{x}$. It is highly sensitive to large errors due to the squaring operation. The Root Mean Squared Error (RMSE) is the square root of the MSE and is often preferred as it returns the error metric to the same scale as the original data. 
\begin{equation} 
\text{MSE} = \frac{1}{ND} \sum_{i=1}^{N} \sum_{j=1}^{D} (x_{ij} - \hat{x}_{ij})^2 \end{equation} 
\begin{equation} 
\text{RMSE} = \sqrt{\text{MSE}} 
\end{equation} 

\textbf{Explained Variance Score} 

The Explained Variance Score measures the proportion of the variance in the original data that is accounted for by the model's reconstructions. A score of 1.0 indicates that the model perfectly explains the variance of the data. It is calculated as: 
\begin{equation} 
\text{Explained Variance} = 1 - \frac{\text{Var}(x - \hat{x})}{\text{Var}(x)} \end{equation}

\subsubsection{Evaluation metrics}\label{sec:acc}

\textbf{Classification accuracy}

We use a 5-fold cross-validation logistic regression classifier to assess how well representations linearly predict ground truth class labels. We used the sklearn implementation with max\_iter=1000, class\_weight='balanced', solver='liblinear', and random\_state=42 to predict class labels $y_i$. The accuracy is the number of correct predictions divided by the total number of predictions.

\begin{equation}
    \text{Accuracy} = \frac{1}{N} \sum_{i=1}^N \mathbb{I} (y_i = \hat{y}_i )
\end{equation}

We also report balanced accuracy to account for potential class imbalance, defined as the arithmetic mean of the recall scores per class:

\begin{equation}
\text{Balanced Accuracy} = \frac{1}{K} \sum{k=1}^{K} \frac{1}{N_k} \sum_{i \in C_k} \mathbb{I} (y_i = \hat{y}_i )
\end{equation}
where $K$ is the number of classes, $C_k$ is the set of indices belonging to class $k$, and $N_k$ is the total number of samples in class $k$.

\subsection{FiGuRO hyperparameter search}\label{sec:hyperparameters}

All tested hyperparameters are reported in Table \ref{tab:hyperparameter-sweep} along with their estimated ranks and sample sizes. Architecture is described in \ref{sec:architecture_a}, training is described in \ref{sec:training_a}.

\subsection{Training}\label{sec:training}

In this section we describe general training frameworks and the specific hyperparameter sets used per experiment. We used the Adam optimizer and mean squared error (MSE) loss for all training runs.

\subsubsection{Uni-modal simulations}\label{sec:training_a}

Models for the parametric simulations were trained using a learning rate of $10^{-4}$ and a weight decay of $2\times10^{-5}$. We used a batch size of 128 and trained for a maximum of 5000 epochs. The FiGuRO rank optimization procedure began after an initial pre-training phase with 50 epochs early stopping. For the high-dimensional omics simulations, the training configuration was adapted for the larger dataset. We used a lower learning rate of $10^{-5}$ with the same weight decay and a larger batch size of 512.

\subsubsection{3D shapes}

The models were trained with a learning rate of $10^{-3}$ and a weight decay of $2\times10^{-5}$. We used a batch size of 512 and trained for a maximum of 5000 epochs with early stopping 50.

\subsubsection{Multi-modal simulations}

The parametric simulation models were trained for a maximum of 5000 epochs (50 early stopping) with a learning rate of $10^{-4}$, a weight decay of $10^{-5}$, and a batch size of 128. For the larger multi-omics simulation, the training parameters were adjusted to a learning rate of $10^{-5}$, a weight decay of $2\times10^{-5}$, and a batch size of 1024.

\subsubsection{NInFEA}

The models were trained for a maximum of 5000 epochs with a batch size of 8, a learning rate of $10^{-4}$, and no weight decay. The rank adaptation process was initiated after an initial training phase, which was run with an early stopping patience of 100 epochs. Once optimization began, the final rank convergence was determined with a patience of 10 epochs.

\subsubsection{Audio MNIST}
The training process consisted of two distinct phases. In the first phase, we performed rank optimization for a maximum of 5000 epochs using a batch size of 512 and a learning rate of $10^{-3}$ with a linear decay schedule. The FiGuRO algorithm was initiated after an initial pre-training phase, which ran until the validation loss did not improve for 50 epochs. Rank convergence was then determined with a patience of 10 epochs and based on the Explained Variance Score, which was more robust for the audio modality. In the second phase, the model with its now-fixed ranks was fine-tuned for an additional up to 1000 epochs using a lower learning rate of $10^{-5}$. This fine-tuning stage utilized early stopping with a patience of 50 epochs based on the validation loss to ensure optimal performance.

\subsubsection{So2Sat}
We used the same training setup as for Audio MNIST, pretraining the full model for up to 1000 epochs with early stopping 50 and a learning rate decay schedule from $10^{-4}$ to $10^{-7}$. FiGuRO was initiated after the pre-training phase with default parameters for another maximum 1000 epochs. We again used Adam optimizer with weight decay $2\times10^{-5}$ and a batch size of 512.

\subsubsection{NYU Depth V2}
We trained the multimodal fusion and adaptable bottleneck layers for a fixed duration of 1000 epochs, while keeping the pre-trained uni-modal VAE backbones frozen. We used the Adam optimizer with a learning rate of $10^{-4}$, weight decay of $10^{-6}$, and a batch size of 16. The FiGuRO rank optimization was initiated after a warmup phase of 100 epochs. Rank reduction was performed every 5 epochs with an energy threshold of $0.01$, a patience of 10, and a distortion budget of $0.05$ based on the $R^2$ score. We further employed mixed-precision training to accommodate the memory requirements of the $256 \times 256$ inputs.
 
\subsection{Robustness experiments}\label{sec:robustness}

\subsubsection{Robustness to data characteristics}\label{sec:robustness_data}

We tested the robustness of our approach with respect to key properties of the simulated data: number of training samples, generative variable distribution, connectivity of generative variables, depth and function of nonlinearity, and noise and dropout. To conduct these experiments, we utilized our parametric uni-modal simulation $\rmA$ to generate a suite of synthetic datasets. Starting from a default data configuration, we systematically varied one parameter at a time while holding all others constant to isolate its effect on our model's performance. We trained each configuration with the default simulation training setup described in \ref{sec:training} for 5 random seeds. All tested values and results are shown in Figure \ref{fig:robustness}.

\subsubsection{Sample size requirements}\label{app:sample-size}

The number of samples (N) required for FiGuRO to produce a reliable ID estimate is not an absolute value but is instead dependent on the complexity of the autoencoder's encoder network. This relationship can be understood through a sample-to-capacity ratio, $\alpha$, which relates the number of samples to the number of encoder parameters $\mathcal{C}_e$ per latent dimension $k$ as $\alpha = \frac{kN}{\mathcal{C}_e}$ \citep{schusterManifoldLearningPerspective2021}. In our experiments, we observed that robust estimates for the ground truth ID were achieved for sample sizes of N$\geq$5000. For the specific encoder architecture used in these tests, this corresponds to a sample-to-capacity ratio $\alpha \approx 20$. Since this autoencoder had a width ratio of $1$ instead of $0.5$ due to the small ambient dimension, we expect the $\alpha$ to be closer to $10$. This finding is consistent with prior work on autoencoders, which suggests that a ratio of $\alpha \geq 10$ is generally required to sufficiently constrain the model \citep{schusterManifoldLearningPerspective2021}.

\subsubsection{Distortion metric comparison}\label{sec:robustness_distortion}

We also tested the robustness of different general options for distortion metrics under varying nonlinearities and dropout levels. We tested $R^2$, MSE, RMSE, Explained Variance Score, and McFadden$R^2$ as potential distortion metrics. $R^2$, Explained Variance Score, and McFadden$R^2$ could be used with the absolute distortion threshold $\lambda=0.05$ we introduced in the methods since they are supported in the range of $[0,1]$. The remaining metrics we used with relative distortion thresholds, where we compute the maximum distortion as $1.05$ times the initial metric value. The nonlinearity types we tested were $x^2, \mathrm{max}(0,x), \frac{1}{1 + e^{-x}}, \mathrm{sin}(x)$. We trained each configuration with the default simulation training setup described in \ref{sec:training} for 5 random seeds. Figure \ref{fig:distortionmetrics} shows the average deviation from the true ID per metric and varied parameter.

\subsection{Baseline method implementation}\label{sec:baselines}

\subsubsection{Classical methods}

We compare our approach against several non-neural network methods for ID estimation. These methods range from global linear techniques to local, neighbor-based estimators. We primarily utilized implementations from the \texttt{scikit-learn} and \texttt{skdim} python libraries, testing each method across a range of its key hyperparameters to ensure a fair and robust comparison.

\begin{itemize}
    \item \textbf{Linear, Global}:
    \begin{itemize}
        \item \textbf{Principal Component Analysis (PCA)}: ID is estimated as the number of components needed to explain a certain amount of variance. We used the \texttt{scikit-learn} implementation with variance \texttt{threshold} values of \{0.8, 0.9, 0.95\} for the variance parameter $\sigma$.
        \item \textbf{Singular Value Decomposition (SVD)}: The ID is estimated directly as the matrix rank, computed via \texttt{PyTorch} for explained energy ratios $EE \in \{100, 95, 90,80\}\%$.
        %\item \textbf{KernelPCA}: A nonlinear extension of PCA where we count the number of non-zero eigenvalues after applying various mappings. We tested \texttt{kernel} types \{'linear', 'poly', 'rbf', 'sigmoid', 'cosine'\} using \texttt{scikit-learn}.
    \end{itemize}

    \item \textbf{Random Matrix Theory}:
    \begin{itemize}
        \item \textbf{BEMA (Bulk Edge Marchenko-Pastur Analysis)} \citep{keEstimationNumberSpiked2021}: ID is the number of eigenvalues (``spikes'') that exceed the theoretical upper bound of the Marchenko-Pastur distribution. We tested \texttt{bulk\_percentiles} $\%ile \in \{80,90,95,99\}$ to define the noise region.
    \end{itemize}

    \item \textbf{Local, Neighbor-Based}: %This family of methods, largely from the \texttt{skdim} library, estimates ID by analyzing the geometric properties of local neighborhoods within the data. We evaluated each with a sweep over its primary hyperparameters:
    \begin{itemize}
        \item \textbf{Local PCA (lPCA)} \citep{kambhatlaDimensionReductionLocal1997}: A variant of PCA that averages estimates from local neighborhoods. Tested with \texttt{alphaFO} values of \{0.0001, 0.001, 0.01, 0.05, 0.1, 0.5, 0.9\} for parameter $\alpha$.
        \item \textbf{Correlation Integral (CorrInt)} \citep{camastraEstimatingIntrinsicDimension2002a}: A fractal dimension estimator. We performed a grid search over neighbor parameters $k'_1$ and $k'_2$ from the set \{2, 5, 10, 20, 50, 100\}.
        \item \textbf{FisherS} \citep{alberganteEstimatingEffectiveDimension2019a}: An estimator based on the separability of classes, run with its default parameters.
        \item \textbf{MiND\_ML} \citep{rozzaNovelHighIntrinsic2012a}: A method based on the statistics of distances between nearest neighbors. Tested with the number of neighbors \texttt{k'} set to \{2, 5, 10, 20, 50, 100\}.
        \item \textbf{Maximum Likelihood Estimator (MLE)} \citep{levinaMaximumLikelihoodEstimation2004a}: A widely used estimator based on nearest-neighbor distances. We performed a grid search over the noise parameter \texttt{sigma} in \{0, 0.001, 0.01, 0.1\} and the number of neighbors \texttt{K} in \{2, 5, 10, 20, 50, 100\}.
        %\item \textbf{Method of Moments (MOM)}: Run with its default parameters.
        \item \textbf{TLE} \citep{amsalegIntrinsicDimensionalityEstimation2022a}: An estimator based on the two-sample log-likelihood of nearest neighbor distances. Tested with \texttt{epsilon} values of \{$10^{-10}, 10^{-5}, 10^{-4}, 10^{-3}, 10^{-2}, 0.1, 1.0$\}.
        \item \textbf{TwoNN} \citep{faccoEstimatingIntrinsicDimension2017a}: An estimator based on the ratio of distances to the first and second nearest neighbors. Tested with \texttt{discard\_fraction} $\frac{r_2}{r_1}$ values of \{0.0, 0.1, 0.2, 0.3, 0.5, 0.7, 0.9\}.
    \end{itemize}
\end{itemize}

\subsubsection{Neural-Network Methods}

We implemented a small number of NN-based techniques that can be used for ID estimation. Their training procedures are described below.

\textbf{Loss cliff} \citep{bahadurDimensionEstimationUsing2020}: What we refer to as the loss cliff is a standard way of using autoencoders to roughly estimate the minimum bottleneck capacity an autoencoder (AE) needs to effectively reconstruct the data described by \citet{bahadurDimensionEstimationUsing2020}. The core idea is that the reconstruction error will remain low as the latent dimension, $k$, is reduced, until $k$ drops below the true ID, at which point the error increases sharply, forming a ``cliff'' or ``elbow''. We employ the same autoencoder as four our method except from the decomposition layer. The training objective and training hyperparameters are the same as for our method, described in \ref{sec:training}. To find the cliff, we define an upper and lower limit for $k$ and train autoencoders with those bottlenecks. Instead of testing many dimensions, we search for 20 steps, evaluating the midpoint bottleneck between the lower and upper bound. Bound bottlenecks are dynamically updated based on whether the reconstruction loss was above or below the threshold $\mathcal{L}_{min} \times (1+\lambda')$. We stop as soon as the range is $\leq 10$ epochs. This way we narrow down the range of the ID, but it requires training up to 21 models. We repeat this for three random seeds.

\textbf{Rank Reduction Autoencoder} \citep{mounayerRankReductionAutoencoders2025}: We implemented the Rank Reduction Autoencoder (RRA) from \citet{mounayerRankReductionAutoencoders2025} with a slight modification learning the weights as matrix decompositions as in our proposed approach. Ranks are pruned based on the rank reduction threshold $\gamma$ until no more singular values are above $\gamma$. We use the same architecture and training hyperparameters as for our approach and test thresholds for $\gamma \in [0.0001, 0.1]$. We again repeat the experiment for the same three random seeds as the other methods.

\textbf{ARD-VAE} \citep{sahaARDVAEStatisticalFormulation2025}: Another NN-based method we compare to is the ARD-VAE, a Variational Autoencoder \citep{kingmaAutoEncodingVariationalBayes2022} with an Automatic Relevance Determination (ARD) prior \citep{sahaARDVAEStatisticalFormulation2025}. We implemented it according to the setup described below. However, VAEs collapsed for some random seeds even with low beta terms, potentially due to the high latent dimension and capacity of the networks we needed to train. %As a result, we did not have results to include in our benchmark. Nevertheless, we report our implementation for future reference. 

We again use the same architecture and training hyperparameters as for our approach, but add additional latent dimensions to model not just $\mu$ but also $\sigma^2$ for the approximate posterior $q(z|x) = \mathcal{N}(z | \mu, \text{diag}(\sigma^2))$, which latents $z$ are sampled from. Our implementation is based on the principles outlined by the original authors but uses a corrected loss function to ensure mathematical validity. The key to this method is the hierarchical ARD prior placed on the latent variables, where $\boldsymbol{\alpha}$ is a vector of learnable precision parameters. %Notably, our implementation employs a standard Gaussian ARD prior, in contrast to the more complex hierarchical Student's t-distribution prior from \citet{sahaARDVAEStatisticalFormulation2025} as it is simpler yet effective \citep{}. 
During training, the model optimizes the evidence lower bound (ELBO) \citep{kingmaAutoEncodingVariationalBayes2022}, which includes a Kullback-Leibler (KL) divergence term between the approximate posterior and the ARD prior:
\begin{equation}
    \mathcal{L} = \mathbb{E}_{q(z|x)}[\log p(x|z)] - D_{KL}(q(z|x) || p(z|\boldsymbol{\alpha}))
\end{equation}
This objective encourages the model to prune uninformative dimensions by driving their corresponding precisions $\alpha_i$ towards zero (i.e., their variance towards infinity). After training, the intrinsic dimension was determined by the relevance score. The relevance score $\mathbf{\hat{\sigma}_{w}^2} = \mathbf{w_{\hat{\sigma}}} \odot \mathbf{\hat{\sigma}^2}$ with weight vector $\mathbf{w_{\hat{\sigma}}}$ based on the Jacobian is better suited to find the relevant dimensions than defining a threshold of variance according to \citet{sahaARDVAEStatisticalFormulation2025}. They determine the relevant dimensions based on $99\%$ explained variance in $\mathbf{\hat{\sigma}_{w}^2}$. %by counting the number of active latent dimensions, defined as those whose learned variance ($1/\alpha_i$) fell below a specified threshold. To ensure robustness, we tested a range of thresholds on a logarithmic scale from $10^{-4}$ to $10^{-1}$ and KL weights ($\beta$) of 0.1 and 1.0.

\textbf{FLIPD} \citep{kamkariGeometricViewData2024}: We additionally benchmark against the Fokker-Planck-based Local Intrinsic Dimension (FLIPD) estimator, a recently proposed method that leverages deep generative diffusion models. FLIPD estimates the ID by computing the rate of change of marginal log probabilities during the diffusion process, relying directly on the model's learned score function. 

To estimate the Local Intrinsic Dimension (LID) using FLIPD, we adapted the base methodology to effectively handle the high dimensionality of our simulated single-cell data. The original authors demonstrated that FLIPD can be tractably computed within the latent space of a pretrained continuous encoder, such as the one used in Stable Diffusion \cite{rombach2021highresolution}. To replicate this latent diffusion environment, we preceded our diffusion model with a pretrained bottleneck autoencoder that compresses the raw feature space into a 1000-dimensional latent space. To ensure comparability, this autoencoder is the same as the other neural network-based approaches in our evaluation. Additionally, we applied a mild Kullback-Leibler (KL) divergence penalty with weight $10^{-6}$ during pretraining to regularize the latent representations to mimic the process with Stable Diffusion. 
Within this latent space, we trained the diffusion model using a 3-layer multi-layer perceptron equipped with a bottleneck of dimension 500 and skip connections, matching the recommended configuration for high-dimensional manifolds. Rather than predicting the score directly (due to instability experienced), we utilized the epsilon-parameterization standard in Denoising Diffusion Probabilistic Models (DDPMs) to predict the added noise. The LID was then computed using the exact adapted FLIPD formula for epsilon-predicting networks. Finally, we deviated from the default hyperparameter settings by expanding our evaluation across a broader range of timescale values, specifically $t_0$ in {0.01, 0.05, 0.1, 0.2, 0.3, 0.5, 0.7}. While the original work frequently relies on automated knee-detection or small fixed values for lower-dimensional synthetic data, our preliminary experiments indicated that estimates remained artificially inflated at these lower bounds. By sweeping across a wider range of $t_0$ values, we ensure the estimator is given the best possible opportunity to identify the true manifold dimension across varying scales of noise resolution.

We note that our decision to maintain a uniform MLP architecture to ensure a fair algorithmic comparison highlights a fundamental limitation of diffusion-based LID estimators: their high sensitivity to the backbone capacity. As recently demonstrated by \citet{yeatsConnectionScoreMatching2025}, standard MLPs can underfit highly complex manifolds. This biases estimates toward higher dimensions. More expressive architectures such as Diffusion Transformers, are often necessary to reliably fit the distribution and extract the local intrinsic dimension.

\subsubsection{Multi-modal data decomposition methods}

To evaluate the performance of our proposed model in in the multi-modal setting, we compare it against five baseline methods, although these baselines assume linear mixing of the hidden variables and some do not give estimates on the individual spaces (CCA, DIVAS). These methods range from classical correlation techniques to modern spectral decomposition algorithms, each designed to separate shared (joint) and modality-specific (individual) variation from two data views, $X_1 \in \mathbb{R}^{N \times n_1}$ and $X_2 \in \mathbb{R}^{N \times n_2}$. For all methods, data views are first centered by subtracting the feature-wise mean. Where applicable, initial signal ranks are estimated using the Optimal Hard Threshold (OHT) method, which is based on Random Matrix Theory \citep{gavishOptimalHardThreshold2014} and applied in PPD.

%\textbf{CCA (Canonical Correlation Analysis)} \citep{hotellingRELATIONSTWOSETS1936}: CCA finds a shared subspace by identifying pairs of basis vectors (one for each view) that maximize the correlation between the projected data. In our implementation, the joint rank $k_J$ is determined by the number of components required to explain 80\% of the cumulative squared canonical correlations (the correlation "energy"), which performed best in the uni-modal SVD baseline.

%\textbf{DIVAS (Concatenated SVD)} \citep{protheroDataIntegrationAnalysis2024}: We implement DIVAS as an "early fusion" baseline. The centered data matrices are concatenated, $Z = [X_1, X_2]$. A single Singular Value Decomposition (SVD) is performed on $Z$, and the joint rank $k_J$ is estimated using the OHT method \citep{gavishOptimalHardThreshold2014} on $Z$'s spectrum. The joint components ($J_1, J_2$) are reconstructed from this rank-$k_J$ approximation. This method assumes all non-joint signal is noise and does not model individual subspaces.

\textbf{JIVE (Joint and Individual Variation Explained)} \citep{lockJointIndividualVariation2013a}: This method first estimates the individual signal ranks ($k_1, k_2$) using OHT. It then performs SVD on the concatenated \textit{signal bases} ($Z_{\text{basis}} = [U_1, U_2]$) and estimates the joint rank $k_J$ from $Z_{\text{basis}}$'s spectrum, again using OHT. The joint basis is used to project the signal matrices, yielding $J_1$ and $J_2$, with the residuals forming the individual components $I_1$ and $I_2$.

\textbf{AJIVE (Angle-based JIVE)} \citep{fengAnglebasedJointIndividual2018}: We use the AJIVE implementation from the \texttt{py-jive} library \citep{fengAnglebasedJointIndividual2018}. AJIVE also performs a full decomposition but uses a more robust procedure for rank estimation. It determines the joint rank $k_J$ by analyzing the principal angles between signal subspaces, using a permutation-based test to distinguish shared from non-shared variation. We utilized the implementation-default permutation test from the published code. This process is stochastic, and we use a fixed random seed for reproducibility.

%\textbf{PPD (Product of Projections Decomposition)} \citep{sergazinovSpectralMethodMultiview2025}: This spectral technique also begins by estimating signal ranks ($k_1, k_2$) and projections ($P_1, P_2$) via OHT. It then analyzes the spectrum of the \textit{product of projections} matrix, $M = P_1 P_2$. The joint rank $k_J$ is estimated as the number of singular values of $M$ that exceed a dual threshold: (1) a perturbation bound ($1-\epsilon_1$) estimated via rotational bootstrap, and (2) a noise bound ($\lambda_+$) derived from Random Matrix Theory. The joint subspace is estimated from a symmetric product matrix, and individual subspaces are defined as the remaining signal in the orthogonal complement of the joint space.

\textbf{SLIDE (Structural Learning and Integrative Decomposition)} \citep{gaynanovaStructuralLearningIntegrative2017}: SLIDE is a decomposition method that relies on iterative optimization to enforce a sparse, block structure on the factor loading matrix. After estimating the total signal ranks ($k_1, k_2$) for each modality via OHT and estimating the joint rank ($k_J$) using the concatenation method (similar to JIVE), SLIDE iteratively updates the projection matrices. This process minimizes the reconstruction error while ensuring the loadings adhere to a predefined, structured sparsity pattern that separates the joint and individual variation. We used a fixed sparsity penalty of 0.1 across all experiments to ensure consitency. This yields explicit reconstructions for the joint components ($\mathbf{J}_1, \mathbf{J}_2$) and individual components ($\mathbf{I}_1, \mathbf{I}_2$).

\textbf{ShIndICA (Shared and Individual Independent Component Analysis)} \citep{pandevaMultiViewIndependentComponent2023}: ShIndICA is a method that decomposes the data based on statistical independence (a principle derived from Independent Component Analysis, ICA), rather than orthogonal variance (like SVD-based methods). The goal is to find sources ($\mathbf{Z}$) that are maximally non-Gaussian. The decomposition is achieved by maximizing the non-Gaussianity of the source signals while enforcing a penalty that aligns the shared sources across modalities. Crucially, ShIndICA implements an automatic model selection procedure by testing a range of possible joint ranks and selecting the one that minimizes the Normalized Reconstruction Error (NRE) on a held-out test split. This means that the method needs to be run over various settings. We tested joint rank options $[1, 5, 10, 20]$.
\newpage
\section{Supplementary Results}

\subsection{Robustness to hyperparameter choice and data characteristics}\label{sec:robustness_results}

\textbf{Experimental setup.} We evaluated the robustness of our method's general ID estimation capability with a comprehensive hyperparameter grid search on dataset $\rmA$ over distortion budget $\lambda$, rank reduction frequency $\tau$, rank reduction threshold $\gamma$, and patience $\pi$. We further tested FiGuRO's stability across a range of data characteristics, including varying sample sizes, different generative distributions, levels and types of nonlinearity, and increasing levels of noise and dropout. Details on the experimental design and training are provided in \ref{sec:hyperparameters} and \ref{sec:robustness}. We further tested different distortion metrics with respect to nonlinearity and dropout, which affect data distribution and potentially the appropriateness of the metrics. Metrics are described in \ref{sec:metrics}. Lastly, we demonstrate the training dynamics on popular 3D Manifold datasets: Sphere, Swiss Roll, and S-curve.

\textbf{Results.} Across the entire hyperparameter sweep (N=1080 runs), our method gave a robust average estimate of $4.89 \pm 0.01$ standard error of the mean (SEM) for a true ID of 5. Summary statistics of the sweep are provided in Supplementary Table \ref{tab:hyperparameter-sweep}. This sweep revealed a mild dependence on $\lambda$, confirming that the distortion budget is the main driver of the stopping criterion. Dependency on $\gamma$ was negligible. This stability allowed us to select a single configuration balancing performance and speed $\{\lambda=0.05, \tau=10, \gamma=0.01, \pi=10\}$ for subsequent experiments. 
The method was also robust to most data characteristics. Accurate estimates were achieved for sample sizes N$\geq$5000 (see \ref{app:sample-size} for a discussion on general sample size requirements). Certain generative distributions (Beta, Gumbel) and high degrees of nonlinearity led to underestimation. Conversely, high levels of dropout and noise (signal-to-noise ratio $<$ 7) caused the method to overestimate the true ID (Supplementary Figure \ref{fig:robustness}). Dropout also had a significant effect on the robustness of different distortion metrics. Increases of ID estimates with dropout were lowest among $R^2$ and the Explained Variance Score. Given different types of nonlinearities, $R^2$, MSE, and RMSE were most robust with smallest deviations from the true ID (Supplementary Figure \ref{fig:distortionmetrics}). As a result, we identified $R^2$ as the best overall distortion metric for our experiments. Applied to popular 3D Manifolds, we see the necessity of rank reduction \emph{and} increase. Figure \ref{fig:3d-data} and Supplementary Figure \ref{fig:3d-data-supp} show that ranks often dropped too low before being increased to the true ID.

\subsection{Qualitative evaluation on $> 2$ modalities}\label{sec:n_greater_2}

\textbf{Experimental setup.} The Non-Invasive Multimodal Foetal ECG-Doppler (NInFEA) dataset \citep{sulasNoninvasiveMultimodalFoetal2021} contains 60 synchronized recordings of maternal and fetal physiology from high-density electrocardiography (ECG) from the maternal abdomen and thorax, a maternal respiration signal, and Pulsed-Wave Doppler (PWD) ultrasound of the fetal heart. While we do not have ground truth IDs for this data, we can evaluate the relative amounts of shared information between modalities relying on clinical intuition. We applied FiGuRO on all pairs of the four modalities and report the fraction of shared over total ranks. We also trained a model on three modalities at once. Architecture and training are adjusted to the temporal and image modalities (see \ref{sec:training}).

\textbf{Results.} Applying FiGuRO to the NInFEA dataset allowed us to quantitatively assess information overlap between modalities. Our estimated shared-to-total rank ratios (Supplementary Table \ref{tab:ninfea}) confirm established clinical and physiological knowledge: the fetal ECG (fECG) and Pulse-Wave Doppler (fPWD) show the largest fraction of shared information ($0.54$), reflecting their common underlying fetal cardiac process \citep{sulasNoninvasiveMultimodalFoetal2021}. Other pairs, such as the maternal ECG (mECG) and respiration (mR), also showed the expected synchronization overlap ($0.47$) \citep{yasumaRespiratorySinusArrhythmia2004}. Conversely, the low overlap between fPWD and the maternal modalities (mECG/mR) was also consistent with expectations. Results from a three-modality version (fECG, mR, and fPWD) again highlighted the strongest overlap between fECG and fPWD (Supplementary Table \ref{tab:threemodalities}), suggesting the feasibility of applying FiGuRO to settings with more than two modalities.

\newpage
\section{Supplementary Tables}

\begin{table}[h!]
\centering
\caption{\textbf{FiGuRO Hyperparameter sensitivity analysis.} We report the mean estimated rank and mean deviation from the ground truth (GT) ID ($\pm$ SEM) across all simulated datasets and random seeds for the given number of samples (N). Bold parameter values indicate chosen hyperparameters, bold ranks show best results based on rank and deviation (if applicable).}
\label{tab:hyperparameter-sweep}
\begin{tabular}{llccc}
\toprule
\textbf{Hyperparameter} & \textbf{Value} & \textbf{Estimated Rank} & \textbf{Deviation from GT} & \textbf{Sample size (N)} \\
\midrule
\multicolumn{1}{l}{Distortion threshold ($\lambda$)} & 0.005 & $6.56 \pm 0.04$ & $1.65 \pm 0.04$ & 180 \\
& 0.01 & $6.16 \pm 0.04$ & $1.29 \pm 0.04$ & \\
& \textbf{0.05} & $\mathbf{5.02 \pm 0.01}$ & $\mathbf{0.50 \pm 0.01}$ & \\
& 0.1 & $4.45 \pm 0.01$ & $0.68 \pm 0.01$ & \\
& 0.15 & $3.94 \pm 0.00$ & $1.06 \pm 0.00$ & \\
& 0.2 & $3.57 \pm 0.01$ & $1.43 \pm 0.01$ & \\
\multicolumn{1}{l}{Frequency ($\tau$)} & 5 & $5.24 \pm 0.02$ & $1.17 \pm 0.02$ & 360 \\
& \textbf{10} & $\mathbf{4.81 \pm 0.02}$ & $\mathbf{1.02 \pm 0.02}$ & \\
& 20 & $4.48 \pm 0.02$ & $1.11 \pm 0.02$ & \\
\multicolumn{1}{l}{Energy threshold $\gamma$} & 0.0001 & $\mathbf{4.99 \pm 0.02}$ & $1.29 \pm 0.02$ & 270 \\
& 0.001 & $5.11 \pm 0.03$ & $1.16 \pm 0.02$ & \\
& \textbf{0.01} & $4.87 \pm 0.03$ & $1.09 \pm 0.03$ & \\
& 0.1 & $4.57 \pm 0.02$ & $\mathbf{0.85 \pm 0.01}$ & \\
\multicolumn{1}{l}{Patience ($\pi$)} & 5 & $5.30 \pm 0.04$ & $1.23 \pm 0.04$ & 216 \\
& \textbf{10} & $4.73 \pm 0.02$ & $\mathbf{0.88 \pm 0.02}$ & \\
& 20 & $5.06 \pm 0.03$ & $1.08 \pm 0.03$ & \\
& 50 & $\mathbf{5.03 \pm 0.03}$ & $1.30 \pm 0.03$ & \\
& 100 & $4.62 \pm 0.01$ & $1.07 \pm 0.01$ & \\
\bottomrule
\end{tabular}
\end{table}

\begin{table*}[h]
\begin{center}
\caption{\textbf{Unimodal ID estimation benchmark.} We compare FiGuRO to a range of statistical estimators (Global, Local) and other neural network-based (NN) methods. nn refers to nearest neighbor. Ranges indicate estimates for varying hyperparameters (if applicable, in brackets next to the method). The respective hyperparameters and ranges tested for each method are described in \ref{sec:baselines}. Neural network-based methods are reported as means from three random seeds with SEM. % SEMs are reported in Supplementary Tables \ref{tab:sim_method_comp_supp_1}, \ref{tab:sim_method_comp_supp_2}. 
We round the best estimate for each method and indicate best overall estimates compared to the ground truth ID with bold numbers, second best underlined. Note: The overestimation observed for FLIPD illustrates their sensitivity to architectural capacity, a limitation of diffusion-based LID estimators. Standard MLPs can underfit complex manifolds and bias estimates toward the ambient dimension \citep{yeatsConnectionScoreMatching2025}.} \label{tab:sim_method_comp}
\begin{tabular}{p{2cm}p{2.5cm}p{2cm}p{2cm}p{3.1cm}p{2.5cm}}
\toprule
\textbf{Category} & \textbf{Method} & $\rmC_{\mathbf{1}}$ (ID = 11) & $\rmC_{\mathbf{2}}$ (ID = 12) & \textbf{Best (}$\rmC_{\mathbf{1}}$,$\rmC_{\mathbf{2}}$\textbf{)} & Wall time (min)\\%& \textbf{Swiss roll} & \textbf{S-curve} & \textbf{Sphere}\\
\midrule
Global & SVD ($CE$) & 1522 - 9424 & 1 - 104 & 1522, 1 ($CE = 80\%$) & 21.8 $\pm$ 0.4\\
 & PCA ($\sigma^2$) & 4040 - 9283 & \underline{12 - 29} & 4040, 12 ($\sigma^2 = 0.8$) & 15.4 $\pm$ 0.2\\
 & BEMA ($\%ile$) & 570 - 948 & 273 - 1322 & 570, 273 ($\%ile = 99$) & 113.9 $\pm$ 15.5\\
Local, proj. & lPCA ($\alpha$) & 4-8862 & 2-86 & \textbf{10, 9} ($\alpha = 0.05$) & 40.5 $\pm$ 2.0\\
& FisherS & 3.4 & 1.8 & 3, 2 & 29.6 $\pm$ 0.4\\
Local, geom. & CorrInt ($k_1',k_2'$) & 5.2 - 5.6 & 2.9 - 3.3 & 6, 3 ($k_1' = 2, k_2' = 5$) & 3.0 $\pm$ 0.0\\
& TLE & 44 & 2.9 & 44, 3 & 3.2 $\pm$ 0.3\\
Local, nn & MindML ($k'$) & 68 - 172 & 2.6 - 3.0 & 68, 3 ($k' = 100$) & 1.8 $\pm$ 0.0\\
& MLE & 53 & 2.4 & 53, 2 & 1.6 $\pm$ 0.0\\ % MLE is supposed to be more robust to noise
& TwoNN ($\frac{r_2}{r_1}$) & 130 - 229 & 3.1 - 3.3 & 130, 3 ($\frac{r_2}{r_1} = 0$) & 3.1 $\pm$ 0.1\\
NN & Loss cliff ($\lambda'$) & \mbox{512.0 $\pm$ 0.0 -} \mbox{728.0 $\pm$ 0.0} & \mbox{68.0 $\pm$ 5.0 -} \mbox{258.7 $\pm$ 188.2} & \mbox{512.0 $\pm$ 0.0,} \mbox{68 $\pm$ 5.0} ($\lambda' = 0.8$) & 2282.3 $\pm$ 14.5\\
 & ARD-VAE ($\beta$)\footnotemark & \underline{\mbox{1.0 $\pm$ 0.0} -} \underline{\mbox{14.0 $\pm$ 0.0}} & \mbox{1.0 $\pm$ 0.0 -} 93.3 $\pm$ 92.3 & 14.0 $\pm$ 0.6 , 7.3 $\pm$ 6.3 ($\beta = 0.001$) & 127.7 $\pm$ 23.0\\
 & RRA ($\gamma$) & \mbox{5.3 $\pm$ 0.2 -} \mbox{146 $\pm$ 0.6} & \mbox{1.3 $\pm$ 0.3 -} \mbox{138 $\pm$ 6.0} &  \underline{14.0 $\pm$ 1.0, 8.7 $\pm$ 0.7} ($\gamma = 0.05$) & 42.8 $\pm$ 6.2\\
  & FLIPD ($t_0$) & 9.2 $\pm$ 0.4 - 998.1 $\pm$ 0.2 & 16.0 $\pm$ 0.5 - 828.8 $\pm$ 1.2 & 9.2 $\pm$ 0.4, 16.0 $\pm$ 0.5 (manual, $t_0 = 0.7$) & 27.9 $\pm$ 2.2 \\
  & & & & \mbox{130.5 $\pm$ 0.6,} \mbox{174.3 $\pm$ 3.1} \mbox{(knee: $t_0 = 0.4$,} $t_0 = 0.15$) & \\
Ours & FiGuRO ($\lambda$) & \textbf{\mbox{10.4 $\pm$ 0.2 -} \mbox{20.0 $\pm$ 0.6}} & \textbf{\mbox{4.3 $\pm$ 0.0 -} \mbox{11.7 $\pm$ 0.9}} & 15.3 $\pm$ 2.2, 5.7 $\pm$ 0.3 ($\lambda = 0.05$) & 153.5 $\pm$ 3.3\\
\bottomrule
\end{tabular}
\end{center}
\end{table*}
\footnotetext{Two out of three random seeds resulted in posterior collapse and ID estimates of 1 for $\rmC_{\mathbf{2}}$. For $\rmC_{\mathbf{1}}$, KL weights $\beta \leq 0.0001$ resulted in posterior collapse and an ID estimate of 1.}

\begin{table*}[h]
\begin{center}
\caption{\textbf{Effective ranks vs. number of generative variables.} We report the number of generative variables used to create each subspace GT matrix in datasets $\rmB$, as well as the matrices' effective ranks (entropy-based effective dimensionality of the spectrum with a threshold of $10^{-12}$ for non-zero elements) computed from 10000 samples.} \label{tab:mm_effectiveranks}
\begin{tabular}{p{4.5cm}p{2.5cm}p{2.5cm}p{2.5cm}p{2.5cm}}
\toprule
 & $\mathbf{B_s}$ & $\mathbf{B_{i1}}$ & $\mathbf{B_{i2}}$ & $\mathbf{B_l}$\\
\midrule
Number of generative variables & 2, 3, 5 & 20, 2, 2 & 2, 2, 20 & 20, 20, 20 \\
Effective rank of the matrix & 2.00, 2.59, 1.64 & 6.61, 1.94, 1.16 & 2.00, 1.98, 8.20 & 1.63, 18.55, 10.23 \\
\bottomrule
\end{tabular}
\end{center}
\end{table*}

%\begin{table*}[h]
%\begin{center}
%\caption{\textbf{Ablation study.} This table compares the average estimated ranks ($\mathbf{k_s}, \mathbf{k_1}, \mathbf{k_2}$) of a naive SVD-only rank reduction model against the full FiGuRO implementation (SVD + R(D) algorithm) and ground truth (GT) across datasets $\rmB$. We report mean $\pm$ SEM on three random seeds.} \label{tab:mm_ablation}
%\begin{tabular}{p{1.5cm}p{1.5cm}p{2cm}p{2cm}p{2cm}p{2cm}}
%\toprule
%Method & Subspace & $\mathbf{B_s}$ & $\mathbf{B_{i1}}$ & $\mathbf{B_{i2}}$ & $\mathbf{B_l}$\\
%\midrule
%GT & $k_s$ & 2 & 20 & 2 & 20 \\
%& $k_1$ & 3 & 2 & 2 & 20 \\
%& $k_2$ & 5 & 2 & 20 & 20 \\
%\multirow[t]{3}{1.5cm}{FiGuRO (SVD)} & $k_s$ & 1.0 $\pm$ 0.0 & 1.0 $\pm$ 0.0 & 1.0 $\pm$ 0.0 & 1.0 $\pm$ 0.0 \\
%& $k_1$ & 1.0 $\pm$ 0.0 & 1.0 $\pm$ 0.0 & 1.0 $\pm$ 0.0 & 1.0 $\pm$ 0.0 \\
%& $k_2$ & 1.0 $\pm$ 0.0 & 1.0 $\pm$ 0.0 & 1.0 $\pm$ 0.0 & 1.0 $\pm$ 0.0 \\
%\multirow[t]{3}{1.5cm}{FiGuRO \mbox{(SVD +} R(D))} & $k_s$ & 3.6 $\pm$ 0.06 & 13.4 $\pm$ 0.4 & 7.0 $\pm$ 0.0 & 19.2 $\pm$ 0.7 \\
%& $k_1$ & 1.0 $\pm$ 0.0 & 4.4 $\pm$ 0.2 & 1.0 $\pm$ 0.0 & 12.8 $\pm$ 1.0 \\
%& $k_2$ & 6.2 $\pm$ 1.8 & 4.0 $\pm$ 0.5 & 19.4 $\pm$ 1.5 & 15.2 $\pm$ 1.3 \\
%\bottomrule
%\end{tabular}
%\end{center}
%\end{table*}

\begin{table*}[h]
\begin{center}
\caption{\textbf{ID estimation with $\text{L}_1$ regularization.} Comparison of the average estimated ranks ($\mathbf{k_s}, \mathbf{k_1}, \mathbf{k_2}$) when adding varying weights of $\text{L}_1$ regularization to the latent spaces. We report only the mean for three random seeds.}\label{tab:mm_l1_rank}
\begin{tabular}{p{1.5cm}p{2cm}p{2cm}p{2cm}p{2.2cm}}
\toprule
L1 weight & $\mathbf{B_s}$ & $\mathbf{B_{i1}}$ & $\mathbf{B_{i2}}$ & $\mathbf{B_l}$\\
\midrule
0.00 & 3.6, 1.0, 6.2 & 13.4, 4.4, 4.0 & 7.0, 1.0, 19.4 & 19.2, 12.8, 15.2\\
0.01 & 1.0, 1.0, 2.3 & 9.7, 6.3, 3.3 & 1.0, 1.0, 6.0 & 14.0, 9.0, 14.3\\
0.10 & 1.0, 1.0, 2.0 & 10.3, 5.7, 4.3 & 1.0, 1.0, 17.7 & 14.3, 8.7, 15.3\\
1.00 & 1.0, 1.0, 2.0 & 10.0, 6.0, 3.0 & 1.0, 1.0, 9.7 & 14.0, 9.0, 15.0\\
10.00 & 1.0, 1.0, 2.7 & 10.3, 6.3, 2.0 & 1.0, 1.0, 10.0 & 14.0, 10.7, 14.3\\
\bottomrule
\end{tabular}
\end{center}
\end{table*}

\begin{table*}[h]
\begin{center}
\caption{\textbf{Disentanglement with $\text{L}_1$ regularization.} We report mean classification accuracy of label 0 accuracy from the shared subspace ($Acc_s$) and predictability ($R^2$) of label 1 and 2 from their respective subspaces in format ($Acc_s$, $R^2_1$, $R^2_2$) per $\text{L}_1$ weight and dataset (N=3).} \label{tab:mm_l1_pred}
\begin{tabular}{p{1.5cm}p{2.5cm}p{2.5cm}p{2.5cm}p{2.5cm}}
\toprule
L1 weight & $\mathbf{B_s}$ & $\mathbf{B_{i1}}$ & $\mathbf{B_{i2}}$ & $\mathbf{B_l}$\\
\midrule
0.00 & 1.00, 0.29, 0.94 & 0.98, 0.99, 1.00 & 1.00, 0.61, 0.98 & 0.99, 0.99, 0.99\\
0.01 & 0.74, 0.04, 0.92 & 0.94, 0.92, 0.97 & 0.75, 0.50, 0.77 & 0.91, 0.45, 0.82\\
0.10 & 0.75, 0.33, 0.96 & 0.94, 0.91, 0.98 & 0.83, 0.20, 0.83 & 0.90, 0.31, 0.86\\
1.00 & 0.82, 0.34, 0.94 & 0.94, 0.93, 0.97 & 0.79, 0.50, 0.96 & 0.90, 0.35, 0.84\\
10.00 & 0.88, 0.41, 0.94 & 0.94, 0.92, 0.97 & 0.73, 0.50, 0.64 & 0.91, 0.65, 0.85\\
\bottomrule
\end{tabular}
\end{center}
\end{table*}

\begin{table*}[h]
\begin{center}
\caption{\textbf{ID estimation with orthogonal loss (Frobenius norm).} Comparison of the average estimated ranks ($\mathbf{k_s}, \mathbf{k_1}, \mathbf{k_2}$) when adding varying weights to the latent spaces. We report only the mean for three random seeds.} \label{tab:mm_ortho_rank}
\begin{tabular}{p{1.5cm}p{2cm}p{2cm}p{2cm}p{2.2cm}}
\toprule
Orthogonal weight & $\mathbf{B_s}$ & $\mathbf{B_{i1}}$ & $\mathbf{B_{i2}}$ & $\mathbf{B_l}$\\
\midrule
0.00 & 3.6, 1.0, 6.2 & 13.4, 4.4, 4.0 & 7.0, 1.0, 19.4 & 19.2, 12.8, 15.2\\
0.01 & 1.0, 1.0, 2.3 & 10.3, 6.3, 2.0 & 1.0, 1.0, 16.3 & 15.0, 8.3, 16.0\\
0.10 & 1.0, 1.0, 2.0 & 10.3, 6.3, 2.3 & 1.0, 1.0, 12.7 & 13.7, 7.7, 15.7\\
1.00 & 1.3, 1.7, 3.3 & 14.0, 2.0, 3.3 & 1.0, 1.7, 11.0 & 13.7, 9.0, 18.0\\
10.00 & 1.3, 1.7, 5.7 & 14.0, 5.3, 5.3 & 1.0, 1.7, 14.3 & 15.0, 12.7, 30.7\\
\bottomrule
\end{tabular}
\end{center}
\end{table*}

\begin{table*}[h]
\begin{center}
\caption{\textbf{Disentanglement with orthogonal loss (Frobenius norm).} We report mean classification accuracy of label 0 accuracy from the shared subspace ($Acc_s$) and predictability ($R^2$) of label 1 and 2 from their respective subspaces in format ($Acc_s$, $R^2_1$, $R^2_2$) per weight and dataset (N=3).}\label{tab:mm_ortho_pred}
\begin{tabular}{p{1.5cm}p{2.5cm}p{2.5cm}p{2.5cm}p{2.5cm}}
\toprule
Orthogonal weight & $\mathbf{B_s}$ & $\mathbf{B_{i1}}$ & $\mathbf{B_{i2}}$ & $\mathbf{B_l}$\\
\midrule
0.00 & 1.00, 0.29, 0.94 & 0.98, 0.99, 1.00 & 1.00, 0.61, 0.98 & 0.99, 0.99, 0.99\\
0.01 & 0.74, 0.01, 0.35 & 0.94, 0.91, 0.96 & 1.00, 0.08, 0.83 & 0.90, 0.18, 0.86\\
0.10 & 0.84, 0.08, 0.49 & 0.94, 0.95, 0.95 & 0.77, 0.05, 0.84 & 0.89, 0.22, 0.86\\
1.00 & 0.50, 0.15, 0.58 & 0.94, 0.63, 0.97 & 0.25, 0.03, 0.98 & 0.90, 0.47, 0.78\\
10.00 & 0.47, 0.07, 0.89 & 0.94, 0.83, 0.98 & 0.44, 0.34, 0.94 & 0.89, 0.14, 0.81\\
\bottomrule
\end{tabular}
\end{center}
\end{table*}

\begin{table*}[h]
\begin{center}
\caption{\textbf{Edge case: Multi-modal data without shared information. $\text{L}_2$ regularization enforces disentanglement.} This table shows the effect of increasing the $\text{L}_2$ weight on the decoder's first layer when trained on a simulated dataset with \textbf{no true shared ID ($k_{s,GT}=0$)} and modality-specific IDs of 20 (a modification of $\rmB_l$). A lower shared rank $k_s$ is better. $k_{tot}$ refers to the total rank (sum of all subspace ranks). Predictability of each modality's label is given by the goodness of fit $R^2$. This was done for three random seeds, showing the mean and SEM.} \label{tab:mm_noshared}
\begin{tabular}{p{1.5cm}p{2cm}p{2cm}p{1.5cm}p{2cm}p{2cm}}
\toprule
L2 weight & $k_s \downarrow$ & $k_{tot}$ & $k_s/k_{tot} \downarrow$ & $R^2$ label 1 $\uparrow$ & $R^2$ label 2 $\uparrow$\\
\midrule
0.00 & 9.7 $\pm$ 0.3 & 23.7 $\pm$ 0.3 & 0.40 & 0.40 $\pm$ 0.14 & 0.99 $\pm$ 0.00\\
0.01 & 4.3 $\pm$ 1.9 & 23.7 $\pm$ 0.7 & 0.18 & 0.69 $\pm$ 0.20 & 0.98 $\pm$ 0.01\\
0.10 & 7.0 $\pm$ 5.0 & 31.5 $\pm$ 8.5 & 0.22 & 0.74 $\pm$ 0.25 & 0.99 $\pm$ 0.00\\
1.00 & 2.7 $\pm$ 0.7 & 26.7 $\pm$ 9.2 & 0.10 & 0.99 $\pm$ 0.01 & 0.98 $\pm$ 0.01\\
10.00 & 1.0 $\pm$ 0.0 & 12.0 $\pm$ 1.7 & 0.08 & 0.97 $\pm$ 0.01 & 0.98 $\pm$ 0.01\\
\bottomrule
\end{tabular}
\end{center}
\end{table*}

\begin{table*}[h]
\begin{center}
\caption{\textbf{Wall times in minutes for multi-modal baselines and FiGuRO.} We report wall times in minutes for runs with seed 42 (unless stated otherwise) on a system with RTX A5000 and AMD Ryzen Threadripper PRO 3975WX.} \label{tab:mm_times}
\begin{tabular}{p{3cm}p{2cm}p{2cm}p{2cm}p{2cm}p{2.5cm}}
    \toprule
    \textbf{Data} & JIVE & AJIVE & SLIDE & ShIndICA & \textbf{FiGuRO (ours)} \\
    \midrule
    $\rmB$ \mbox{(all 4 sets, 3 runs)} & 0.11 $\pm$ 0.04 & 0.02 $\pm$ 0.00 & 0.03 $\pm$ 0.00 & 2.25 $\pm$ 0.36 & 8.65 $\pm$ 0.60\\
    \mbox{Audio MNIST} \mbox{(from embeddings)} & 1.70 & 0.68 & 0.25 & 8.54 & 16.79 \\ 
    \mbox{Audio MNIST} \mbox{(from raw} & 299.98 & 19.00 & 14.43 & 507.83 & 60.49\\ 
    %\mbox{Audio MNIST} \mbox{(from raw,} \mbox{not full spectrum)} & 299.98 & 19.00 & 14.43 & 507.83 & 60.49\\ 
    %\mbox{Audio MNIST} \mbox{(from raw,} \mbox{full spectrum)} & & & & & 497.06 \\
    %\mbox{$\rmC$} & & 271.93 & & 859.46 & \\
\bottomrule
\end{tabular}
\end{center}
\end{table*}

\begin{table*}[h!]
    \caption{\textbf{Comparison of predictability performance per label and subspace across multi-modal decomposition methods for simulated datasets $\mathbf{B}$.} Columns 4-7 present baseline methods from multi-view data decomposition. Our method (FiGuRO) is listed at the end. Predictability performance is reported in triplets as performance on shared, private 1, and private 2 subspace. Rows per dataset indicate the label that was predicted and the metric that was used for evaluation ($Acc$: classification accuracy, $R^2$: Goodness of fit). Highest values for the corresponding space and label are highlighted in bold, second best underlined (if applicable). For baseline methods, we report the average of joint\_X and joint\_Y for the shared performance on the shared label, and the corresponding joint values for the specific labels. For FiGuRO, we present the mean predictability metrics (ranges shown in Supplementary Figures 4-7.%We report average deviation from ground truth at the end including the standard error of the mean SEM. For our method, we used a distortion budget of $\lambda = 0.05$.
    }
    \centering
    \begin{small}
    \begin{tabular}{p{0.5cm}p{0.7cm}p{1.5cm}p{2cm}p{2cm}p{2cm}p{2cm}p{2cm}}
    \toprule
    \textbf{Data} & \textbf{Metric} & \textbf{Label} & JIVE & AJIVE & SLIDE & ShIndICA & \textbf{FiGuRO (ours)} \\
    \midrule
    \multirow[t]{3}{*}{$\rmB_{s}$} & $Acc$ & Shared & \underline{0.93}, 1.00, 1.00 & 0.24, 1.00, 1.00 & 0.71, 1.00, 1.00 & 0.95, 0.59, 0.59 & \textbf{1.00}, 0.28, 0.38\\
     & $R^2$ & Modality 1 & 1.00, 0.00, 0.00 & 1.00, 0.00, 0.00 & 1.00, 0.00, 0.00 & 0.67, 0.02, 0.02 & 0.66, \textbf{0.34}, 0.00\\
     & $R^2$ & Modality 2 & 1.00, 0.00, 0.00 & 1.00, 0.00, 0.00 & 1.00, 0.00, 0.00 & 0.98, 0.00, 0.00 & 0.38, 0.00, \textbf{0.93}\\
    \multirow[t]{3}{*}{$\rmB_{i1}$} & $Acc$ & Shared & \textbf{0.96}, 0.24, 0.24 & 0.20, 0.96, 0.96 & 0.51, 0.80, 0.80 & 0.93, 0.28, 0.28 & \underline{0.94}, 0.27, 0.34\\
     & $R^2$ & Modality 1 & 1.00, 0.00, 0.00 & 1.00, 0.00, 0.00 & 1.00, 0.00, 0.00 & 0.84, 0.01, 0.01 & 0.00, \textbf{0.70}, 0.00\\
     & $R^2$ & Modality 2 & 1.00, 0.00, 0.00 & 1.00, 0.00, 0.00 & 0.97, 0.00, 0.00 & 1.00, 0.00, 0.00 & 0.00, 0.00, \textbf{0.97}\\
    \multirow[t]{3}{*}{$\rmB_{i2}$} &$Acc$ & Shared & \textbf{1.00}, 1.00, 1.00 & 0.23, 1.00, 1.00 & 0.56, 1.00, 1.00 & \underline{0.95}, 0.52, 0.51 & \textbf{1.00}, 0.36, 0.25\\
     & $R^2$ & Modality 1 & 1.00, \underline{0.25}, 0.00 & 1.00, 0.00, 0.00 & 1.00, 0.00, 0.00 & 0.77, 0.02, 0.02 & 0.56, \textbf{0.61}, 0.00\\
     & $R^2$ & Modality 2 & 1.00, 0.00, 0.00 & 1.00, 0.00, 0.00 & 1.00, 0.00, 0.00 & 0.99, 0.00, 0.00 & 0.75, 0.00, \textbf{0.72}\\
    \multirow[t]{3}{*}{$\rmB_{l}$} & $Acc$ & Shared & \textbf{0.93}, 0.25, 0.25 & 0.19, 0.94, 0.94 & 0.49, 0.98, 0.98 & 0.69, 0.60, 0.60 & \underline{0.92}, 0.23, 0.55\\
     & $R^2$ & Modality 1 & 1.00, 0.00, 0.00 & 1.00, 0.00, 0.00 & 92, 0.00, 0.00 & -0.03, 0.02, 0.02 & 0.33, \textbf{0.64}, 0.00\\
     & $R^2$ & Modality 2 & 1.00, 0.00, 0.00 & 1.00, 0.00, 0.00 & 0.85, 0.00, 0.00 & 0.86, 0.03, 0.03 & 0.00, 0.00, \textbf{0.92}\\
     \bottomrule
    \end{tabular}
    \end{small}
    \label{tab:mm_baseline_predictability}
\end{table*}

\begin{table*}[h]
\caption{\textbf{Information overlap between pairs of NInFEA modalities measured as the ratio of shared over total rank.} The total rank is calculated as the sum of all subspace ranks. ``*'' denotes a modality pair we initially expected to have a small overlap but showed strong technical bias from one modality onto the other.} \label{tab:ninfea}
\begin{center}
\begin{tabular}{lllllll}
\toprule
& fECG-mECG & fECG-fPWD & mECG-mR & fECG-mR & mECG-fPWD & mR-fPWD \\
\midrule
$k_s$ / $k_{tot}$ & \textbf{0.38} & \textbf{0.54} & \textbf{0.47} & 0.37* & 0.17 & 0.13 \\
%$k_s$-$k_1$-$k_2$ & 10-9-7 & 7-1-5 & 8-6-3 & 7-8-4 & 2-2-8 & 2-2-12 \\
\bottomrule
\end{tabular}
\end{center}
\end{table*}

\begin{table}[h!]
    \caption{\textbf{Subspace IDs for FiGuRO with three modalities.} Trained on NInFEA.}
    \centering
    \begin{tabular}{ll}
        \toprule
        \textbf{Subspace}  & \textbf{ID} \\
        \midrule
        global shared & 3 \\
        fECG + respiration & 3 \\
        fECG + PWD & 8 \\
        respiration + PWD & 5\\
        fECG & 1\\
        respiration & 1\\
        PWD & 1\\
        \bottomrule
    \end{tabular}
    \label{tab:threemodalities}
\end{table}

\begin{table*}[h]
\begin{center}
\caption{\textbf{Reconstruction performance before and after rank optimization.} $\mathcal{L}$ indicates the loss (MSE), with $\Delta$ indicating the difference in loss from initial ($t=0$) to final rank ($\mathcal{L}^{t} - \mathcal{L}^{0}$). We report mean and SEM over three random seeds. $\%$ indicates the relative increase in loss. For final ranks we only show mean, for losses we show mean +- SEM. $k_{0}$ refers to the initial sum of max ranks we set for each subspaces, $k_{total}$ to the mean sum of all final ranks over the random seeds.%Train losses are higher for Audio MNIST and So2Sat due to dropout.
} \label{tab:mm_real_recon}
%\begin{tabular}{p{2.0cm}p{0.8cm}p{0.8cm}p{1.3cm}p{1.3cm}p{1.3cm}p{1.3cm}p{0.8cm}p{0.8cm}}
\begin{tabular}{p{3.0cm}p{0.8cm}p{0.8cm}p{2.5cm}p{2.5cm}p{0.8cm}p{0.8cm}}
\toprule
%\textbf{Dataset} & $k_{0}$ & $k_{total}$ & $\mathcal{L}_{train}$ & $\mathcal{L}_{test}$ & $\Delta \mathcal{L}_{train}$ & $\Delta \mathcal{L}_{test}$ & $\%_{train}$ & $\%_{test}$\\
\textbf{Dataset} & $k_{0}$ & $k_{total}$ & $\mathcal{L}_{test}$ & $\Delta \mathcal{L}_{test}$ & $\%_{test}$\\
\midrule
%Audio MNIST & 600 & 16.4 & 0.6001 $\pm$ 0.0025 & 0.3153 $\pm$ 0.0013 & 0.0061 $\pm$ 0.0010 & 0.0007 $\pm$ 0.0003 & 1.0\% & 0.2\% \\
Audio MNIST ($i$,$a$) & 600 & 16.4 & 0.3153 $\pm$ 0.0013 & 0.0007 $\pm$ 0.0003 & 0.2\% \\
%So2Sat & 600 & 85.7 & 0.5273 $\pm$ 0.0102 & 0.3144 $\pm$ 0.0056 & 0.0643 $\pm$ 0.0021 & 0.0470 $\pm$ 0.0038 & 12.2\% & 1.2\% \\
So2Sat ($r$,$i$) & 600 & 85.7 & 0.3144 $\pm$ 0.0056 & 0.0470 $\pm$ 0.0038 & 1.2\% \\
%NYU Depth V2 & 1500 & 303.4 & 0.0065 ± 0.0001 & 0.0213 ± 0.0001 & 0.0027 ± 0.0001 & 0.0004 ± 0.0001 & 41.5\% & 1.9\% \\
NYU Depth V2 ($i,d$) & 1500 & 303.4 & 0.0213 ± 0.0001 & 0.0004 ± 0.0001 & 1.9\% \\
\bottomrule
\end{tabular}
\end{center}
\end{table*}

\begin{table*}[h]
\begin{center}
\caption{\textbf{ID estimation on real multi-modal datasets.} We report ID estimates for shared ($k_s$) and modality-specific subspaces on different datasets and methods. Our NN-based method was evaluated on 3 random seeds, reporting the mean $\pm$ SEM. We excluded CCA and DIVAS as they do not estimate individual subspaces, and PPD because it failed to return predictions on Audio MNIST after 2 days of running. For our method, we used a distortion budget of $\lambda = 0.05$.} \label{tab:mm_real_benchmark}
\begin{tabular}{p{3.0cm}p{1.5cm}p{1cm}p{1cm}p{1cm}p{1.2cm}p{2cm}}
\toprule
\textbf{Dataset} & \textbf{Subspace} & JIVE & AJIVE & SLIDE & ShIndICA & \textbf{Ours}\\
\midrule
\multirow[t]{3}{3cm}{Audio MNIST ($i$,$a$)} & $k_s$ & 1 & 9 & 1 & 1 & 9.7 $\pm$ 0.3\\
& $k_1$ & 133 & 135 & 117 & 117 & 1.7 $\pm$ 0.3\\
& $k_2$ & 230 & 485 & 234 & 234 & 5.0 $\pm$ 0.6\\
%\multirow[t]{3}{1.5cm}{MOSI} & $k_s$ & 1 1 1 & 119 97 100 & 1 1 1 & 5 20 10 & \\
%& $k_1$ & 462 462 & 343 367 & 461 461 & 457 442 & \\
%& $k_2$ & 500 500 & 381 400 & 499 499 & 495 490 & \\
%& $k_3$ & 392 392 & 297 294 & 391 391 & 372 382 & \\
\multirow[t]{3}{3cm}{So2Sat ($r$,$i$)} & $k_s$ & 1 & 221 & 1 & 5 & 32.0 $\pm$ 0.6 \\
& $k_1$ & 1619 & 1402 & 1618 & 1614 & 1.0 $\pm$ 0.0 \\
& $k_2$ & 1117 & 999 & 1116 & 1112 & 52.7 $\pm$ 1.9 \\
\multirow[t]{3}{3cm}{\mbox{NYU Depth} V2 ($i,d$)} & $k_s$ & 1 & 27 & 1 & 10 & 149.7 $\pm$ 0.7 \\
& $k_1$ & 379 & 353 & 403 & 394 & 143.7 $\pm$ 1.3 \\
& $k_2$ & 472 & 445 & 530 & 510 & 10.0 $\pm$ 0.0 \\
\bottomrule
\end{tabular}
\end{center}
\end{table*}

\begin{table*}[h]
\begin{center}
\caption{\textbf{AudioMNIST Sweep: Mean $\pm$ SEM rank deviations from base configuration.} This table reports the average deviations and standard error of the mean (SEM) over $n=3$ seeds across the shared and modality-specific (mod1, mod2) subspaces for various sweep parameters. We highlight the base configuration in \textbf{bold}.} \label{tab:audiomnist_sweep}
\begin{tabular}{p{3.5cm}p{2.5cm}p{2cm}p{2cm}p{2cm}}
\toprule
Parameter & Value & Shared & Modality 1 & Modality 2 \\
\midrule
distortion metric $D$ & \textbf{ExVarScore} & 0.0 $\pm$ 0.0 & 0.0 $\pm$ 0.0 & 0.0 $\pm$ 0.0 \\
 & MSE & 6.0 $\pm$ 0.6 & 0.0 $\pm$ 0.0 & 6.0 $\pm$ 0.6 \\
 & R2 & 3.3 $\pm$ 0.7 & -2.0 $\pm$ 0.0 & 3.3 $\pm$ 0.7 \\
 & RMSE & 4.3 $\pm$ 0.3 & 0.0 $\pm$ 0.0 & 4.3 $\pm$ 0.3 \\
patience $\pi$ & 5 & 2.0 $\pm$ 0.0 & 0.0 $\pm$ 0.0 & 2.0 $\pm$ 0.0 \\
 & \textbf{10} & 0.0 $\pm$ 0.0 & 0.0 $\pm$ 0.0 & 0.0 $\pm$ 0.0 \\
 & 20 & 0.0 $\pm$ 0.0 & 0.0 $\pm$ 0.0 & 0.0 $\pm$ 0.0 \\
distortion budget $\lambda$ & 0.01 & 4.3 $\pm$ 0.3 & 0.0 $\pm$ 0.0 & 4.3 $\pm$ 0.3 \\
 & \textbf{0.05} & 0.0 $\pm$ 0.0 & 0.0 $\pm$ 0.0 & 0.0 $\pm$ 0.0 \\
 & 0.1 & 0.0 $\pm$ 0.0 & 0.0 $\pm$ 0.0 & 0.0 $\pm$ 0.0 \\
reduction frequency $\tau$ & 2 & 4.0 $\pm$ 0.6 & 0.0 $\pm$ 0.0 & 4.0 $\pm$ 0.6 \\
 & 5 & 2.0 $\pm$ 0.0 & 0.0 $\pm$ 0.0 & 2.0 $\pm$ 0.0 \\
 & \textbf{10} & 0.0 $\pm$ 0.0 & 0.0 $\pm$ 0.0 & 0.0 $\pm$ 0.0 \\
 & 20 & 0.0 $\pm$ 0.0 & 0.0 $\pm$ 0.0 & 0.0 $\pm$ 0.0 \\
energy threshold $\gamma$ & 0.001 & 16.0 $\pm$ 0.0 & 0.0 $\pm$ 0.0 & 0.0 $\pm$ 0.0 \\
 & \textbf{0.01} & 0.0 $\pm$ 0.0 & 0.0 $\pm$ 0.0 & 0.0 $\pm$ 0.0 \\
 & 0.1 & -2.7 $\pm$ 0.3 & 0.3 $\pm$ 0.3 & -1.0 $\pm$ 0.0 \\
\bottomrule
\end{tabular}
\end{center}
\end{table*}

\clearpage

\section{Supplementary Figures}

\begin{figure}[h!]
    \centering
    \includegraphics[width=1\linewidth]{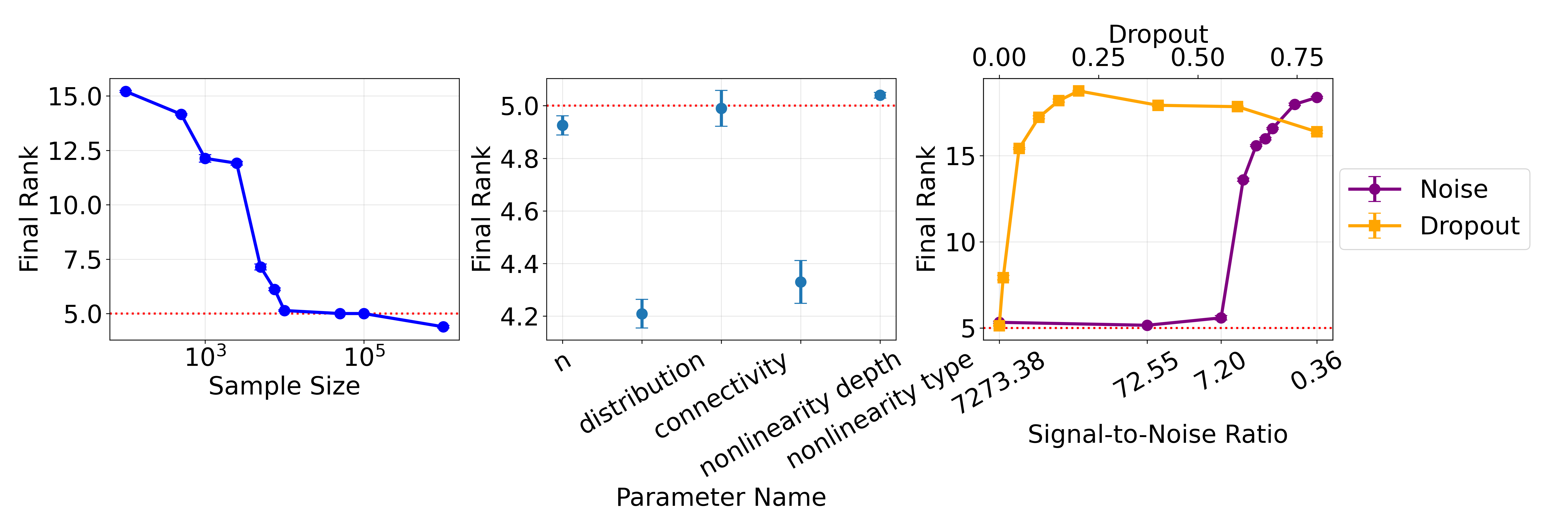}
    \includegraphics[width=1\linewidth]{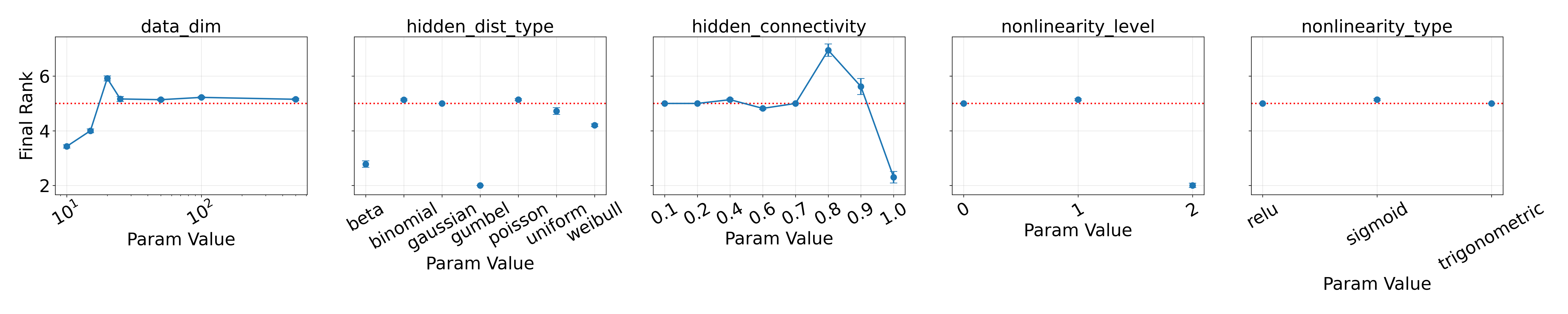}
    \caption{\textbf{Robustness tests.} We evaluated the method's stability on simulated data (ground truth ID of 5, red dotted lines) by systematically varying key generative parameters and running for three random seeds. \textbf{Top row:} The top left plot shows that the estimated rank converges to the true ID as sample size increases, stabilizing around N=5000. The middle plot displays the mean final rank across the different data characteristics we tested. The top right plot demonstrates robustness to noise up to a high signal-to-noise ratio (SNR) but shows overestimation with high levels of dropout. \textbf{Bottom Row} These plots provide a more detailed view of the top middle plot. Each subplot represents one of the data characteristics from the x axis of the top left plot. Error bars indicate the SEM from three random seeds.}
    \label{fig:robustness}
\end{figure}

\begin{figure}[h!]
    \centering
    \includegraphics[width=0.7\linewidth]{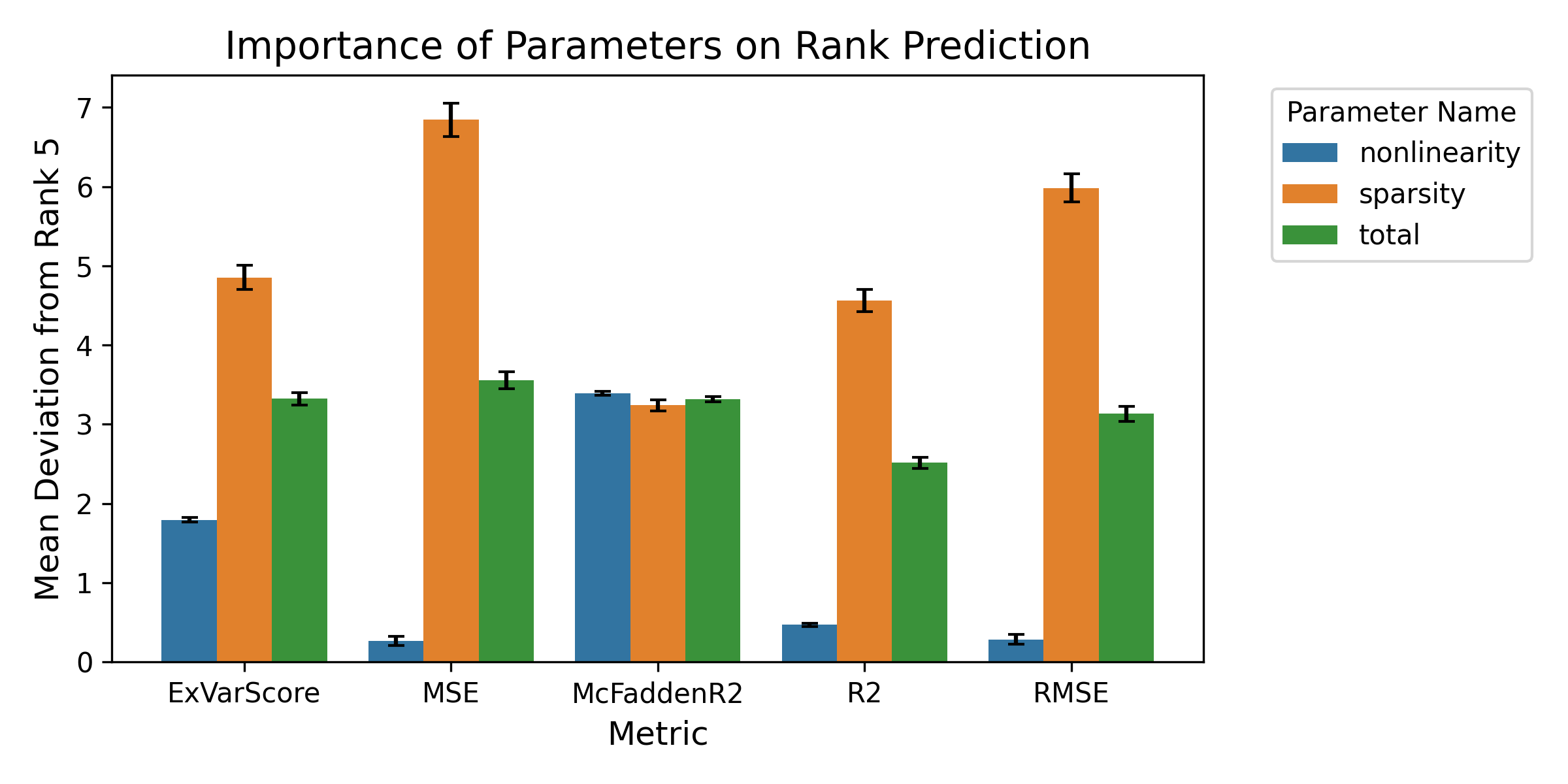}
    \caption{\textbf{Distortion metrics.} We evaluated five different distortion metrics by measuring the mean deviation of their ID estimates from the ground truth (ID=5) across simulated datasets with varying levels of nonlinearity and sparsity. We report the average deviation from the ground truth ID 5 with error bars indicating the SEM from three random seeds.}
    \label{fig:distortionmetrics}
\end{figure}

\begin{figure}[h!]
    \centering
    \includegraphics[width=1\linewidth]{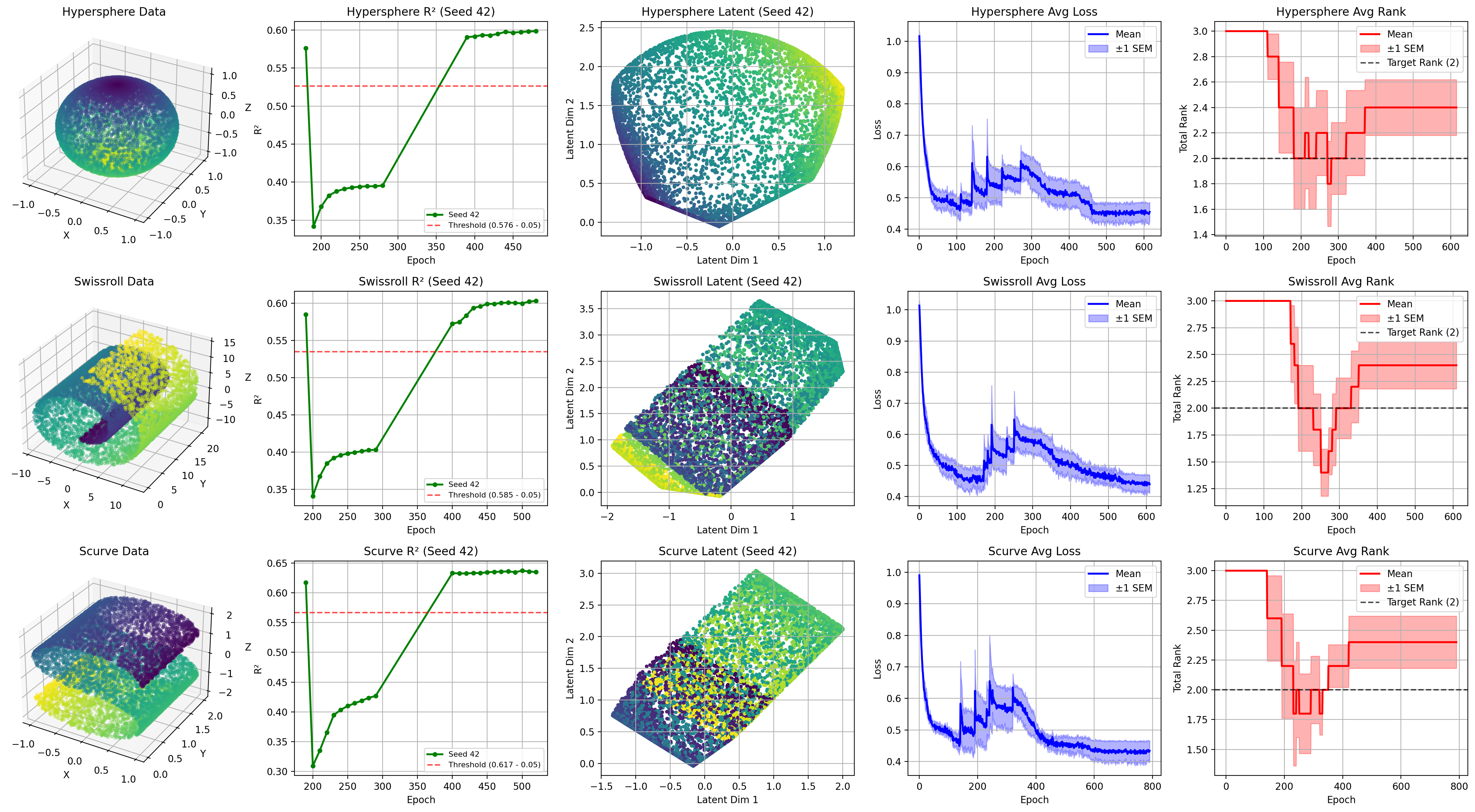}
    \caption{\textbf{Training dynamics on 3D Manifold datasets.} Each row corresponds to a different dataset: Hypersphere, Swiss Roll, and S-Curve. For each dataset, the columns show: the original 3D data, the R² metric during rank optimization (red dashed line showing the distortion budget), the learned 2D latent representation, the average training loss over 5 seeds, and the average rank convergence over 5 seeds. The target rank of 2 is indicated by the dashed grey line.}
    \label{fig:3d-data-supp}
\end{figure}

\begin{figure*}[h!]
    \centering
    \includegraphics[width=0.9\linewidth]{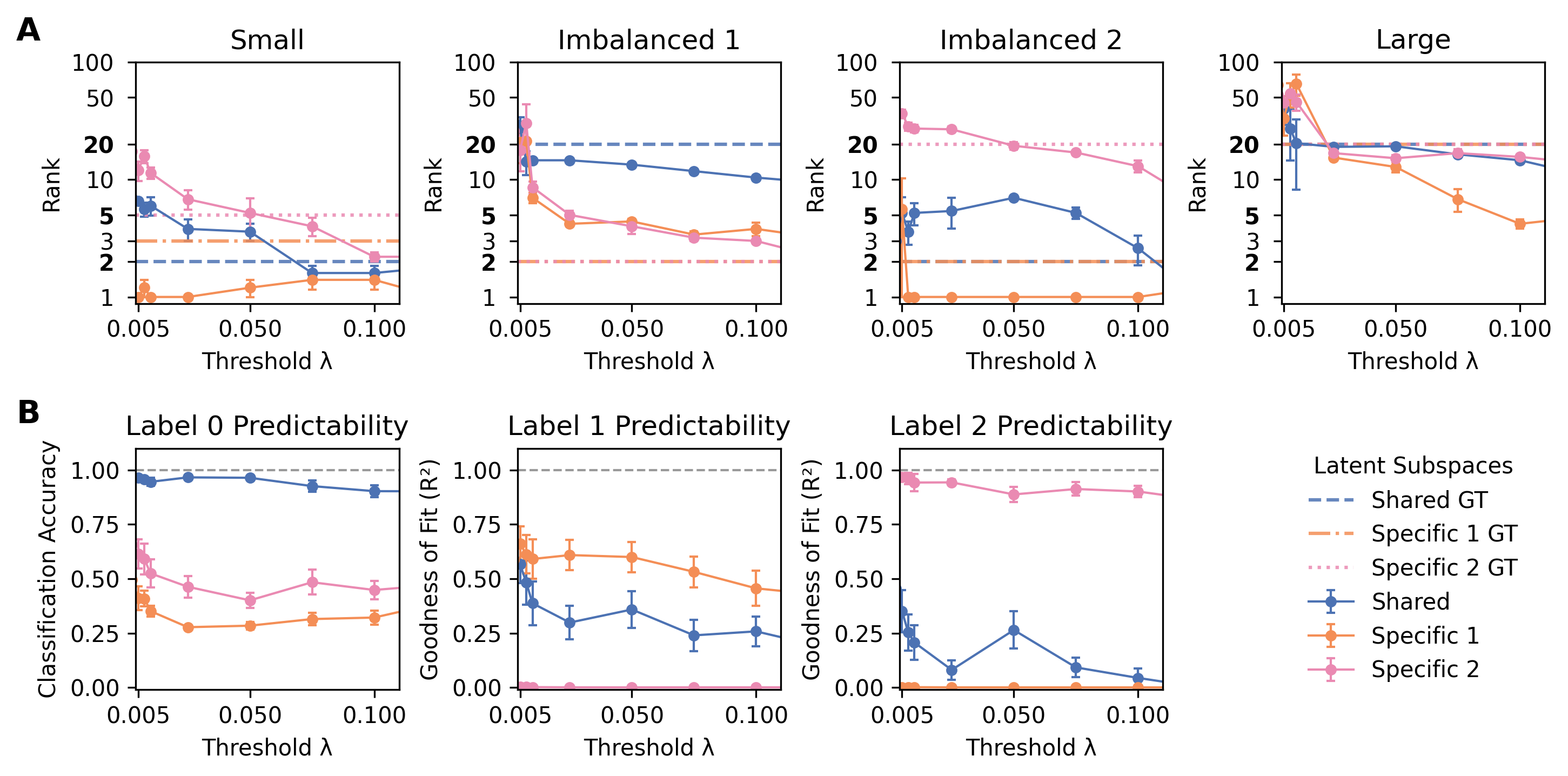}
    \caption{\textbf{Multi-modal ID estimation and disentanglement of $\rmB$ with varying true IDs.} The x axis presents the distortion threshold $\lambda$. \textbf{(A)} Log-scale estimated ranks (mean $\pm SEM$, $N=5$ seeds) for shared (blue) and modality-specific (orange, pink) subspaces. %The four datasets from left to right $\rmB_{s}$, $\rmB_{i1}$, $\rmB_{i2}$, $\rmB_{l}$ increase in true ID with varying ratios between subspaces. 
    Ground truth (GT) IDs are depicted as dashed lines. Values closer to the GT lines of the same color indicate better performance. \textbf{(B)} Average predictability of shared and private information over all random seeds and datasets ($N=20$). The class of the shared space (label 0) is evaluated as classification accuracy. Labels 1 and 2 represent the mean value of modality-specific generative hidden vectors. Their predictability is evaluated with the $R^2$. Metrics are defined in \ref{sec:metrics}.}
    \label{fig:mm_paramsim_supp_full}
\end{figure*}

\begin{figure}[h!]
    \centering
    \includegraphics[width=1\linewidth]{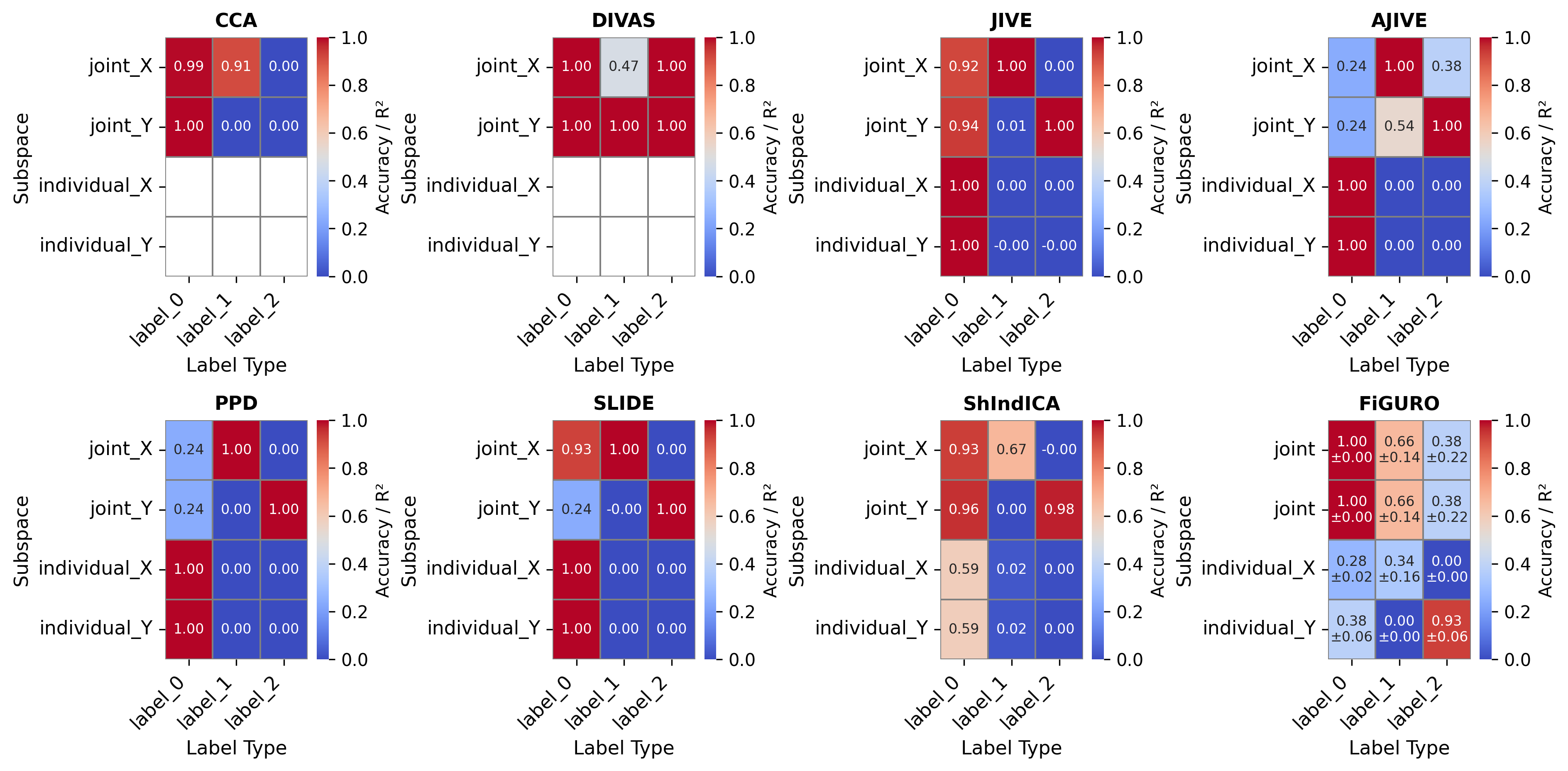}
    \caption{\textbf{Disentanglement evaluation of multi-modal baselines on dataset $\mathbf{B_s}$.} Each heatmap plots the predictability of the three ground truth labels (0: shared, 1: modality 1, 2: modality 2) from the decomposed joint and individual (private) subspaces per method. Predictability is evaluated as in Figure \ref{fig:mm_paramsim}B as classification accuracy and $R^2$. For our method FiGuRO, the joint predictabilities are duplicated as there is only one learned joint subspace, and values are reported as mean $\pm$ SEM from 5 random seeds. Empty cells for the first two methods in individual components indicate that these methods did not estimate individual subspaces.}
    \label{fig:mm_baselines_1}
\end{figure}

\begin{figure}[h!]
    \centering
    \includegraphics[width=1\linewidth]{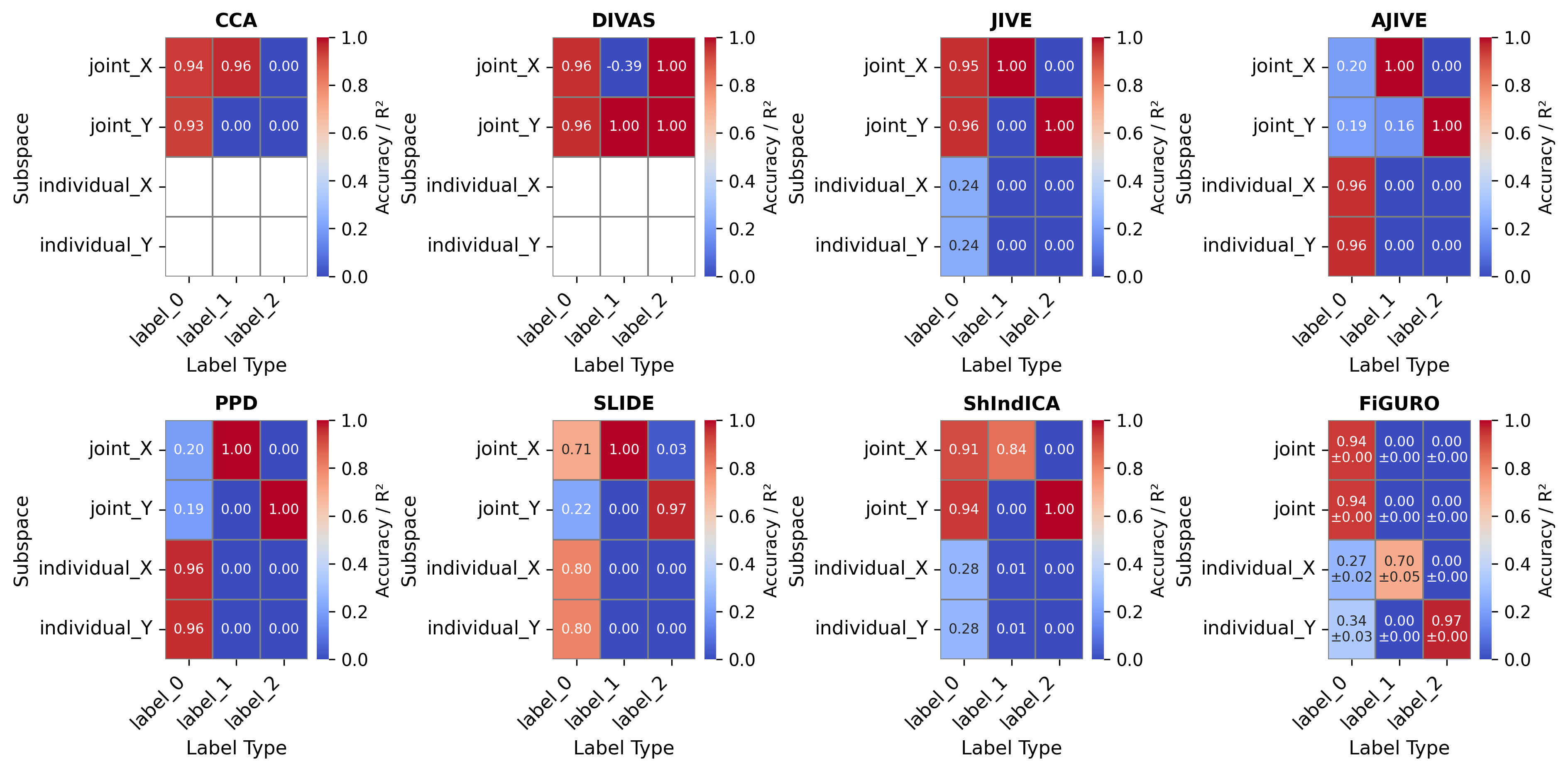}
    \caption{\textbf{Disentanglement evaluation of multi-modal baselines on dataset $\mathbf{B_{i1}}$.} For details see caption \ref{fig:mm_baselines_1}.}
    \label{fig:mm_baselines_2}
\end{figure}

\begin{figure}[h!]
    \centering
    \includegraphics[width=1\linewidth]{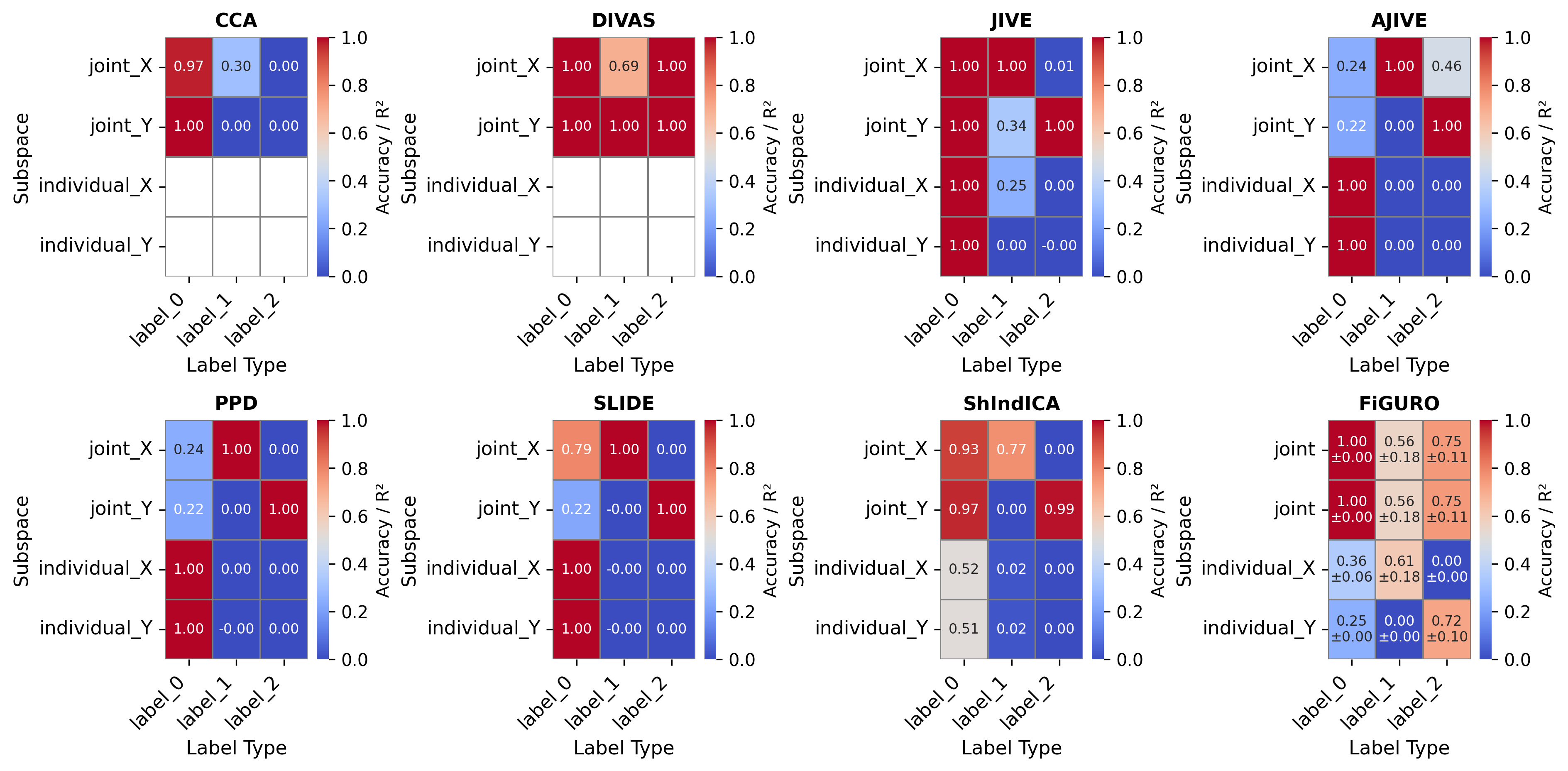}
    \caption{\textbf{Disentanglement evaluation of multi-modal baselines on dataset $\mathbf{B_{i2}}$.} For details see caption \ref{fig:mm_baselines_1}.}
    \label{fig:mm_baselines_3}
\end{figure}

\begin{figure}[h!]
    \centering
    \includegraphics[width=1\linewidth]{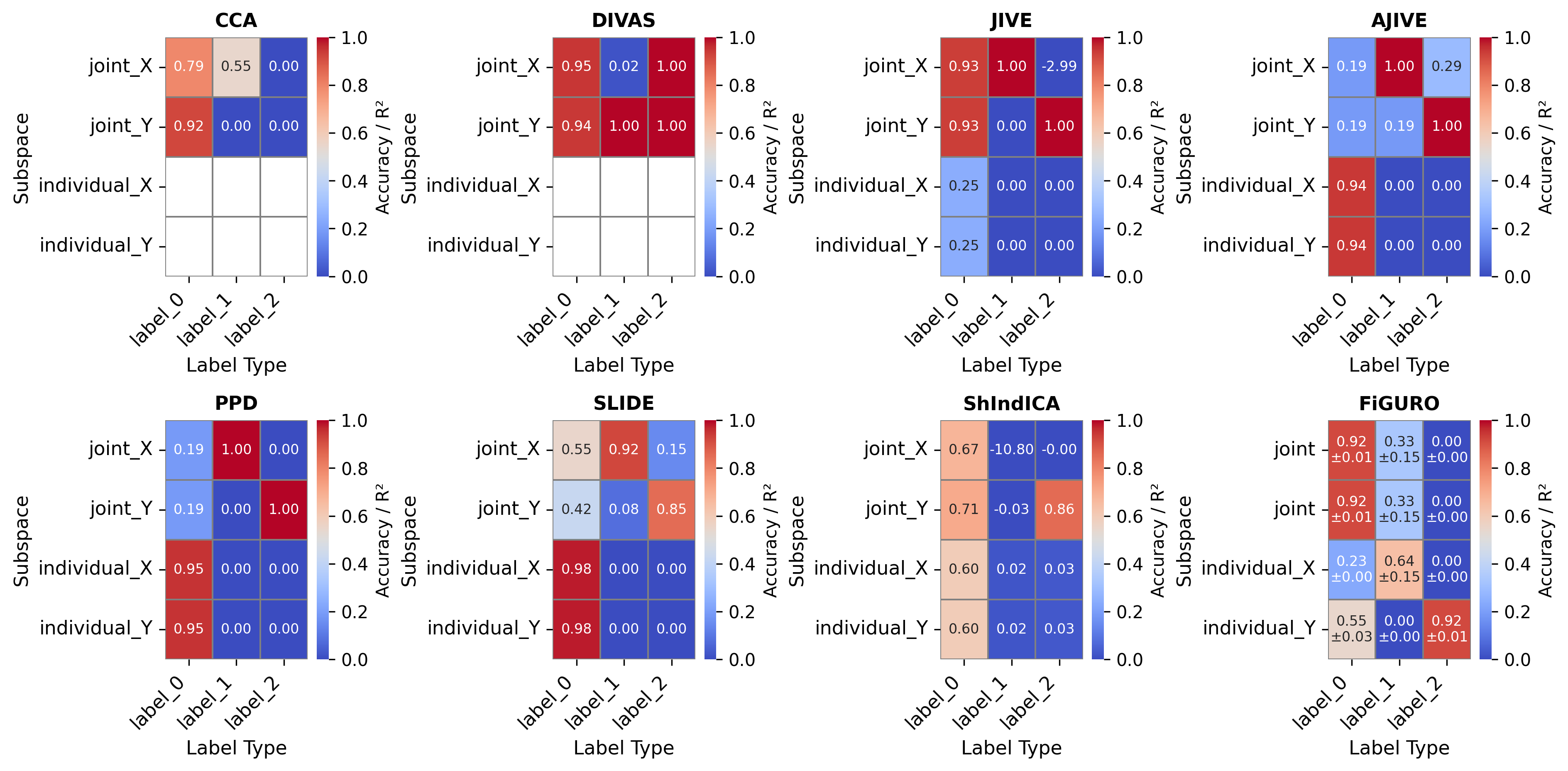}
    \caption{\textbf{Disentanglement evaluation of multi-modal baselines on dataset $\mathbf{B_l}$.} For details see caption \ref{fig:mm_baselines_1}.}
    \label{fig:mm_baselines_4}
\end{figure}

\begin{figure}[h!]
    \centering
    \includegraphics[width=1\linewidth]{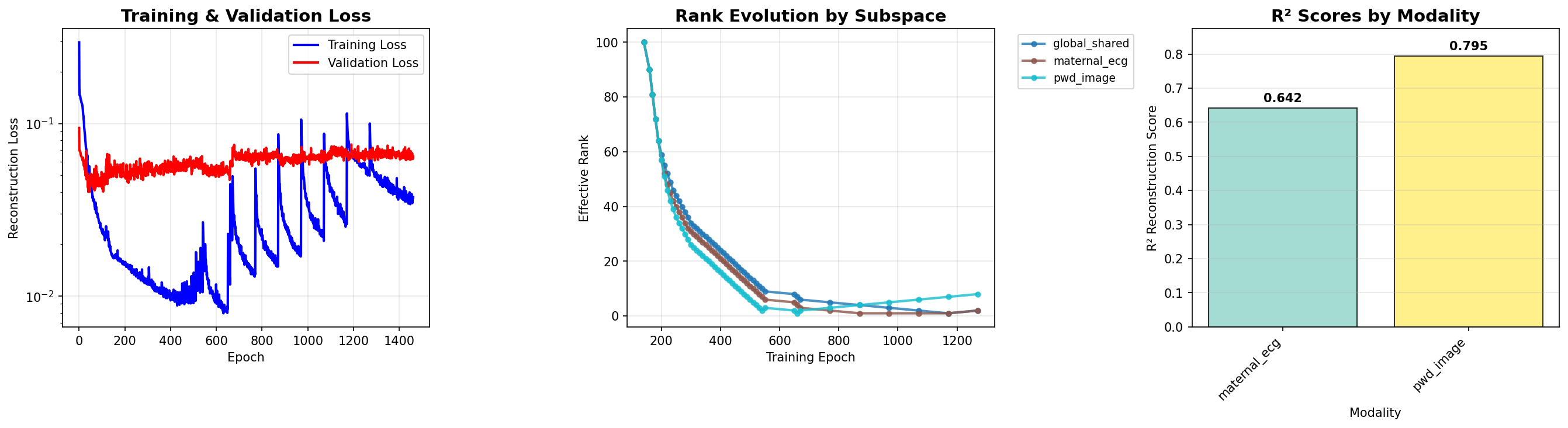}
    \caption{\textbf{Training dynamics on NInFEA mECG-fPWD.} The left plot shows the training and validation loss. The middle plot depicts the ranks of all three subspaces over epochs. The right plot shows the initial $R^2$ metrics per modality.}
    \label{fig:ninfea_train}
\end{figure}

\begin{figure}[h!]
    \centering
    \includegraphics[width=0.9\linewidth]{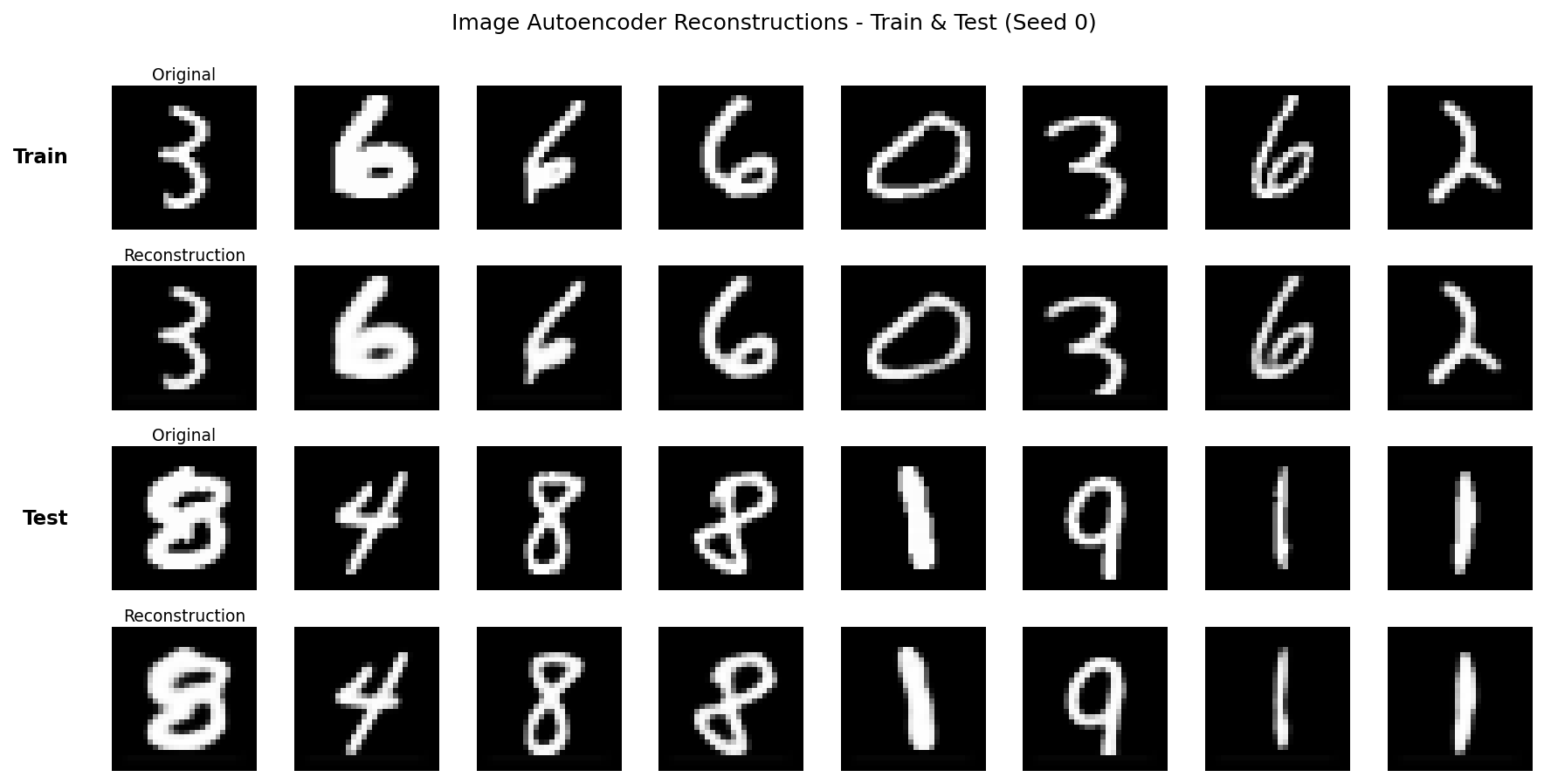}
    \caption{\textbf{Image reconstruction samples from Audio MNIST.} The top row presents original test samples, the bottom its reconstructions from our pretrained Audio MNIST model (seed 0).}
    \label{fig:audiomnist_recon_image}
\end{figure}

\begin{figure}[h!]
    \centering
    \includegraphics[width=1\linewidth]{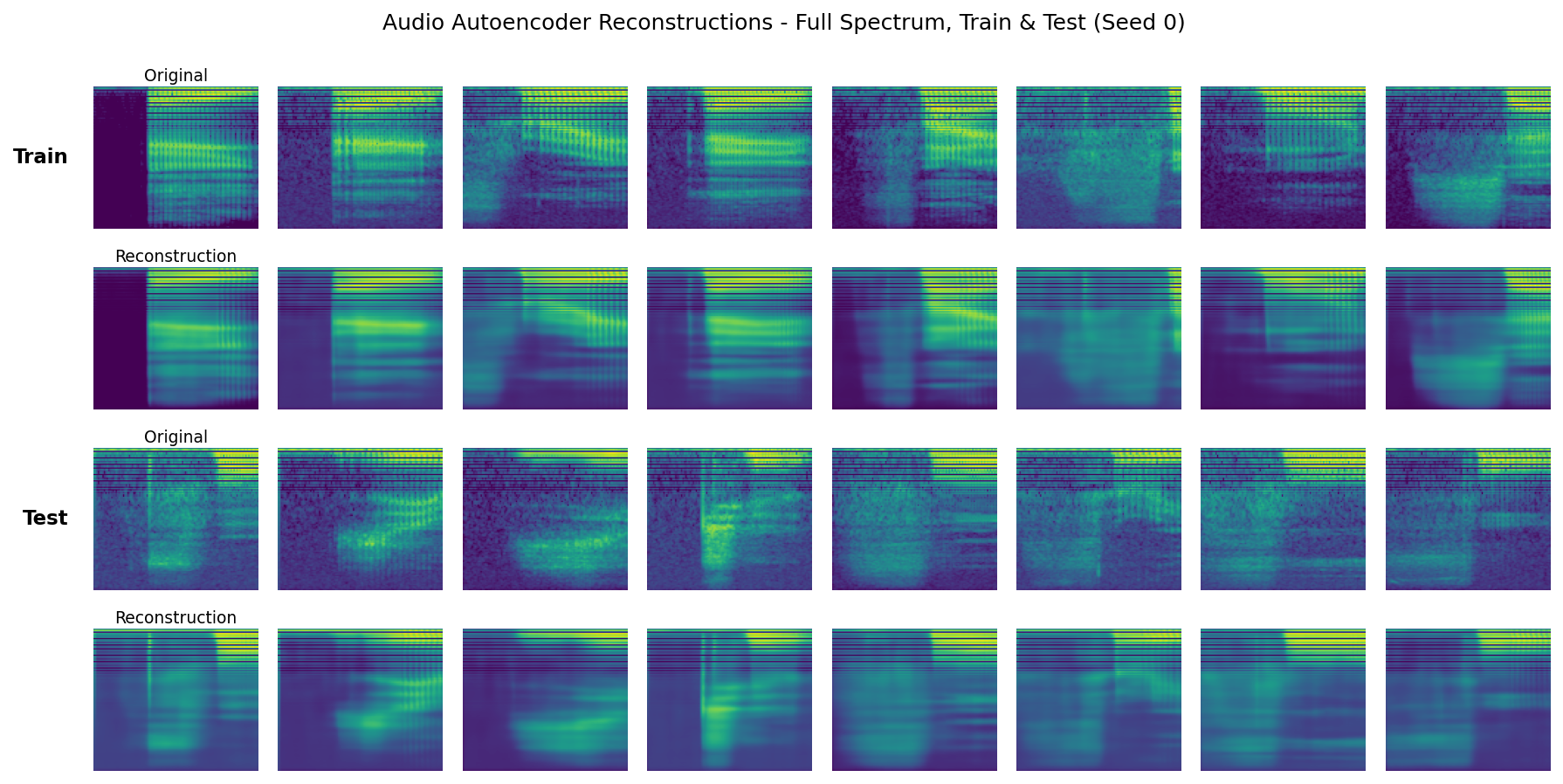}
    \caption{\textbf{Audio reconstruction samples from Audio MNIST.} The top row presents original test samples, the bottom its reconstructions from our pretrained Audio MNIST model (seed 0).}
    \label{fig:audiomnist_recon_audio}
\end{figure}

\begin{figure}[h!]
    \centering
    \includegraphics[width=0.9\linewidth]{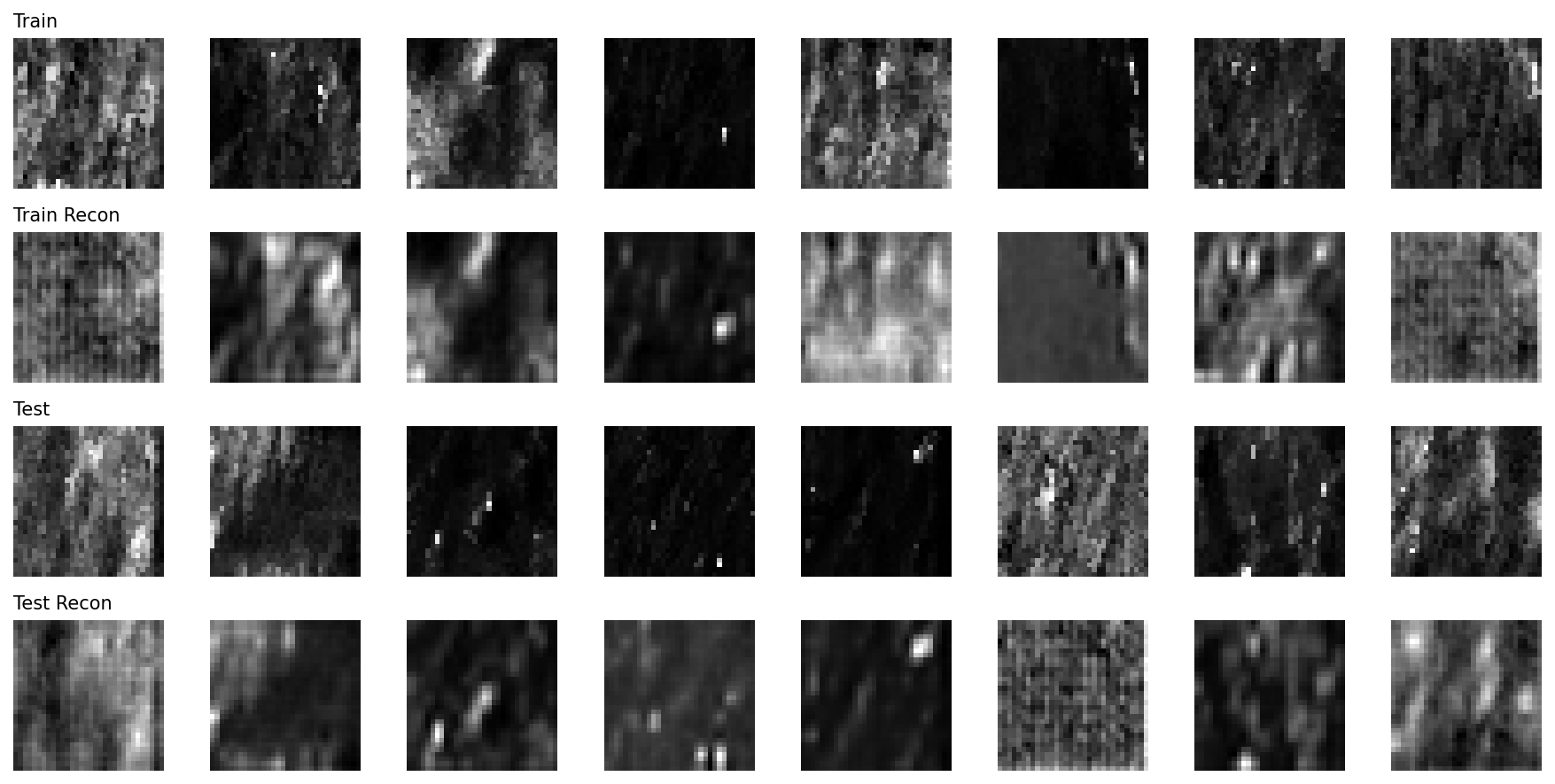}
    \caption{\textbf{Reconstruction of the SAR modality (channel 5).} Channel 5 contains the intensity of the refined Lee-filtered VH channel from Sentinel-1. Reconstructions are shown from the pretrained model (seed 0).}
    \label{fig:so2sat_recon_radar}
\end{figure}

\begin{figure}[h!]
    \centering
    \includegraphics[width=0.9\linewidth]{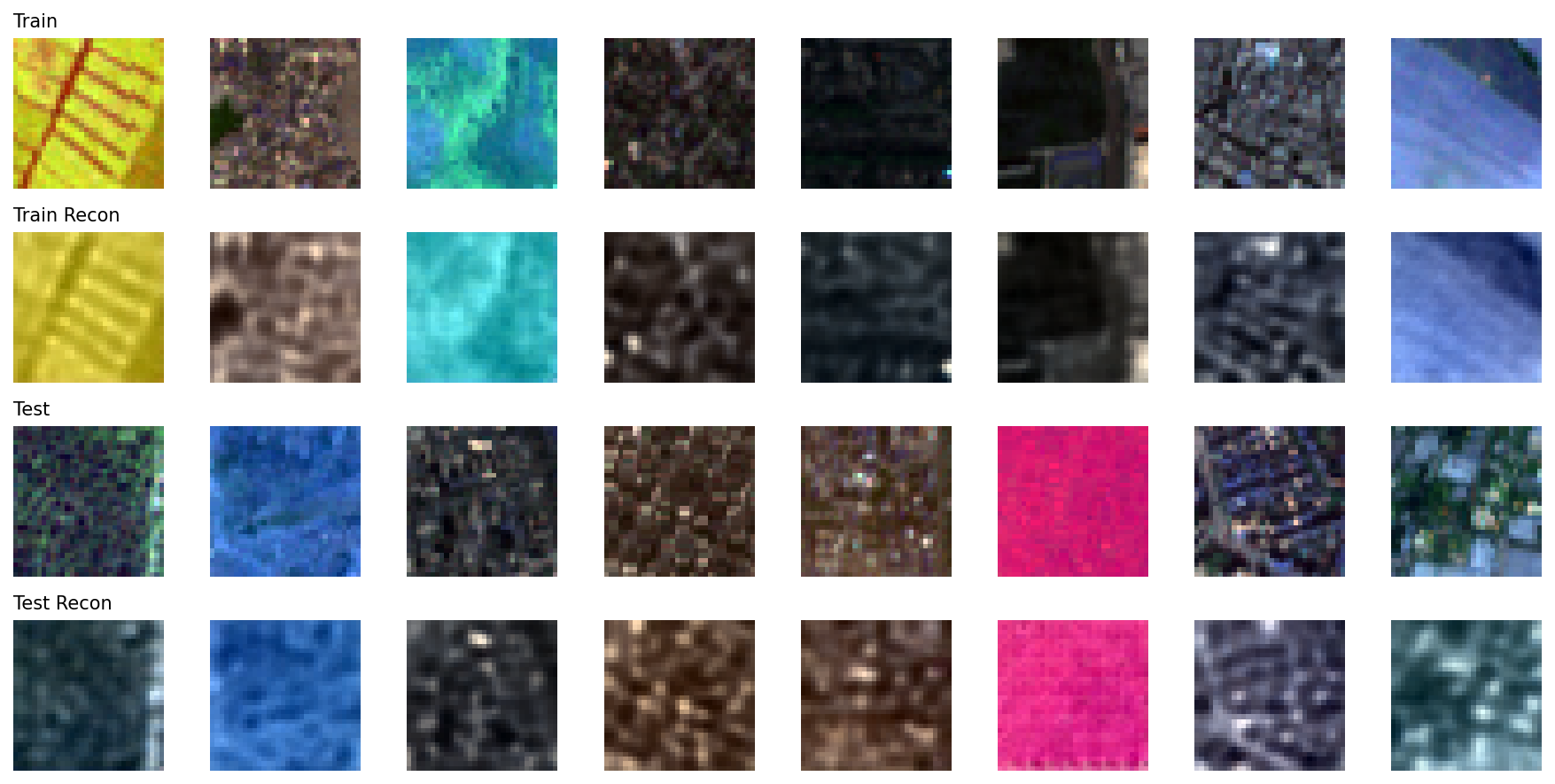}
    \caption{\textbf{Reconstruction of the RGB channels in the optical modality.} Reconstructions are shown from the pretrained model (seed 0).}
    \label{fig:so2sat_recon_image}
\end{figure}

\begin{figure}[h!]
    \centering
    \includegraphics[width=0.8\linewidth]{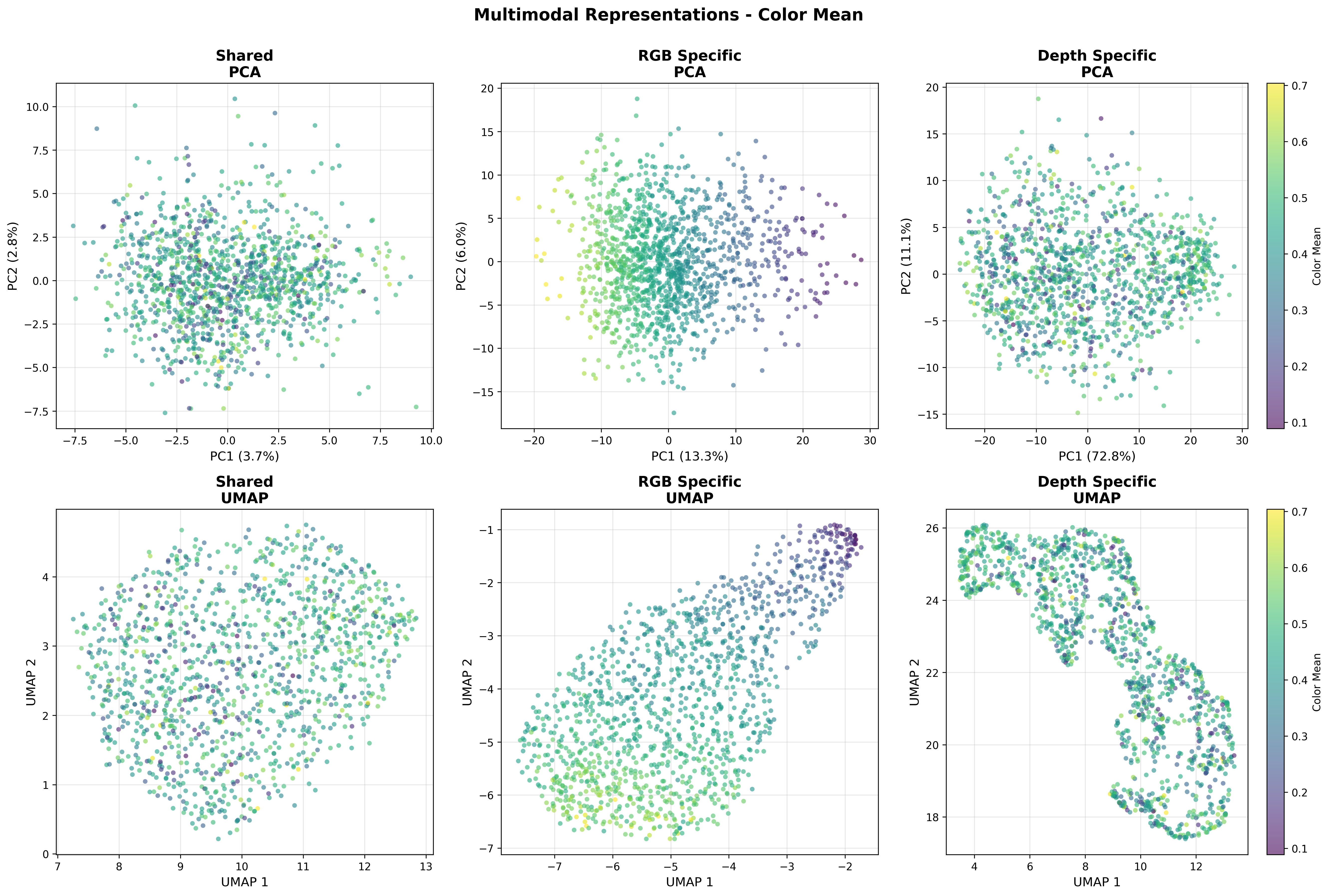}
    \caption{\textbf{Image color is decomposed into the image-specific representation.} Rows show PCA and UMAPs of each subspace (columns) from the train set. Color indicates the mean RGB values.}
    \label{fig:nyu_image}
\end{figure}

\begin{figure}[h!]
    \centering
    \includegraphics[width=0.8\linewidth]{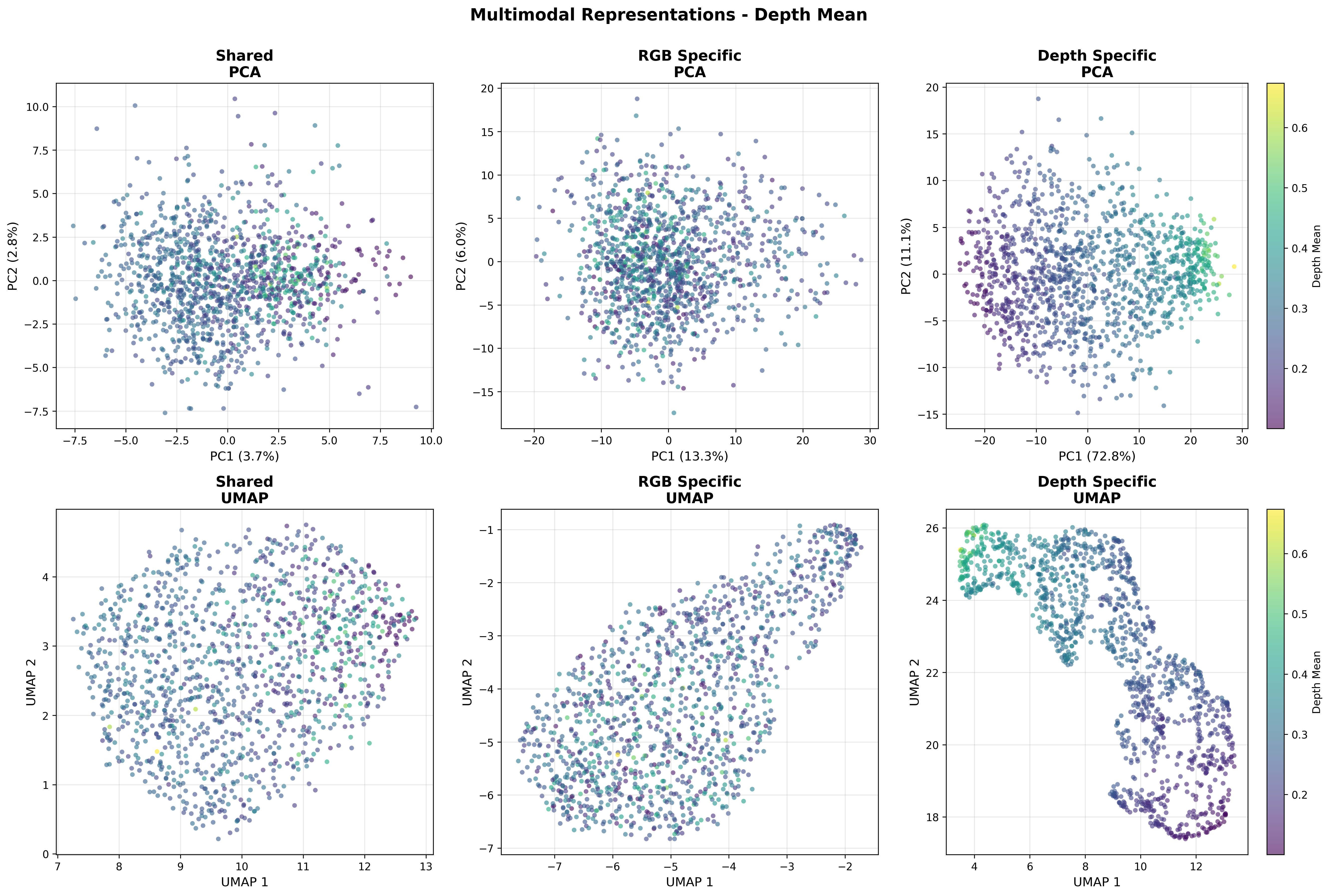}
    \caption{\textbf{Image depth is decomposed into the depth-specific representation.} Rows show PCA and UMAPs of each subspace (columns) from the train set. Color indicates the mean depth values.}
    \label{fig:nyu_depth}
\end{figure}

\end{document}